\renewcommand{\bibname}{References}
\renewcommand{\bibsection}{\subsubsection*{\bibname}}
\crefname{equation}{}{}
\crefname{figure}{Figure}{}
\begin{document}
\twocolumn[
  \runningtitle{Signal Recovery from Random Dot-Product Graphs Under Local Differential Privacy}
  \aistatstitle{
    Signal Recovery from Random Dot-Product Graphs \\Under Local Differential Privacy
  }
  \aistatsauthor{ 
    Siddharth Vishwanath {\href{mailto:svishwanath@ucsd.edu}{\faEnvelope[regular]}}
    \And 
    Jonathan Hehir {\href{mailto:research@jonhehir.com}{\faEnvelope[regular]}}
  }
  \runningauthor{Siddharth Vishwanath \& Jonathan Hehir}
  \aistatsaddress{ 
    University of California San Diego 
    \And  
    Penn State University 
  } 
]

\begingroup
\setlength{\abovedisplayskip}{0.5em}
\setlength{\belowdisplayskip}{0.5em}
\setlength{\abovedisplayshortskip}{0.25em}
\setlength{\belowdisplayshortskip}{0.25em}


\begin{abstract}
\vspace*{-0.5em}
    We consider the problem of recovering latent information from graphs under $\varepsilon$-edge local differential privacy where the presence of relationships/edges between two users/vertices remains confidential, even from the data curator. For the class of generalized random dot-product graphs, we show that a standard local differential privacy mechanism induces a specific geometric distortion in the latent positions. Leveraging this insight, we show that consistent recovery of the latent positions is achievable by appropriately adjusting the statistical inference procedure for the privatized graph. Furthermore, we prove that our procedure is nearly minimax-optimal under local edge differential privacy constraints. Lastly, we show that this framework allows for consistent recovery of geometric and topological information underlying the latent positions, as encoded in their persistence diagrams. Our results extend previous work from the private community detection literature to a substantially richer class of models and inferential tasks.
\end{abstract}


\vspace*{-0.5em}
\section{Introduction}\label{introduction}

\vspace*{-0.5em}
Graphs are essential for modeling many systems and are used in a variety of applications, e.g., relationships between people in a social network, associations between individuals based on email records, financial transactions, or website browsing activity. In many cases, the existence, lack thereof, or the nature of such relationships may be sensitive. Without a formal privacy framework, malicious actors can exploit this sensitive information \citep{backstrom2007wherefore,narayanan2008,narayanan2011link}. 

Differential privacy (DP) \citep{dwork2006calibrating} has emerged as the standard for ensuring formal privacy, allowing for population-level inference, while limiting the risk of exposing the contribution of any single individual in a database. This disclosure risk is controlled by the  \textit{privacy budget}, $\e > 0$, where smaller values of $\e$ provide stronger privacy guarantees. For graphs, DP comes in several forms, each with subtle differences between them \citep{li2023private}. These differences are, perhaps, best illuminated by answers to the following questions:
(i)~\textit{What sensitive information needs protection?},
(ii)~\textit{How stringent should the privacy guarantees be?}, and 
(iii)~\textit{Who, if anyone, can be trusted with the data?}

The first question leads to two categories: \textit{node DP} and \textit{edge DP}. In node DP, the identities of the vertices and all their incident relationships are considered sensitive; whereas, in edge DP, the identities of the vertices are publicly available, and the sensitive information is in the presence/absence of relationships between vertices. The second question distinguishes \textit{pure DP} from \textit{approximate DP}. Pure DP, or $\e$-DP, strictly controls disclosure risk using $\e$; whereas, approximate DP, or $(\e, \delta)$-DP, allows for a small failure probability $\delta$. The third question leads to \textit{central DP} vs. \textit{local DP}. In central DP, a \textit{trusted curator} holds the full database and ensures DP in the data release. In local DP \citep{warner1965randomized,kasiviswanathan2011can,duchi2013local}, each individual obfuscates their own data before sharing, removing the need for a trusted curator.

Deciding between node vs. edge DP is constrained by the nature of what is publicly available and what constitutes sensitive information, e.g., in social networks where individuals' identities are public. The choice between pure vs. approximate DP and central vs. local DP depends on considerations of privacy, utility, and practicality. We focus on \textit{$\e$-edge local differential privacy} (\eeldpe{}), where the vertex identities are public, but the strongest form of privacy for the relationships is required.

Commonly referred to as the \textit{privacy--utility tradeoff}, if $\e$ is too large there is risk of leaking sensitive information, while if $\e$ is too small the private information precludes meaningful inference. Addressing this requires (i) a judicious choice of $\e$, (ii) optimal algorithms for sanitizing sensitive information, and (iii) appropriate adjustments in the resulting statistical inference to account for privacy. It is well known that there is no free lunch in DP, and that ``\textit{any meaningful privacy--utility guarantees must come with reasonable assumptions on the data generating mechanism}'' \citep[Section~3]{kifer2011no}. In this work, we consider graphs generated from a flexible, nonparametric framework.

{\noindent \bfseries Generalized random dot-product graphs.}
Random dot-product graphs (RDPGs; \citep{athreya2017statistical}) and generalized random dot-product graphs (GRDPGs; \citep{rubin2022statistical}) provide a flexible framework for analyzing graphs. They cover a wide range of models, including the stochastic block model (SBM; \citep{holland1983stochastic}) and its extensions, such as the degree-corrected SBM (DC-SBM; \citep{karrer2011stochastic}) and mixed-membership SBM (MM-SBM; \citep{airoldi2008mixed}). GRDPGs belong to the family of \textit{latent position models} \citep{hoff2002latent}, as illustrated in Figure~\ref{fig:zoo}. For example, the GRDPG representation of a SBM consists of a mixture of Dirac masses associated with the block memberships. Similarly, MM-SBMs and DC-SBMs admit GRDPGs which lie, respectively, on a simplex and on projective space. Recent work has shown that the adjacency embeddings of GRDPGs recovers meaningful information from its latent positions~\citep{lyzinski2014perfect,rubin2022statistical,solanki2019persistent,rubin2020manifold}.

\begin{figure}[t]
    \centering
    \vspace*{-1em}
    \includegraphics[width=0.55\columnwidth]{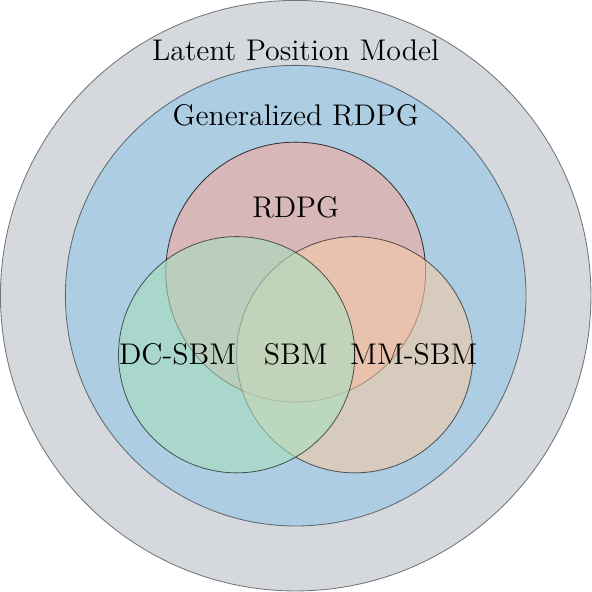}
    \vspace*{-0.5em}
    \captionof{figure}{\label{fig:zoo}Hierarchy of network models.}
    \vspace*{-1em}
\end{figure}

{\noindent \bfseries Contributions.}
Given a graph $\Av$, we consider a simple $\e$-edge DP mechanism called \eflip{} that outputs a differentially private synthetic graph, $\M_\e(\Av)$. \eflip{} has independently appeared in prior work under various aliases \citep[e.g.,][]{mulle2015privacy,karwa2017sharing,qin2017generating,imola2020locally,hehir2021consistent}. We study the impact of \eflip{} on the quality of inference from GRDPGs. Our main contributions are as follows:
\begin{itemize}[labelindent=!, itemindent=*, leftmargin=0em]
    \item We show that the class of GRDPGs are closed under \eflip{}, i.e., if $\Av$ is a GRDPG then $\M_\e(\Av)$ is also a GRDPG (Theorem~\ref{prop:closure}).
    \item Using this insight, if $\Av$ is a GRDPG with latent positions $\X \in \Rnd$, we derive minimax lower bounds for uniformly estimating the latent positions under \eeldpe{} in the $\ell_{2,\infty}$-norm. Our result highlights the interplay between the privacy budget $\e$, the sample size $n$, and the graph sparsity  $\rho_n$ (Theorem~\ref{thm:minimax}).
    \item For sufficiently dense graphs, by adjusting the estimation procedure to account for \eflip{} we show that a privacy-adjusted spectral embedding (\cref{alg:pase}) provides consistent estimates of the latent positions, with near minimax-optimal accuracy up to an $O(\sqrt{\log{n}})$ factor (Theorem~\ref{thm:convergence-rate}). This same gap exists in the best-known bounds for the non-private setting \citep{rubin2022statistical,yan2023minimax}.
    \item Finally, we show how existing results in community detection under $\e$-edge DP can be extended to the broader problem of recovering topological information underlying the latent positions. To this end, we use tools from topological data analysis (TDA) and establish convergence rates for persistence diagram estimation under \eedpy{}.
\end{itemize}

{\noindent \bfseries Related Work.} 
There is an extensive body of work on differential privacy (DP) in graphs, but theoretical work relating to the privacy-utility tradeoff tends to focus on releasing graph statistics, such as degree sequences, triangle counts, and subgraph counts under edge DP \citep[e.g.,][]{hay2009accurate,karwa2011private,karwa2016inference,imola2020locally,jiang2021applications}.

The closest related work to our setting comes from the emerging literature on private community detection \citep{hehir2021consistent,seif2022differentially,chen2023private,he2024a}. In \citep{hehir2021consistent}, community detection for \eflip{} under \eeldpe{} is studied for SBMs and DC-SBMs without sparsity patterns, with utility measured via misclassification rates in approximate clustering. In a similar setting, \cite{chakraborty2024prime} consider estimating the membership probabilities for DC-MM-SBMs under \eeldpe{}. In the central \eecdpe{} setting, \citep{seif2022differentially} consider community detection for SBMs (where the average degree grows as $\log{n}$), focusing on \textit{exact recovery} using semi-definite programming. In \cite{chen2023private}, the analysis is extended to SBMs whose expected degree is flexible, providing bounds for, both, \textit{exact} and \textit{weak recovery}. This line of analysis has also been extended to more general models which relax the SBM assumptions \citep{nguyendifferentially,he2024a}.

Although these works apply similar spectral methods and privacy mechanisms, there are several key distinctions. First, their settings are more focused and restrictive. SBMs and DC-SBMs are specific cases of the more general GRDPGs that we study. Second, they assess utility based on cluster recovery accuracy, whereas we evaluate utility by the $\ell_{2,\infty}$ error in recovering latent positions. The broader framework allows for more complex structures beyond simple assortative clusters, such as hierarchical or topological patterns. Finally, all of our results are in the local DP setting, where there is no centralized trusted curator releasing queries. The price to be paid for this generality is that our upper bounds apply only to a denser regime where the average degree grows at rate $\omega(\log^4{n})$, as opposed to $\omega(\log{n})$ which the corresponds to the information-theoretic threshold. The former is the regime in which the best known recovery bounds for GRDPGs operate.


\vspace*{-0.75em}
\section{Background}
\label{sec:background}

\textbf{Notation.} For $\uv \in \R^d$, $\norm{\uv}$ is the Euclidean norm, and for $\Bv \in \R^{n\times m}$, $\opnorm{\Bv}$ and $\ttinf{\Bv}$ denote the $\ell_2$-- and $\ell_{2,\infty}$--operator norm. ${\X = [X_1 | \cdots | X_n]\tr \in \Rnd}$ denotes the $n\times d$ matrix whose \textit{row vectors} are observations $\Xn = \qty{X_1, \cdots, X_n} \subset \R^d$. For fixed \textit{known} integers $d,p,q$ with $p+q=d$, $\I[p,q]\! =\! \textup{diag}(\onev[p],\!\!-\onev[q])$ is the indefinite identity matrix, and
$$
\o(p,q) = \qty\big{\Q \in \Rdd: \Q\tr\I[p,q]\Q = \Q\I[p,q]\Q\tr = \I[p,q]}
$$ 
is the indefinite orthogonal group with signature $(p,q)$. When $q=0$, $\o(d)$ is the usual orthogonal group. $\bbU(n, d)\!\! =\!\!\{\Uv \in \Rnd: \Uv\tr\Uv = \I[d]\}$ are $n\times d$ matrices with orthonormal columns, and $\bbB(n)\! \subset\! \qty{0,1}^{n\times n}$ is the set of binary symmetric $n\times n$ matrices. 

For a probability distribution $\pr$ on $\Rd$ and measurable $f\push\pr$ is the pushforward measure satisfying $f\push\pr(A) = \pr(f\inv(A))$ for $A \subseteq \R^k$. We use standard asymptotic notation; ${a_n = O(b_n)}$ or $b_n = \Omega(a_n)$ if ${\limsup_n |a_n/b_n| \le C}$; $a_n = o(b_n)$ or $b_n = \omega(a_n)$ if ${\limsup_n \abs{a_n/b_n} = 0}$. 

\subsection{Differential Privacy for Graphs}

In the \eecdp{} framework, given a graph $\Gv = (V, E)$ with adjacency matrix $\Av \in \bbB(n)$, the sensitive records correspond the edges of the graph, $E$. The privacy mechanism, $\A_\e$, takes $\Av$ as input and produces a random output $\A_\eps(\Av)$ taking values in $\calZ$ such that
no single edge significantly affects the output.

\begin{definition}[\textbf{\eecdpe{}}]\label{def:edgedp}
    $\A_\e:\!\bbB(n)\!\to\!\cal{Z}$ satisfies $\e$-edge \dpy{} for $\eps > 0$ if, for all $\Av', \Av''\in \bbB(n)$ differing on a single edge, i.e., $\text{Ham}(\Av', \Av'') = 2$, and for all $S \subseteq \cal{Z}$,
    \begin{align}
        \label{eq:dp}
        \pr\qty( \A_\e(\Av') \in S ) \le e^\e \cdot \pr\qty( \A_\e(\Av'') \in S ).
    \end{align}
\end{definition}
Equivalently, from \cite{duchi2018minimax}, $\A_\eps$ defines a conditional probability distribution $\calQ_\eps$ such that for $\Zv = \A_\eps(\Av)$,
\begin{align}
    \calQ_\eps(\Zv \in S \,|\, \Av = \Av') \le e^\eps \cdot \calQ_\eps(\Zv \in S \,|\, \Av= \Av'').
\end{align}
In the local DP setting, the privacy mechanism is applied to \textit{each edge independently}; see, e.g., \citep{hehir2021consistent,li2022network}

\begin{definition}[\textbf{\eeldpe{}}]\label{def:noninteractive-dp}
    The privacy mechanism $\A_\eps = \qty{\A^\eps_{ij}: 1 \le i < j \le n}$ is $\eps$-edge locally DP if, for each $i < j \in [n]$ and $\Zv_{ij} = \A_\eps(\Av_{ij}) \in \calZ$, the resulting conditional distributions $\qty{\calQ^\eps_{ij}: 1 \le i < j \le n}$ are such that: for all $S \subset \calZ$ and $x, x' \in \qty{0, 1}$,
    \smallskip
    \begin{align}
        \calQ_{ij}(\Zv_{ij} \in S \,|\, \Av_{ij} = x) \le e^\eps \cdot \calQ_{ij}(\Zv_{ij} \in S \,|\, \Av_{ij} = x').
    \end{align}
\end{definition}

The privacy mechanism we focus on in this work is the symmetric \eflip{} \cite[see, e.g.,][]{mulle2015privacy,karwa2017sharing,qin2017generating,imola2020locally,hehir2021consistent,seif2022differentially,eden2023triangle}, $\M_\e : \bbB(n) \to \bbB(n)$, which produces an \eedpe{} \textit{synthetic graph}. This is closely related to Warner's randomized response \citep{warner1965randomized}. The following definition of \eflip{} is due to \cite{imola2020locally} and adapted from \cite[Definition~3.5]{hehir2021consistent}.

\begin{definition}[\eflip{}]\label{def:eflip}
    Given $\e > 0$, let\hfill\ \linebreak $\pi(\e)\defeq {1}/({e^\e+1})$ and $\F : \{0, 1\} \to \{0, 1\}$ be given by
    \begin{align}
    \F(x) = \begin{cases}
        x & \text{w.p. } 1-\pi(\e)\\
        1-x & \text{w.p. } \pi(\e)
    \end{cases}
    \end{align}
    For each $i \in [n]$, let $\Av_{i, *}$ be the edges connected to vertex $v_i$, and define the randomized response mechanism
    \begin{align}
        \M_{i, \eps}(\Av_{i, *}) = \left[ \zerov[i] \,\vert\, \F(\Av_{i, i+1}) \,\vert\, \F(\Av_{i, i+2}) \,\vert\, \cdots \,\vert\, \F(\Av_{i, n}) \right],
    \end{align}
    and the full randomized response for all vertices,
    $$
    T_\eps(\Av) = \left[ \M_{1, \eps}(\Av_{1, *}) \,\vert\, \cdots \,\vert\, \M_{n, \eps}(\Av_{n, *}) \right]\tr.
    $$
    Then \eflip{}, ${\M_\e: \bbB(n) \to \bbB(n)}$, is given by 
    $$
    \M_\e(\Av) = T_\e(\Av) + T_\e(\Av)\tr.
    $$
\end{definition}
By the closure property of DP mechanisms under post-processing, since \eflip{} satisfies $\e$-edge DP any analysis performed on $\M_\e(\Av)$ after \eflip{} will also satisfy the same privacy guarantee \citep{dwork2006calibrating}. Given a privacy budget $\e > 0$, and the edge flip probability in Definition~\ref{def:eflip}, we define 
\begin{align}
    \se &\defeq \sqrt{1-2\pi(\e)} = \sqrt{{e^{\e} - 1}\big /{e^{\e} + 1}},\\
    \te &\defeq \sqrt{\pi(\e)}  = \sqrt{{1}\big /{e^{\e} + 1}}.\label{eq:se-te}
\end{align}

\vspace*{-0.5em}
\subsection{Spectral Embedding and GRDPGs}
\label{sec:rdpg}

Statistical inference on graphs often begins by embedding the vertices in Euclidean space. Of numerous ways to achieve this \cite[e.g.,][]{belkin2003laplacian,perozzi2014deepwalk,grover2016node2vec}, we consider the adjacency spectral embedding of $\Av$ \cite{athreya2017statistical,rubin2022statistical}.

\begin{definition}
    \label{def:ase}
    For $\Av \in \bbB(n)$ and fixed $d<n$, let
    \begin{align}
        \Av = \Uv\ \! \L\pa{\Av} \ \! \Uv\tr + \Vv \ \! \boldsymbol{\Omega}(\Av) \ \! \Vv\tr,\label{eq:ase}
    \end{align}
    be the spectral decomposition of $\Av$, where $\abs{\L} \equiv \abs{\L(\Av)}$ are the top-$d$ eigenvalues by magnitude and $\Uv \in \bbU(n, d)$ are the corresponding eigenvectors. Then the adjacency spectral embedding of $\Av$ is given by
    \begin{align}
        \Xhat=\spec(\Av; d) = \Uv\abs{\L(\Av)}^{1/2} \in \R^{n \times d}.\label{eq:xhat-spec}
    \end{align}
\end{definition}

The spectral embedding is unique only up to orthogonal transformations; indeed, for any $\Qv \in \o(d)$, $\Uv' = \Uv\Qv \in \bbU(n,d)$ and leaves \cref{eq:ase} invariant. Moreover, if $\L$ contains $p$ positive and $q$ negative eigenvalues, then $\L = \abs{\L}^{1/2}\I[p,q]\abs{\L}^{1/2}$. GRDPGs can be seen as probabilistic graphical models which explicitly account for indefinite $\L$ in the spectral decomposition. 

To this end, a probability distribution $\pr$ on $\R^d$ is said to be $(p,q)$-admissible if $\supp(\pr)$ is compact and $0 \le \xv\tr\I[p,q]\yv \le 1$, for all $\xv, \yv \in \supp(\pr)$. In other words, samples from $\pr$ produce valid probabilities w.r.t. the indefinite inner product.

\begin{definition}[\textbf{GRDPG}]\label{def:grdpg}
    Let $\pr$ be $(p,q)$-admissible and let $\rho_n\le 1$. The random graph $\Av \in \bbB(n)$ is a generalized random dot-product graph with signature $(p,q)$, sparsity $\rho_n$, and latent positions $\X \in \Rnd$ if
    \begin{enumerate}[labelindent=!, itemindent=*, leftmargin=0em, label=\textup{(\roman*)}]
        \item The rows of $\X$ are observed \iid{} from $\pr$, and
        \item $\Av_{ij}\,|\,\X \sim_{\iid{}} \textup{Ber}(\rho_n\!\cdot\!{X_i\!\tr\I[p,q]X_j})\;\;\forall1\le i < j\!\le\!n$.
    \end{enumerate}
    More succinctly, $(\Av, \X) \sim \grdpg(\pr, \rho_n; p, q)$. Conditional on $\X \in \Rnd$, we write $\Av \sim \grdpg(\X, \rho_n; p, q)$.
\end{definition}

\noindent Some examples of GRDPGs include the following.
\begin{enumerate}[label=\textup{(\textbf{Ex}${}_\arabic*$)}, labelindent=0.5em, itemindent=*, leftmargin=0em,ref=\textup{(\textbf{Ex}${}_\arabic*$)}]
    \vspace*{-0.75em}
    \item\label{subremark:er} When $\pr = \delta_{\xv}$ is a Dirac mass at point $\xv \in \R^d$ with ${\pi \defeq \norm{\xv} \in (0, 1)}$, then $\grdpg(\pr, \rho_n;d,0)$ is an \er{} graph with probability $\pi\rho_n$. When $\norm{\xv} = 1$ and $\rho_n = \log{n}/n$, the resulting graph corresponds to the sharp connectivity threshold for \er{} graphs.
    \item\label{subremark:sbm} Let $\pr = \half(\delta_{\xv_1} \!+\! \delta_{\xv_2})$ where, for $\gamma>0$,
    \begin{align}
        \norm{\xv_1}=\norm{\xv_2}\! =\!\sqrt{1+\gamma}, \;\text{and}\\ \theta = \angle(\xv_1 \xv_2) = \arccos( \tfrac{1-\gamma}{1+\gamma} ).
    \end{align}
    Then $(\Av, \Xv) \sim \grdpg(\pr, \rho_n; 2, 0)$ is a stochastic block model $\sbm(n; \gamma, \rho_n)$. The intra- and inter-community edge probabilities are $\rho_n(1+\gamma)$ and $\rho_n(1-\gamma)$. On the other hand, when $(p,q)=(1,1)$, the SBM is disassortative.

    \item\label{subremark:dc-sbm} If, instead, $\xv_1 = \beta_1\xv$ and $\xv_2 = \beta_2 \xv$ for a fixed $\xv \in \R^d$ and $\beta_1, \beta_2 > 0$, then $\grdpg(\pr, \X, \rho_n; 2, 0)$ is a DC-SBM. On the other hand, if $\pr = \alpha\delta_{\xv_1} + (1-\alpha)\delta_{\xv_2}$ for $\alpha \in (0, 1)$, then the SBM has unbalanced communities.
\end{enumerate}

\begin{remark}\label{remark:grdpg-examples}
    GRDPGs are invariant to $\o(p,q)$ transformations and equivariant w.r.t. scaling transformations. To see this, let $\Qv \in \o(p,q)$ and $s > 0$ be a scale parameter. For $f(\xv) = \Qv(s \cdot \xv)$, it follows that 
    \begin{align}
        \vspace*{0.5em}
        \rho_nf(\X)\I[p,q]f(\X)\tr = s^2\rho_n\X \I[p, q] \X\tr,
        \label{eq:grdpg-equiv}
        \vspace*{0.5em}
    \end{align}
    and, $\grdpg(\pr, \rho_n) \distas \grdpg(f\push\pr, \rho_n / s^2)$.

    In other words, the latent positions of $\grdpg(\pr, \rho_n; p, q)$ are identifiable only up to an $\o(p, q)$ transformation, and $\rho_n$ is identifiable only up to scaling by a constant factor. In order to address the first invariance, we consider metrics on $\Rnd$ which are $\o(p, q)$ invariant (see \cref{def:dttinf}). For the second source of non-identifiability, we fix the scale of $\pr$ to satisfy
    \begin{align}
        \vspace*{0.5em}
        \E(\xi_1\tr\I[p,q]\xi_2)=1 \qq{for} \xi_1, \xi_2 \sim \pr.\label{eq:scale-identifiability}
        \vspace*{0.5em}
    \end{align}
    See, e.g., \cite[Corollary~2]{agterberg2020nonparametric} and \cite[line after Eq.~(2)]{lunde2023subsampling}, where the same condition is used to fix the scale of $\pr$.
\end{remark}
\pagebreak


\section{Main Results}
\label{sec:results}
This section presents the main results, and the proofs for the main results are collected in \cref{sec:proofs}.


\subsection{Closure of GRDPGs under \eflip{}}
\label{sec:edgeflip}

The key insight in the remaining sections comes from the following result, which states that the class of GRDPGs is closed under \eflip{}.

\begin{figure}[t]
    \centering
    \vspace*{-1.1em}
    \includegraphics[width=0.85\linewidth]{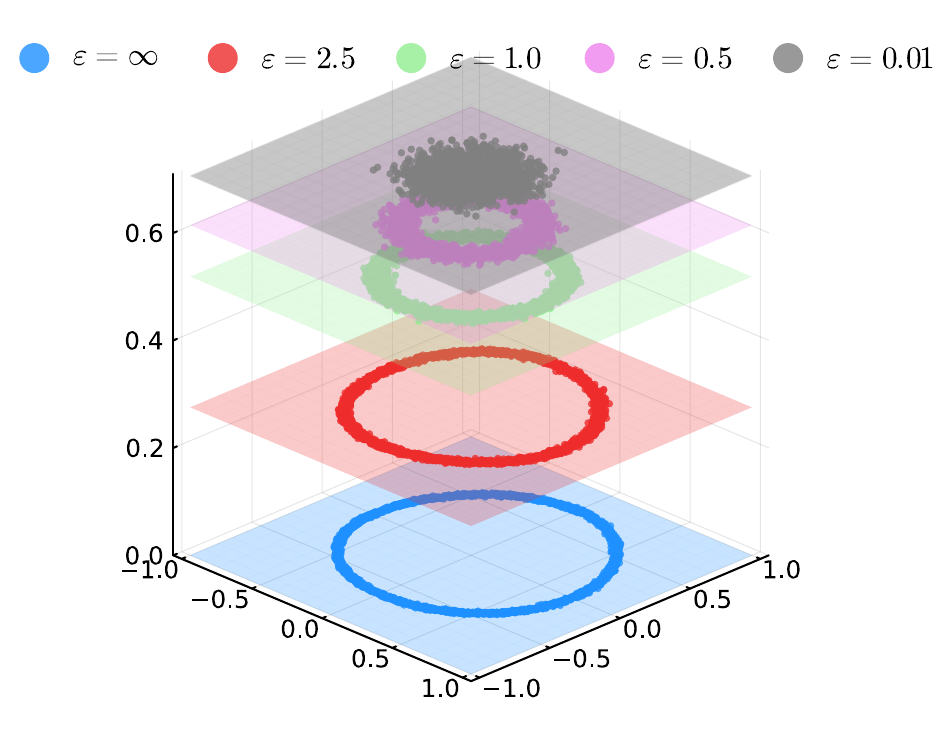}
    \vspace*{-1.2em}
    \caption{Illustration of \cref{prop:closure}. Spectral embedding of $\M_\e(\Av)$ after \eflip{} when $\X\!\sim\!\pr\!=\!\!\textup{Unif}(\bbS^1)$ and $\rho_n\!\equiv\!1$.}
    \label{fig:eflip-circle}
    \vspace*{-0.5em}
\end{figure}

\begin{theorem}
Suppose $(\Av, \Xv) \sim \grdpg(\pr, \rho_n; p, q)$, and for $\e\ge 0$, let $\M_\e(\Av)$ denote the \eedpe{} graph under \eflip{}. Let $\varphi_\eps: \Rd \to \R^{d+1}$ be the map given by $\varphi_\eps(\xv) = \te \oplus \se\rho_n^{1/2}\xv$, i.e.,
\begin{align}
    \varphi_\eps(\xv) 
    &= \begin{pmatrix} \te \\[0.5em] \se\rho_n^{1/2}\xv \end{pmatrix} \in \R^{d+1}.\label{eq:eflip-embedding}
\end{align}
Then, $(\M_\e(\Av), \varphi_\eps(\X)) \sim \grdpg\pa{\varphi_\eps{}\push\pr, 1; p+1, q}$.
\label{prop:closure}
\end{theorem}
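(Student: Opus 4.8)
The plan is to verify the two defining conditions of a GRDPG (\cref{def:grdpg}) directly for the pair $(\M_\e(\Av),\varphi_\eps(\X))$, working conditionally on the latent positions $\X$ and then averaging over the edge flips. The argument rests on one piece of elementary algebra: by \cref{eq:se-te}, $\te^2=\pi(\e)$ and $\se^2=1-2\pi(\e)$, so $\te^2+\se^2=1-\pi(\e)\in[0,1]$; and since $\varphi_\eps(\xv)=\te\oplus\se\rho_n^{1/2}\xv$ merely prepends a constant coordinate sitting in the positive block of $\I[p+1,q]$, the indefinite inner product splits additively — for all $\xv,\yv\in\Rd$,
$$
\varphi_\eps(\xv)\tr\I[p+1,q]\varphi_\eps(\yv)\;=\;\te^2+\se^2\,\rho_n\,\xv\tr\I[p,q]\yv .
$$
Everything else is bookkeeping around this identity.

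Next I would compute the conditional edge law of $\M_\e(\Av)$. Reading off \cref{def:eflip}, for each unordered pair $i<j$ one has $\M_\e(\Av)_{ij}=\F_{ij}(\Av_{ij})$ for independent copies $\F_{ij}$ of $\F$ that are also independent of $\Av$ and $\X$; the symmetrization $\M_\e(\Av)=T_\e(\Av)+T_\e(\Av)\tr$ merely copies the single flip of pair $\{i,j\}$ into both off-diagonal slots. Hence, conditionally on $\X$, the variables $\{\M_\e(\Av)_{ij}\}_{i<j}$ are mutually independent, because $\{\Av_{ij}\}_{i<j}$ are conditionally independent and the flips are independent. Writing $p_{ij}\defeq\rho_n X_i\tr\I[p,q]X_j$ and using the tower property with $\Av_{ij}\mid\X\sim\textup{Ber}(p_{ij})$ and $\Pr(\F(x)=1\mid x)=x(1-\pi(\e))+(1-x)\pi(\e)$ gives $\Pr(\M_\e(\Av)_{ij}=1\mid\X)=\pi(\e)+\se^2 p_{ij}=\te^2+\se^2 p_{ij}$, which by the displayed identity is exactly $\varphi_\eps(X_i)\tr\I[p+1,q]\varphi_\eps(X_j)$. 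So $\M_\e(\Av)_{ij}\mid\X\sim\textup{Ber}\bigl(1\cdot\varphi_\eps(X_i)\tr\I[p+1,q]\varphi_\eps(X_j)\bigr)$ independently over $i<j$ — precisely condition (ii) of \cref{def:grdpg} with signature $(p+1,q)$ and sparsity parameter $1$.

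It then remains to check the hypotheses. For admissibility of $\varphi_\eps\push\pr$ on $\R^{d+1}$: its support equals $\varphi_\eps(\supp(\pr))$, which is compact since $\varphi_\eps$ is continuous and $\supp(\pr)$ is compact; and for $\xv,\yv\in\supp(\pr)$ the identity sandwiches $\varphi_\eps(\xv)\tr\I[p+1,q]\varphi_\eps(\yv)=\te^2+\se^2\rho_n\xv\tr\I[p,q]\yv$ between $\te^2\ge0$ and $\te^2+\se^2\le1$, using $0\le\xv\tr\I[p,q]\yv\le1$ ($(p,q)$-admissibility of $\pr$) and $\rho_n\le1$. Condition (i) is immediate: the rows of $\X$ are i.i.d.\ from $\pr$, hence the rows $\varphi_\eps(X_1),\dots,\varphi_\eps(X_n)$ of $\varphi_\eps(\X)$ are i.i.d.\ from $\varphi_\eps\push\pr$ by definition of the pushforward. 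Combining this with the conditional edge law gives $(\M_\e(\Av),\varphi_\eps(\X))\sim\grdpg(\varphi_\eps\push\pr,1;p+1,q)$.

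I do not anticipate a genuine obstacle. The two points that deserve care are (a) reading \cref{def:eflip} closely enough to confirm that each unordered pair $\{i,j\}$ is flipped exactly once, so that the symmetrization does not destroy the conditional independence of the edges; and (b) the signature bookkeeping — checking that appending the constant coordinate $\te$ increases the positive part of the signature by exactly one, which is what makes the indefinite inner product on $\R^{d+1}$ decompose into the constant offset $\te^2$ plus the rescaled original inner product.
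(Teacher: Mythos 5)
Your argument is correct and follows essentially the same route as the paper's proof: compute the conditional edge probability under \eflip{} via the law of total probability, recognize it as the indefinite inner product of the $\varphi_\eps$-transformed latent positions in signature $(p+1,q)$, and observe that the rows of $\varphi_\eps(\X)$ are i.i.d.\ from the pushforward $\varphi_\eps\push\pr$. The only difference is that you explicitly verify the $(p+1,q)$-admissibility of $\varphi_\eps\push\pr$ (compact support and the sandwich $\te^2 \le \te^2 + \se^2\rho_n\xv\tr\I[p,q]\yv \le \te^2+\se^2 = 1-\pi(\e) \le 1$), a hygiene check that the paper's proof leaves implicit — a minor but genuine improvement in completeness.
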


Put simply, when \eflip{} is applied to a GRDPG with signature $(p, q)$, the resulting random graph is also a GRDPG with signature $(p+1, q)$, and its latent positions are embedded into $\R^{d+1}$ via $\varphi_\e$. See \cref{fig:eflip-circle} for an illustration. $\varphi_\eps$ scales the latent positions by a factor $\se$; as $\eps$ decreases, $\se$ decreases. This results in the squeezing effect in \cref{fig:eflip-circle}. Notably, $\M_\e(\Av)$ loses the sparsity pattern of $\Av$ whenever $\te = \omega(\rho^{1/2}_n)$, i.e., $\eps = o(\log{1/\rho_n})$. Nevertheless, $\M_\e(\Av)$ still lends itself to important inference tasks, as we will see in the coming sections.

The addition of the extra dimension in \cref{prop:closure} be viewed as the extra source of randomness introduced via \eflip{}. To see this, let $p_{ij} = X_i\tr\I[p,q]X_j$, and recall from \cref{eq:dp} that $\Av' = \M_\e(\Av)$ is such that
\begin{align}
    \pr( \Av'_{ij} = 1 ) = \pi(\e) (1-p_{ij}) + (1\!-\!\pi(\e)) p_{ij},
\end{align}
i.e., the probability of an edge $\Av'_{ij}$ is a convex mixture of the original connection probabilities of $\Av$. \cite[Appendix~B]{rubin2022statistical} establishes that GRDPGs are, essentially, the only random graph model which reproduce mixtures of connection probabilities as convex combinations in the latent space. Under \eflip{}, this manifests as a convex combination of the original latent positions $\X$ (when $\e=\infty$) and a single point in the extra $(d+1)$th dimension (when $\eps=0$), which corresponds to the latent position of an \er{} graph.

\begin{corollary}
    When $\e=0$ in \cref{prop:closure}, ${\varphi_\eps(\xv)\!\equiv\!(\sqrt{1/2}, \zerov[d]\tr)\tr}$ and ${\M_\e(\Av)\!\sim\!\text{\er}({1}/{2})}$.
\end{corollary}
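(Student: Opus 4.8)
The plan is to substitute $\e = 0$ into the definitions of $\se$, $\te$, and $\varphi_\e$, and then read off the conclusion from \cref{prop:closure}. First I would note that at $\e = 0$ we have $\pi(0) = 1/(e^0 + 1) = 1/2$, so that $\se = \sqrt{1 - 2\pi(0)} = 0$ and $\te = \sqrt{\pi(0)} = \sqrt{1/2}$. Plugging these into \cref{eq:eflip-embedding} gives $\varphi_0(\xv) = (\te,\ \se\rho_n^{1/2}\xv\tr)\tr = (\sqrt{1/2},\ \zerov[d]\tr)\tr$ for every $\xv \in \Rd$, which is the first claim and shows in particular that $\varphi_0$ is a constant map. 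Consequently $\varphi_0{}\push\pr = \delta_{\xv_0}$ with $\xv_0 \defeq (\sqrt{1/2},\ \zerov[d]\tr)\tr$, regardless of $\pr$ and $\rho_n$.

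Next I would apply \cref{prop:closure}, which yields $(\M_0(\Av), \varphi_0(\X)) \sim \grdpg(\delta_{\xv_0}, 1; p+1, q)$. By \cref{def:grdpg}, conditional on the (now deterministic) latent positions, the edges $\M_0(\Av)_{ij}$ for $i < j$ are independent $\textup{Ber}(\xv_0\tr\I[p+1,q]\xv_0)$ variables. Since the coordinate that $\varphi_\e$ adjoins occupies the added \emph{positive} slot of the signature $(p+1,q)$, and $\xv_0$ is supported only on that coordinate, we get $\xv_0\tr\I[p+1,q]\xv_0 = 1/2$; in particular $\delta_{\xv_0}$ is $(p+1,q)$-admissible since $0 \le 1/2 \le 1$. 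Hence every edge is an independent $\textup{Ber}(1/2)$ draw, i.e., $\M_0(\Av) \sim \er(1/2)$ --- this is precisely the degenerate instance of \ref{subremark:er} with $\pi = \norm{\xv_0} = \sqrt{1/2}$ and sparsity parameter $1$.

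There is essentially no obstacle here: the content is bookkeeping around \cref{prop:closure}, which does the real work. The only two points worth stating carefully are (i) that the sparsity factor $\rho_n$ is annihilated because $\se = 0$, so the new GRDPG has sparsity $1$ rather than $\rho_n$, and (ii) that the signature inflates to $(p+1,q)$ but this is immaterial, as $\xv_0$ lives entirely in the single new positive dimension, making the indefinite inner product collapse to the ordinary squared norm $1/2$.
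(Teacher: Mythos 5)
Your proof is correct, and it is the obvious direct-substitution argument that the paper implicitly relies on (the paper gives no explicit proof, treating the corollary as immediate from \cref{prop:closure}). The essential observations are all present: $\pi(0)=1/2$ forces $\se=0$ and $\te=\sqrt{1/2}$, hence $\varphi_0$ is constant, the pushforward is the Dirac mass $\delta_{\xv_0}$ with $\xv_0=(\sqrt{1/2},\zerov[d]\tr)\tr$, and since $\xv_0$ is supported entirely on the adjoined positive coordinate of $\I[p+1,q]$, the edge probability is $\xv_0\tr\I[p+1,q]\xv_0 = 1/2$ independently for every pair, giving $\er(1/2)$.

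One small remark: your closing sentence identifies this with \ref{subremark:er} using ``$\pi=\norm{\xv_0}=\sqrt{1/2}$ and sparsity $1$,'' but the paper's formula there ($\pi\rho_n$ with $\pi\defeq\norm{\xv}$) would yield edge probability $\sqrt{1/2}$, not $1/2$. This tension is a typo in the paper's \ref{subremark:er} (the edge probability should be $\rho_n\norm{\xv}^2$ when $q=0$, not $\rho_n\norm{\xv}$), not an error in your computation; your direct calculation of $\xv_0\tr\I[p+1,q]\xv_0=1/2$ is the correct one.
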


By the preceding discussion, it would seem as though applying $\M_\e$ successively with $\e_1, \e_2$ would result in a GRDPG with signature $(p+2, q)$---one extra dimension for each application of $\M_\e$. However, probabilistically speaking, applying \eflip{} with probabilities $\pi(\e_1), \pi(\e_2)$ is equivalent to performing a single \eflip{} with adjusted probability 
$$
\pi= \pi(\e_1)(1-\pi(\e_2)) + (1-\pi(\e_1))\pi(\e_2).
$$ 
The next result addresses this by showing that the geometric interpretation is consistent with the probabilistic intuition.

\begin{proposition}[paraphrased; see \cref{sec:composition}]\label{prop:composition}
    Let $\Av' = \M_{\e_2} \circ \M_{\e_1}(\Av)$ and $\varphi: \R^d \to \R^{d+2}$ where
    $$
    \varphi(\xv) = \qty\big(\tau_2,\: \s_2\tau_1,\: \s_2\s_1\xv\tr)\tr \in \R^{d+2},
    $$
    where $\tau_i = \tau(\eps_i)$ and $\s_i = \s(\eps_i)$ for $i=1, 2$. Then, there exists $\Qv \in \o(p+2, q)$ and $a, b > 0$ such that
    \begin{align}
        \Qv\varphi(\xv) = (0, a, b\xv\tr)\tr \quad \forall \xv \in \R^d.
    \end{align}
\end{proposition}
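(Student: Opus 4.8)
The plan is to iterate \cref{prop:closure} once more and then rotate away the redundant coordinate using an explicit element of $\o(p+2,q)$. First I would apply \cref{prop:closure} with budget $\e_1$: from $(\Av,\X)\sim\grdpg(\pr,\rho_n;p,q)$ it gives $\M_{\e_1}(\Av)\sim\grdpg(\varphi_{\e_1}\push\pr,\,1;\,p+1,q)$ with latent positions $\varphi_{\e_1}(\X)$, where $\varphi_{\e_1}(\xv)=(\tau_1,\ \s_1\rho_n^{1/2}\xv\tr)\tr$ and, crucially, the new sparsity parameter is $1$. Applying \cref{prop:closure} a second time with budget $\e_2$ to this GRDPG --- which has signature $(p+1,q)$, sparsity $1$, and latent dimension $d+1$ --- yields $\Av'=\M_{\e_2}(\M_{\e_1}(\Av))\sim\grdpg(\varphi\push\pr,\,1;\,p+2,q)$ with $\varphi=\varphi_{\e_2}\circ\varphi_{\e_1}$, namely
\[
\varphi(\xv)=\bigl(\tau_2,\ \s_2\,\varphi_{\e_1}(\xv)\tr\bigr)\tr=\bigl(\tau_2,\ \s_2\tau_1,\ \s_2\s_1\rho_n^{1/2}\,\xv\tr\bigr)\tr\in\R^{d+2}.
\]
This is exactly the stated $\varphi$ (the extra factor $\rho_n^{1/2}$ is harmless --- it is absorbed into $b$ below, and disappears when $\rho_n\equiv1$).

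Next I would split $\varphi(\xv)=(\tau_2,\ \s_2\tau_1,\ \zerov[d]\tr)\tr+(0,\ 0,\ \s_2\s_1\rho_n^{1/2}\xv\tr)\tr$ into a constant part plus a part that is linear in $\xv$ and supported on the last $d$ coordinates. The key observation is that the constant part lies entirely in the $2$-dimensional subspace spanned by the first two coordinate axes, which is contained in the positive block (the first $p+2$ coordinates) of $\I[p+2,q]$, since these are the two ``positive'' coordinates created by the two flips. Hence I would take $\Rv\in\o(2)$ to be a rotation carrying $(\tau_2,\ \s_2\tau_1)\tr$ to $(0,\ a)\tr$ with $a\defeq\sqrt{\tau_2^2+\s_2^2\tau_1^2}$ (such a rotation exists as both vectors have norm $a$), and set $\Qv=\textup{diag}(\Rv,\I[d])\in\R^{(d+2)\times(d+2)}$. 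Because $\Qv$ is block diagonal, with an orthogonal block on the two positive coordinates and the identity elsewhere, $\Qv\tr\I[p+2,q]\Qv=\textup{diag}(\Rv\tr\Rv,\,\I[p,q])=\textup{diag}(\I[2],\,\I[p,q])=\I[p+2,q]$, so $\Qv\in\o(p+2,q)$; and since $\Qv$ fixes the last $d$ coordinates pointwise it leaves the linear part unchanged. Therefore $\Qv\varphi(\xv)=(0,\ a,\ \s_2\s_1\rho_n^{1/2}\xv\tr)\tr$, which is the claim with $b\defeq\s_2\s_1\rho_n^{1/2}$; here I assume $0<\e_1,\e_2<\infty$, so that $\tau_i,\s_i>0$ and both $a,b$ are strictly positive.

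I expect no genuine obstacle: once \cref{prop:closure} is applied twice, the statement reduces to a short linear-algebra computation. The only point requiring a little care is the signature bookkeeping --- recognizing that both constant entries $\tau_2$ and $\s_2\tau_1$ sit in the positive block of $\I[p+2,q]$, so that an honest planar rotation (rather than a hyperbolic transformation) suffices, and that embedding this rotation block-diagonally with $\I[d]$ genuinely lands in $\o(p+2,q)$ while leaving the original latent coordinates, hence the $\xv$-dependence, intact. As a sanity check one may verify $a^2=\tau_2^2+\s_2^2\tau_1^2=\pi(\e_1)+\pi(\e_2)-2\pi(\e_1)\pi(\e_2)$ and $b^2=\s_1^2\s_2^2\rho_n=(1-2\pi(\e_1))(1-2\pi(\e_2))\rho_n$, which coincide with $\tau(\e')^2$ and $\s(\e')^2\rho_n$ for a single \eflip{} at the composed flip probability $\pi(\e')=\pi(\e_1)(1-\pi(\e_2))+(1-\pi(\e_1))\pi(\e_2)$ --- exactly the probabilistic picture motivating the proposition, and a confirmation that the $(d+2)$th dimension collapses under $\Qv$.
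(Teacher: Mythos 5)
Your proof is correct and takes essentially the same approach as the paper's: apply \cref{prop:closure} twice, observe that the two constant coordinates $(\tau_2,\ \s_2\tau_1)$ both sit in the positive block of $\I[p+2,q]$, rotate them with an ordinary planar rotation $\Rv\in\o(2)$ to $(0,\ a)$ with $a=\sqrt{\tau_2^2+\s_2^2\tau_1^2}$, and embed $\Rv$ block-diagonally with $\I[d]$ to land in $\o(p+2,q)$. The paper phrases this as the $m=2$ instance of an induction over $m$ successive applications of \eflip{} (Proposition~\ref{prop:multiple-eflips}), but the inductive step is exactly your rotation-and-block-embed argument, and your closing consistency check against the composed flip probability $\pi(\e')$ matches the probabilistic calculation preceding \cref{prop:composition} in \cref{sec:composition}.
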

\noindent By invariance under $\o(p+2,q)$ transformations from \cref{eq:grdpg-equiv}, \cref{prop:composition} shows there is a minimal $\R^{d+1}$ representation of the latent positions that encodes the GRDPG structure after multiple applications of \eflip{} which faithfully only has one extra dimension.


\subsection{Minimax rates for GRDPGs}
\label{sec:minimax}
Before considering the general problem of estimation from GRDPGs, it is instructive to first consider how \cref{prop:closure} simplifies analysis by considering the case when $\Av \sim \sbm(n; \gamma, \rho_n)$. Recall from \ref{subremark:sbm} that the latent positions are $\xv_1, \xv_2\in \Rd$ with $\norm{\xv_1}=\norm{\xv_2}=\sqrt{1+\gamma}$ and $\theta = \arccos(1-\gamma/1+\gamma)$. From \cref{prop:closure}, $\M_\e(\Av)$ is also an SBM with latent positions $\yv_i = \varphi_\e(\xv_i)$; see \cref{fig:sbm-eflip}. The intra- and inter-community edge probabilities become
\begin{align}
    a_\e = \yv_i\tr\I[d+1]\yv_i &= \te^2 + \se^2\rho_n(1+\gamma),\\
    b_\e = \yv_1\tr\I[d+1]\yv_2 &= \te^2 + \se^2\rho_n(1-\gamma).
\end{align}

For small $\eps \le 1$, an application of \cite[Theorem~2.1]{zhang2016minimax} shows that exact recovery is possible only when
\begin{align}
    \frac{(a_\e - b_\e)^2}{a_\e} > \frac{2\log{n}}{n} \!\implies\! \gamma \rho_n^2\se^4 = \Omega\qty( \frac{\log{n}}{n} ).\label{eq:sbm-lower-bound}
\end{align}
In the absence of privacy, however, the detection threshold for exact recovery requires 
\begin{align}
    \gamma\rho_n = \Omega\qty(\frac{\log{n}}{n}).
\end{align}
Compared to \cref{eq:sbm-lower-bound}, the nature of \eflip{} leads to an amplification of the effect due to sparsity $\rho_n$, and coincides with the observations in \cite[p.~13]{hehir2021consistent} for \textit{weak recovery}. A similar analysis can be carried out for \eedpe{} exact recovery from SBMs with $k$-communities and for DC-SBMs using the results from \cite{gao2018community}.

\begin{figure}[t]
    \centering
    \vspace*{-1em}
    \includegraphics[width=0.66\linewidth]{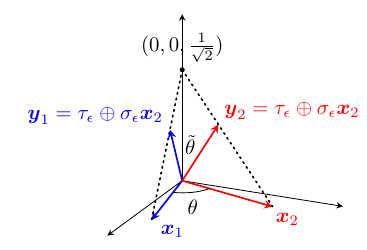}
    \vspace*{-0.7em}
    \caption{Illustration of \eflip{} for $\sbm(n; \gamma, \rho_n)$}
    \label{fig:sbm-eflip}
    \vspace*{-1em}
\end{figure}

We now turn our attention to the problem of estimating the latent positions of GRDPGs under \eedpe{}. Since the latent positions are identifiable only up to $\o(p,q)$ transformations by \cref{eq:grdpg-equiv}; any metric for assessing the uniform error in the latent positions of GRDPGs must take into account this non-identifiability. To this end, \cite{yan2023minimax} define the metric $\dttinf$ as follows.

\begin{definition}[$\dttinf$ metric]\label{def:dttinf}
For $\X, \Y \in \Rnd$, 
\medskip
\begin{align}
    \dttinf({\X, \Y}) = \min_{\Ov \in \o(d) \cap \o(p,q)} \ttinf{\Y\Qy\inv - \X\Qx\inv\Ov},
\end{align}
where $\Qx, \Qy \in \o(p,q)$ are the spectral alignments such that $\X\Qx\inv\! =\!\U_\X \abs{\L(\X)}^{1/2}$, and similarly for $\Y$.
\end{definition}

For fixed $p, q, d$, let $\calX \subset \Rd$ be the set of all regular $(p, q)$-admissible latent positions:
\begin{align}
    \calX \defeq \Big\{\X \in \Rnd: \;& \X\I[p,q]\X\tr \in [0,1]^{n\times n} \Big\}.
\end{align}
For $\Av \sim \grdpg(\X, \rho_n; p, q)$, let $\hX: \bbB(n) \to \Rnd$ be an estimator of the latent positions $\X \in \calX$. The minimax risk for estimating the latent positions $\X$ is given by
\begin{align}
    \rn(\calX) = \inf_{\hX} \sup_{\X \in \calX } {\E}_{_{\Av \sim \grdpg(\X, \rho_n; p, q)}}\!\!\qty[\dttinf\qty(\hX(\Av), \X)],
\end{align}
where the infimum is taken over all estimators $\hX$. In the absence of privacy and when $n\rho_n=\omega(\log{n})$, the minimax rate from \cite[Theorem~1~\&~Corollary~1]{yan2023minimax} is
\begin{align}
    \rn(\calX) = \Omega\qty(\sqrt{\frac{\log{n}}{n\rho_n}} ).\label{eq:minimax-infty}
\end{align}
We point out that the rate in \cref{eq:minimax-infty} implicitly accounts for the $\rho_n^{-1/2}$ rescaling needed to reover the original latent positions $\X$ from $\Av \sim \grdpg(\X, \rho_n; p, q)$.

In the \eeldpe{} setting, let $\mathbb{A}_\eps$ denote the set of all $\eps$-edge LDP mechanisms satisfying \cref{def:noninteractive-dp}. Under the additional constraint of privacy, we restrict our attention to estimators $\hX: \calZ \to \Rnd$ which use the private output $\Zv = \A_\e(\Av) \in \cal{Z}$; in this setting the minimax risk is given by
\begin{align}
    \!\!\rn(\calX, \eps)  = \inf_{\A_\eps \in \mathbb{A}_\epsilon}\inf_{\hX} \sup_{\Xv \in \calX } {\E}\qty[\dttinf\qty(\hX(\A_\eps(\Av)), \X)],\label{eq:priv-minimax}
\end{align}
where the expectation is taken over the randomness in $\Av$ and in $\A_\eps \in \bbA_\eps$. The following result provides a lower bound on the minimax risk for estimating the latent positions of GRDPGs under \eeldpe{}.

\begin{theorem}\label{thm:minimax}
    Let $\calX$ be as defined above, and let $n\rho_n = \omega(\log{n})$. Then, for $0 < \eps < 3/8\rho_n$ and for sufficiently large $n$, the minimax risk satisfies
    \begin{align}
        \rn(\calX,\eps) = \Omega\qty(\sqrt{\frac{\log{n}}{n\se^4\rho_n^2}} ).\label{eq:minimax}
    \end{align}
\end{theorem}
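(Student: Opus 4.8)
The plan is to prove the lower bound by a local-packing Fano argument in which the privacy constraint is absorbed through an information-contraction inequality for non-interactive $\eps$-edge LDP \citep{duchi2018minimax} (the private analogue of the classical Fano/Le Cam bounds). First I would build the hard instances inside $\calX$. Fix a base configuration $\X_0\in\calX$ whose rows take only finitely many values, chosen so that $\X_0\I[p,q]\X_0\tr$ has signature exactly $(p,q)$ with well-separated nonzero spectrum and all pairwise indefinite inner products in a fixed compact subinterval of $(0,1)$; fix a unit vector $u\in\Rd$ generic with respect to this structure. For a scale $\delta>0$ to be chosen and each $k\in[n]$, let $\X_k$ agree with $\X_0$ in every row except the $k$-th, which is set to $X_{0,k}+\delta u$. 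For $\delta$ below a fixed constant every $\X_k\in\calX$, and since $\X_k,\X_j$ (for $k\neq j$) agree on the remaining $n-2$ rows --- which span $\Rd$ --- the minimization in \cref{def:dttinf} over the alignment is pinned to within $O(\delta)$ of the identity, so $\dttinf(\X_k,\X_j)\ge c_0\delta$ for some $c_0>0$. By the standard reduction from estimation to multiple hypothesis testing, it then suffices to show: if $\delta\le c\sqrt{\log n/(n\se^4\rho_n^2)}$, then for every mechanism $\A_\eps\in\bbA_\eps$ and every test the probability of misidentifying $K$ --- drawn uniformly from $[n]$ with $\Av\sim\grdpg(\X_K,\rho_n;p,q)$ --- exceeds $1/2$.

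For the testing bound, let $P_k$ be the law of $\Zv=\A_\eps(\Av)$ when $\Av\sim\grdpg(\X_k,\rho_n;p,q)$. Conditionally on $\X_k$ the edges of $\Av$ are independent Bernoullis, and by \cref{def:noninteractive-dp} each $\A^\eps_{ab}$ acts on $\Av_{ab}$ alone, so the privatized edges are independent; additivity of KL divergence together with the single-bit estimate $D_{\mathrm{KL}}(Q\mu\,\|\,Q\mu')\le(e^\eps-1)^2\|\mu-\mu'\|_{\mathrm{TV}}^2$, valid for any $\eps$-LDP channel $Q$ on $\{0,1\}$, gives for $k\neq j$
\[
D_{\mathrm{KL}}(P_k\,\|\,P_j)\;\le\;(e^\eps-1)^2\!\!\sum_{1\le a<b\le n}\!\bigl\|\,\mathrm{Ber}\bigl(\rho_n X_{k,a}\tr\I[p,q]X_{k,b}\bigr)-\mathrm{Ber}\bigl(\rho_n X_{j,a}\tr\I[p,q]X_{j,b}\bigr)\bigr\|_{\mathrm{TV}}^2 .
\]
The summand vanishes unless edge $\{a,b\}$ is incident to row $k$ or $j$, and each such edge contributes $O(\rho_n^2\delta^2)$, so the sum is $O(n\rho_n^2\delta^2)$. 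Since $(e^\eps-1)^2=\se^4(e^\eps+1)^2$ and $\eps<3\rho_n/8\le 3/8$ makes $(e^\eps+1)^2$ a universal constant, $D_{\mathrm{KL}}(P_k\|P_j)\le C\se^4 n\rho_n^2\delta^2$, whence by convexity of $D_{\mathrm{KL}}(P_k\|\cdot)$, $I(K;\Zv)\le n^{-2}\sum_{k,j}D_{\mathrm{KL}}(P_k\|P_j)\le C\se^4 n\rho_n^2\delta^2$. Fano's inequality bounds the testing error below by $1-(I(K;\Zv)+\log 2)/\log n$, which exceeds $1/2$ for large $n$ once $\delta^2\le\log n/(4C n\se^4\rho_n^2)$. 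Fixing $\delta$ at that value and reversing the reduction (triangle inequality for $\dttinf$, then Markov), $\sup_{\X\in\calX}\E[\dttinf(\hX(\A_\eps(\Av)),\X)]\ge (c_0/4)\delta$ for every $\hX$ and every $\A_\eps\in\bbA_\eps$, i.e.\ $\rn(\calX,\eps)=\Omega(\sqrt{\log n/(n\se^4\rho_n^2)})$. The hypotheses $n\rho_n=\omega(\log n)$ and $\eps<3\rho_n/8$ keep this $\delta$ below the constant threshold of the construction. (Alternatively, since the symmetric edge flip $\M_\eps$ is the Blackwell-maximal $\eps$-edge-LDP channel for a single bit, one may reduce to $\A_\eps=\M_\eps$ and combine \cref{prop:closure} with a scale-adjusted non-private GRDPG lower bound; the route above avoids tracking the scale normalization of $\varphi_\eps{}\push\pr$.)

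The main obstacle is the $\dttinf$-separation of the hypotheses: one must rule out that the $\o(d)\cap\o(p,q)$ alignment in \cref{def:dttinf} can even approximately absorb a one-row perturbation --- which is exactly why $\X_0$ is taken with full-rank, spectrally well-separated latent structure and $u$ generic --- and this needs a quantitative perturbation bound showing the optimal alignment is $O(\delta)$-close to the identity as $\delta\to0$. The secondary point is setting up the edge-wise information-contraction bound with the correct $\eps$-dependence and noting that the perturbed row affects only $O(n)$ of the $\binom{n}{2}$ edges; the remainder is a routine instantiation of Fano's method.
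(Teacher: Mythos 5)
Your proposal is correct and reaches the stated rate, but it takes a genuinely different route than the paper. The paper uses Le Cam's mixture two-point method: it compares the single base instance $\X_0$ against the uniform mixture over the $n$ one-row perturbations $\X_k$, bounds $\mathsf{D}_{\mathsf{TV}}^2$ by the $\chi^2$-divergence (\cite[Lemma~2.27]{tsybakov2008nonparametric}), and then carries out an explicit expansion of the $\chi^2$-divergence of the privatized outputs, edge by edge, extracting the $\se^4$ factor from the $\eps$-LDP likelihood-ratio constraint $\xi_1(z)/\xi_0(z)\le e^\eps$ from first principles (Steps~3--5 in \cref{proof:thm:minimax}). You instead use Fano's method with $n$ hypotheses, bound $I(K;\Zv)$ via convexity of KL, and import the Duchi--Jordan--Wainwright information-contraction inequality $D_{\mathrm{KL}}(Q\mu\|Q\mu')\lesssim(e^\eps-1)^2\|\mu-\mu'\|_{\mathrm{TV}}^2$ as a black box. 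Both yield $I(K;\Zv)$ (resp. $\chi^2$) of order $\se^4 n\rho_n^2\delta^2$, because a one-row perturbation touches only $O(n)$ of the $\binom{n}{2}$ edges and each contributes $O(\rho_n^2\delta^2)$ to the TV-squared sum, and both set the separation at $\delta\asymp\sqrt{\log n/(n\se^4\rho_n^2)}$. What your approach buys is modularity: the LDP penalty comes for free from a citation rather than from a re-derivation, at the cost of needing pairwise $\dttinf$-separation across all $n$ hypotheses rather than just separation from $\X_0$ (as Le Cam's mixture version requires). What the paper's approach buys is a self-contained, fully explicit accounting of the $\eps$-dependence without any external contraction lemma.

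Two small remarks. First, you misparsed the hypothesis $0<\eps<3/8\rho_n$ (which the paper's Step~4 shows is meant as $3/(8\rho_n)$, i.e.\ $\eps<1/(4\P_{0,ij})$, not $(3/8)\rho_n$); in the regime the paper actually uses ($\eps\le 1$) your bound $(e^\eps+1)^2=O(1)$ is still fine, so this is a cosmetic slip and not a gap. Second, you correctly flag that the $\dttinf$-separation of the one-row perturbations needs a quantitative argument that the $\o(d)\cap\o(p,q)$ alignment (together with the induced change in $\Qx$) cannot absorb the perturbation; the paper's own proof asserts $\dttinf(\X_k,\X_0)\ge\eta/2$ ``by construction'' without filling in that perturbation bound either, so this is a shared loose end rather than a defect specific to your argument. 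Your parenthetical alternative (reduce to $\M_\eps$ by Blackwell dominance and appeal to a scale-adjusted non-private bound) is also not what the paper does, but is a plausible third route.
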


In other words, \cref{thm:minimax} characterizes the worst-case risk any estimator of the latent positions of a GRDPG can achieve under \textit{all possible} \eeldpe{} mechanisms satisfying \cref{def:noninteractive-dp}. In the \textit{high-privacy regime} where $\eps \to 0$ as $n \to \infty$, the privacy-utility tradeoff under \eeldpe{} differs from the non-private setting in \cref{eq:minimax-infty} in two key ways:

\begin{enumerate}[label={(\roman*)}, itemsep=0.1em]
\item The effective sample size reduces from $n$ to $\sigma^4(\eps)n \ll n$. This phenomenon is well-known in \edpe{} estimation \citep[see, e.g.,][]{barber2014privacy,duchi2018minimax,rohde2020geometrizing,asoodeh2021local}.
\item Moreover, in contrast to \cref{eq:minimax-infty}, there is an amplification of the effect due to sparsity in \cref{eq:minimax}, i.e., $\rho \mapsto \rho_n^2$. In \cite{li2022network}, a similar phenomenon was observed for change-point localization in networks.
\end{enumerate}

Note that in order for the lower bound in \cref{eq:minimax} to not be vacuous, we require that $\se^4\rho_n^2 = \Omega(\log{n} / n )$. By noting that $\sigma(\eps)^2 = \tanh(\eps/2)$, this is equivalent to:
\begin{align}
    \eps = \Omega\qty(\tanh\inv\qty(\sqrt{\frac{\log{n}}{n\rho_n^2}})),\label{eq:feasibility}
\end{align}
In other words, if $\eps\to0$, consistent recovery of the latent positions is only possible for sufficiently dense graphs. For instance, while graphs with sparsity ${\rho_n \asymp (\log n/n)^{2/3}}$ admit consistent estimation in the non-private setting from \cref{eq:minimax-infty}, in the minimax sense no consistent estimators exist at the same sparsity under \eeldpe{} from \cref{eq:priv-minimax}. We also note that \cref{eq:feasibility} coincides with the detection threshold for exact recovery under \eflip{} when ${\grdpg(\pr, \rho_n; 2, 0)\distas \sbm(\gamma, \rho_n)}$ as noted in~\cref{eq:sbm-lower-bound}.


\subsection{Privacy-Adjusted Spectral Embedding}
\label{sec:convergence}

\begingroup
Having established the thresholds for estimating latent positions of GRDPGs under \eedpe{}, we now focus on constructing an optimal estimator. Without privacy constraints, for $(\Av, \Xv) \sim \grdpg(\pr, \rho_n; p, q)$ the spectral embedding $\hX = \spec(\Av; d)$ serves as an estimator for the latent positions after global scaling by the sparsity factor, i.e., $\rho_n^{1/2}\Xv$ \citep[Theorem~1]{rubin2022statistical}. In the regime where $\rho_n = o(1)$, $\hX$ estimates a quantity that shrinks to zero as $n$ grows. Rescaling by the sparsity factor---specifically, using $\rho_n^{-1/2}\hX$---yields a consistent estimator for $\X$ up to $\o(p,q)$ transformations \citep[Theorem~1]{rubin2022statistical} and \citep[Theorem~4]{agterberg2020nonparametric}. 

Under privacy constraints, however, a privacy mechanism $\A \in \mathbb{A}_\eps$ may alter $\Av$ in ways that the spectral embedding does not explicitly account for. Consequently, accurate statistical inference from the \eedpe{} graph requires choosing an appropriate privacy mechanism and compensating for its geometric distortion in the estimation procedure.

\begin{algorithm}[t]
    \caption{{\small Privacy-Adjusted Spectral Embedding}}\label{alg:pase}
    \SetAlgoLined
    \SetKwInput{KwInput}{Input}
    \KwInput{$\M_\e(\Av)$ and privacy budget $\e$}
    \BlankLine
    \begin{algorithmic}[1]
    \State Compute the privacy-adjusted adjacency matrix
    \vspace*{-1em}
    $$
    \Ac = \f{1}{\se[2]}\qty(\M_\e(\Av) - \te[2]\onev[n]\onev[n]\tr).
    $$
    \vspace*{-1em}
    \State Estimate the sparsity $\ch{\rho}_n$
    \vspace*{-1em}
    $$
    \ch{\rho}_n = {n \choose 2}\inv\sum_{i < j} \Ac_{ij}.
    $$
    \vspace*{-1em}
    \State Spectral embedding
    \vspace*{-0.5em}
    $$
    \Xc = \spec\qty\big(\Ac; d).
    $$
    \vspace*{-1em}
    \end{algorithmic}
    \Return{$\Xc \in \Rnd$ and $\ch{\rho}_n \in [0, 1]$}
\end{algorithm}%

To this end, we select $\A = \M_\eps$ as the \eflip{} mechanism and introduce a \textit{privacy-adjusted spectral embedding} procedure in \cref{alg:pase}. The intuition behind \cref{alg:pase} is as follows. By \cref{prop:closure}, the latent positions of $\M_\e(\Av)$ and $\Av$ are connected via the mapping $\varphi_\e$. Since the privacy parameter $\eps$ can be disclosed publicly without disclosure risk \cite{dwork2014algorithmic,dwork2019differential}, we can reverse the effect of $\varphi_\e$ on the spectral estimate by applying the inverse transformation to $\M_\e(\Av)$. The remaining challenge is that the sparsity parameter $\rho_n$ is not known publicly. 
We mitigate the influence of $\rho_n$ by computing an estimate $\ch{\rho}_n$ from the synthetic graph, as described in Step~2 of \cref{alg:pase}. The following result establishes that this procedure yields a consistent estimator for the latent positions $\Xv$, up to $\o(p,q)$ transformations.\pagebreak

\begin{theorem}\label{thm:convergence-rate}
    Suppose $(\Xv, \Av) \sim \grdpg\pa{\pr, \rho_n; p, q}$ where $\E(X_1\tr\I[p,q]X_2)=1$. For $\e > 0$ let $\M_\e(\Av)$ be the \eedpe{} graph under \eflip{}, and let $\Xc$ be the privacy-adjusted spectral embedding using \cref{alg:pase}. If $n\rho_n=\omega(\log{n})$, then with probability $1 - O(n^{-1})$,
    \begin{align}
        \dttinf\qty( \frac{\Xc}{\sqrt{\ch{\rho}_n}}, {\Xv} ) = O\qty(\frac{\log{n}}{\sqrt{ n\se^4 \rho_n^2 }}).\label{eq:convergence-rate}
    \end{align}
\end{theorem}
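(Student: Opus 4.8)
The strategy is to reduce the privacy-adjusted problem to the non-private GRDPG estimation problem for which sharp $\ell_{2,\infty}$ bounds are already known, and then track how the rescaling factors $\sigma(\eps)$ and the plug-in estimate $\check\rho_n$ propagate through. By \cref{prop:closure}, $\M_\e(\Av) \sim \grdpg(\varphi_\eps\push\pr, 1; p+1, q)$, so $\M_\e(\Av)$ is itself a dense ($\rho \equiv 1$) GRDPG with latent positions $\varphi_\eps(\X) = (\te \mathbf{1} \mid \se\rho_n^{1/2}\X)$. The non-private consistency results for GRDPGs—specifically \citep[Theorem~1]{rubin2022statistical} combined with the $\ell_{2,\infty}$ rate of \citep[Theorem~1]{yan2023minimax}—apply directly to $\Av' \defeq \M_\e(\Av)$: its adjacency spectral embedding $\spec(\Av'; d+1)$ recovers $\varphi_\eps(\X)$ up to an $\o(p+1, q)$ transformation, at uniform $\ell_{2,\infty}$ error $O(\sqrt{\log n / n})$, since the effective sparsity of $\Av'$ is $\Theta(1)$ (recall $\te^2 + \se^2\rho_n\, \xv\tr\I\yv \geq \te^2 \asymp 1$ when $\eps$ is bounded; more care is needed when $\eps \to 0$, and this is where the $\se^4\rho_n^2$ amplification enters). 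The key point is that \cref{alg:pase} uses $\spec(\Ac; d)$ with $\Ac = \se^{-2}(\M_\e(\Av) - \te^2 \mathbf{1}\mathbf{1}\tr)$, whose expectation conditional on $\X$ is $\rho_n \X\I[p,q]\X\tr$—i.e., $\Ac$ is an \emph{unbiased, rescaled} version of the dense privatized graph, with the nuisance dimension algebraically removed.

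\textbf{Key steps.} First I would show that $\Ac$, though not a $\{0,1\}$-matrix, has the same low-rank signal $\E[\Ac \mid \X] = \rho_n \X\I[p,q]\X\tr$ as a $\grdpg(\X, \rho_n; p, q)$ would, and that the noise $\Ac - \E[\Ac \mid \X]$ is a symmetric matrix with independent (above-diagonal) mean-zero entries whose variances are $\se^{-4}\Var(\M_\e(\Av)_{ij})$. A direct computation gives $\Var(\M_\e(\Av)_{ij}) = (\te^2 + \se^2 \rho_n p_{ij})(1 - \te^2 - \se^2 \rho_n p_{ij}) \asymp \te^2 \se^2 \asymp \se^2$ in the high-privacy regime, so the entrywise noise variance of $\Ac$ is $\asymp \se^{-2}$, with the entries bounded by $O(\se^{-2})$ as well. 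Second, I would invoke a concentration bound for $\opnorm{\Ac - \E[\Ac\mid\X]}$—e.g., the matrix Bernstein / \citep{lei2015consistency}-type spectral norm bound—giving $\opnorm{\Ac - \rho_n\X\I\X\tr} = O(\sqrt{n \se^{-2}} + \se^{-2}\log n) = O(\sqrt{n}/\se)$ when $n\se^2 = \omega(\log^2 n)$. Third, with the eigenvalues of the signal $\rho_n\X\I\X\tr$ of order $n\rho_n$ (since $\E[X_1\tr\I X_2] = 1$ fixes the scale), a Davis–Kahan / entrywise eigenvector perturbation argument in the style of \citep{abbe2020entrywise,lei2015consistency,yan2023minimax} yields $\ttinf{\Xc \Qc\inv - \rho_n^{1/2}\X\Qx\inv\Ov}$ at rate $O\big(\frac{\opnorm{\Ac - \E\Ac}}{n\rho_n}\cdot \sqrt{\log n}\big) = O\big(\frac{\sqrt{\log n}}{\sqrt{n\se^2}\rho_n}\big)$, matching the stated bound before the sparsity normalization. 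Fourth, I would handle the plug-in $\check\rho_n$: since $\check\rho_n = \binom{n}{2}\inv\sum_{i<j}\Ac_{ij}$ is an average of $\binom{n}{2}$ independent-ish terms each with mean $\rho_n\E_{\hat\pr^{(n)}}[X_i\tr\I X_j]$ and variance $O(\se^{-2})$, a Bernstein bound gives $|\check\rho_n / \rho_n - 1| = O_p\big(\frac{1}{\rho_n}(\frac{1}{n\se} + \frac{1}{n})\big) = o(1)$ in the feasible regime, and this multiplicative error feeds into $\Xc/\sqrt{\check\rho_n}$ as a lower-order term by a standard $(1+o(1))$-factor argument. Combining, $\dttinf(\Xc/\sqrt{\check\rho_n}, \X) = O(\sqrt{\log n}/\sqrt{n\se^4\rho_n^2})$, and the extra $\sqrt{\log n}$ over the minimax lower bound of \cref{thm:minimax} is exactly the known gap in the non-private $\ell_{2,\infty}$ theory.

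\textbf{Main obstacle.} The delicate part is the entrywise ($\ell_{2,\infty}$, not just spectral) perturbation analysis in the high-privacy regime where $\se \to 0$: the noise matrix $\Ac - \E[\Ac\mid\X]$ has inflated entries ($\asymp \se^{-2}$) and inflated row norms, so one must verify that the incoherence and row-wise concentration hypotheses of the entrywise eigenvector bounds (e.g. \citep{abbe2020entrywise} or the leave-one-out arguments used in \citep{yan2023minimax,lyzinski2014perfect}) still hold with the signal-to-noise ratio degraded by the factor $\se$; this is what forces the condition $n\rho_n = \omega(\log n)$ to effectively tighten—through the feasibility constraint \cref{eq:feasibility}—and produces the $\se^4\rho_n^2$ scaling rather than the naive $\se^2\rho_n$. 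A secondary subtlety is that $\Ac$ is not a GRDPG adjacency matrix in the literal sense (entries outside $[0,1]$ and not Bernoulli), so the cited theorems cannot be quoted verbatim; one either re-derives the needed perturbation bounds directly for matrices with independent bounded-variance entries, or carefully checks that the proofs of \citep{rubin2022statistical,yan2023minimax} only use sub-exponential entrywise tails and the low-rank expectation structure, both of which $\Ac$ satisfies. I also need to confirm that the $\o(d)\cap\o(p,q)$ alignment in $\dttinf$ is compatible with the extra dimension introduced by $\varphi_\eps$—but since \cref{alg:pase} strips that dimension off analytically via $\Ac$, the embedding dimension is back to $d$ and the signature is back to $(p,q)$, so this reduces to the standard alignment bookkeeping.
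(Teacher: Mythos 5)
Your overall strategy---de-bias $\M_\e(\Av)$ to get $\Ac$ with low-rank expectation $\rho_n\X\I[p,q]\X\tr$, apply a spectral concentration inequality, run an entrywise eigenvector perturbation argument, and handle $\ch{\rho}_n$ as a $(1+o(1))$ plug-in---is exactly the paper's approach (the paper works equivalently with $\overline{\Av} = \se^2\Ac = \M_\e(\Av)-\te^2\onev\onev\tr$ and $\overline{\P} = \se^2\rho_n\X\I\X\tr$, viewing it as the signal of a GRDPG with effective sparsity $t_n = \se^2\rho_n$ but the noise of a dense one, then rescales by $\se^{-1}$ at the end).

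There is, however, a concrete error in your variance computation that propagates through the rest of the argument. You write $\Var(\M_\e(\Av)_{ij}) = (\te^2 + \se^2\rho_n p_{ij})(1 - \te^2 - \se^2\rho_n p_{ij}) \asymp \te^2\se^2 \asymp \se^2$. But $1-\te^2 = 1-\pi(\e) \neq \se^2 = 1-2\pi(\e)$; in the high-privacy limit $\eps\to 0$ one has $\te^2\to 1/2$, $\se^2\to 0$, and $\te^2(1-\te^2)\to 1/4$, so the edge variance of $\M_\e(\Av)$ is $\Theta(1)$, not $\Theta(\se^2)$. Consequently the entrywise variance of $\Ac$ is $\Theta(\se^{-4})$, not $\Theta(\se^{-2})$, and the correct spectral concentration is $\opnorm{\Ac - \E[\Ac\mid\X]} = \se^{-2}\opnorm{\M_\e(\Av) - \P_{\Y}} = O(\sqrt{n}/\se^2)$ rather than your claimed $O(\sqrt{n}/\se)$---this is exactly what the paper gets by applying the Lei--Rinaldo spectral bound to $\overline{\Av}-\overline{\P} = \Av_\Y - \P_\Y$ and then dividing by $\se^2$. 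Your stated $O(\sqrt{n}/\se)$ would yield a rate that improves on the minimax lower bound of Theorem~2 by a factor of $\se$, which is a red flag. Once the variance is corrected, plugging $\opnorm{\Ac-\E\Ac}=O(\sqrt n/\se^2)$ into your perturbation template gives $O(\sqrt{\log n}/(\sqrt{n}\se^2\rho_n))$ for the normalized error, which is consistent with the theorem's $O(\log n/\sqrt{n\se^4\rho_n^2})$ (your $\sqrt{\log n}$ versus the paper's $\log n$ depends on which $\ell_{2,\infty}$ perturbation lemma is invoked---the paper uses the Rubin-Delanchy/Agterberg machinery which yields $\log n$, while an Abbe--Fan--Wang--Zhong-style bound can give $\sqrt{\log n}$; either is compatible with the stated upper bound). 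Your bookkeeping of what is ``before'' versus ``after'' sparsity normalization is also slightly muddled as a result, but the structure of the argument is sound once the variance is fixed.
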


In light of \cref{thm:minimax}, we see that the convergence rate of the privacy-adjusted spectral embedding $\Xc$ matches the lower bound in \cref{eq:priv-minimax} up to an $O(\sqrt{\log{n}})$ gap. A similar discrepancy exists in the best-known bounds in the absence of privacy constraints.  In \cite[Theorem~1]{rubin2022statistical} and \cite[Theorem~4]{agterberg2020nonparametric}, the convergence rate for estimating %
$\X$ using $\hX = \spec(\Av; d)$ is
\begin{align}
    \dttinf\qty(\frac{\hX}{\sqrt{{\rho}_n}}, {\Xv}) = \op\qty( \frac{\log{n}}{\sqrt{ n\rho_n }} ).\label{eq:convergence-rate-no-privacy}
\end{align}
In contrast to \cref{eq:minimax-infty}, the rate in \cref{eq:convergence-rate-no-privacy} has precisely the same $O(\sqrt{\log{n}})$ gap. Therefore, modulo $\log{n}$ factors, for the problem of estimating the latent positions of GRDPGs under \eedpe{}, we see that:
\vspace*{-0.7em}
\begin{enumerate}[label={(\roman*)}, itemsep=0.0em]
    \item \eflip{} is an optimal privacy mechanism,
    \item the privacy-adjusted spectral embedding $\Xc/\ch{\rho}_n$ after \eflip{} is an optimal estimator for $\X$, and
    \item the range of parameters $\eps, \rho_n \in (0,1)$ satisfying ${\se^4\rho_n^2 = \widetilde{\Theta}(1/n)}$ determines a phase-transition boundary for consistent estimation.
\end{enumerate}
\vspace*{-0.7em}

We also point out that, as noted in the discussion preceding \cref{eq:scale-identifiability}, $\E(X_1\tr\I[p,q]X_2)=1$ is fixed in \cref{thm:convergence-rate} in order to ensure $\rho_n$ is identifiable and that Step~2 of \cref{alg:pase} produces a consistent estimate of $\rho_n$. If, instead, $\E(X_1\tr\I[p,q]X_2)=\mu$, then the results in both \cref{eq:convergence-rate} and \cref{eq:convergence-rate-no-privacy} will hold for $\X/\sqrt{\mu}$ in place of~$\X$. 
\endgroup

\begin{figure}[t]
    \centering
    \vspace*{-1.4em}
    \includegraphics[width=\linewidth]{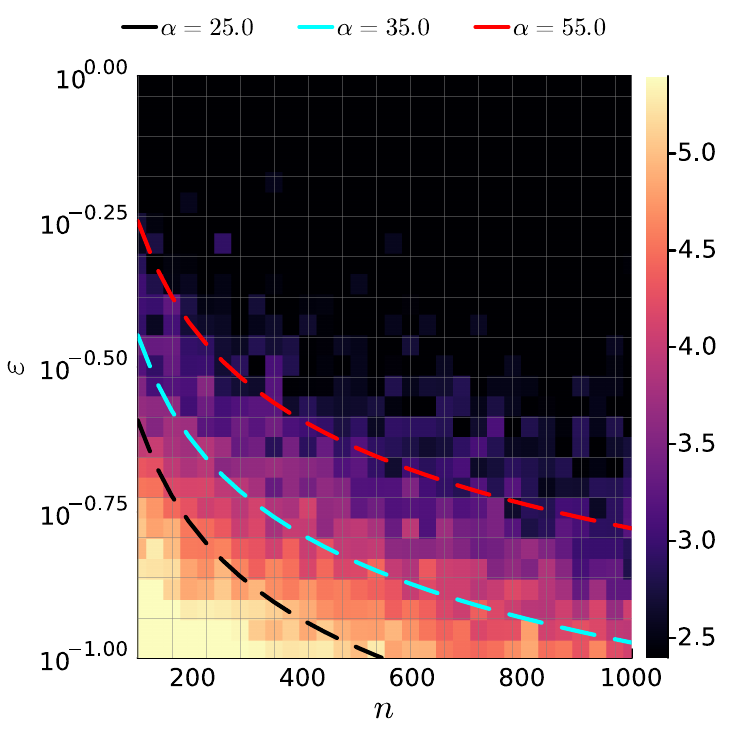}
    \vspace*{-1.75em}
    \caption{Illustration of the bounds in \cref{thm:convergence-rate}. Heatmap of $\dttinf(\Xc, \X)$ error for $\eps$ vs. $n$ in \cref{exp:convergence}.}
    \label{fig:threshold}
    \vspace*{-0.75em}
\end{figure}

\begin{experiment}\label{exp:convergence}
    In order to examine the bound in \cref{thm:convergence-rate}, we generate $(\Av, \Xv) \sim \grdpg(\pr, \rho_n; p, q)$ where $\pr = \textup{Unif}(S^1)$ is the uniform distribution on the circle, $\rho_n = \log^4{n}/\sqrt{n}$, and $p,q=1$. For each $\eps, n$, the \eedpe{} graph $\M_\e(\Av)$ is generated via \eflip{} and $\Xc$ is computed using \cref{alg:pase}. The error $\dttinf(\Xc, \X)$ is averaged across $10$ iterations. The results are shown in \cref{fig:threshold}. The shaded curves, given by $\s(\alpha\epsilon)^2 = \log{n}/\sqrt{n\rho_n^2}$ for $\alpha\in\qty{25, 35, 55}$, are contours where \cref{eq:convergence-rate} is expected to be constant. As \cref{fig:threshold} shows, the bound is fairly tight. 
\end{experiment}


\subsection{Topological inference under \eflip{}}
\label{sec:tda}

While the previous sections have focused on estimating the latent positions of a GRDPG under \eedpe{} in the $\dttinf$ metric, in this section we examine how $\M_\e(\Av)$ can be used to extract meaningful topological and geometric information underlying $\X$. To this end, we use tools from topological data analysis (TDA). Owing to length constraints, we provide an accessible overview of TDA in \cref{sec:tda-appendix}. The main insight for this section comes from the following result, which is a simple consequence of \cref{prop:closure}.
\begin{corollary}
    Under the conditions of Theorem~\ref{prop:closure}, the map $\varphi_\e$ is a homeomorphism, and $\varphi_\e(\X) \simeq \X$.
    \label{cor:homeomorphism}
\end{corollary}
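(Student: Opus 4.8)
The plan is to verify the two claims in \cref{cor:homeomorphism} directly from the explicit formula for $\varphi_\e$ given in \cref{eq:eflip-embedding}. Recall that $\varphi_\e(\xv) = (\te,\ \se\rho_n^{1/2}\xv)\tr$, i.e., it prepends a fixed constant coordinate $\te$ and then scales $\xv$ by the positive constant $\se\rho_n^{1/2}$. The key observation is that, as long as $\e > 0$ (so that $\se > 0$, hence $\se\rho_n^{1/2} > 0$), this map is an affine embedding of $\R^d$ onto the affine slice $\{\te\} \times \R^d \subset \R^{d+1}$, and it is a composition of a scaling and a translation, both of which are homeomorphisms onto their images.

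\textbf{Step 1: $\varphi_\e$ is injective with continuous inverse on its image.} I would write $\varphi_\e = \iota \circ h_c$, where $h_c(\xv) = c\,\xv$ with $c = \se\rho_n^{1/2} > 0$ is a homothety of $\R^d$ (a homeomorphism of $\R^d$ onto itself, with inverse $h_{1/c}$), and $\iota: \R^d \to \R^{d+1}$, $\iota(\yv) = (\te, \yv)\tr$ is the inclusion into the affine hyperplane $H_\te \defeq \{\te\}\times\R^d$. The map $\iota$ is a homeomorphism onto $H_\te$ with continuous inverse given by the restriction of the projection $\R^{d+1} \to \R^d$ onto the last $d$ coordinates. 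Composing, $\varphi_\e: \R^d \to \varphi_\e(\R^d) = H_\te$ is a homeomorphism, with explicit inverse $\varphi_\e^{-1}(t, \yv) = \se^{-1}\rho_n^{-1/2}\,\yv$; both $\varphi_\e$ and $\varphi_\e^{-1}$ are visibly continuous (indeed affine-linear), establishing that $\varphi_\e$ is a homeomorphism onto its image.

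\textbf{Step 2: $\varphi_\e(\X) \simeq \X$.} Here $\X = \{X_1,\dots,X_n\} \subset \R^d$ is viewed as a finite point set (equivalently, its support, or the underlying topological space relevant for persistent homology), and $\simeq$ denotes homeomorphism. Since $\varphi_\e$ restricted to $\X$ is a continuous bijection onto $\varphi_\e(\X)$ with continuous inverse (the restriction of $\varphi_\e^{-1}$ from Step 1), $\varphi_\e|_{\X}: \X \to \varphi_\e(\X)$ is a homeomorphism of subspaces, so $\varphi_\e(\X) \simeq \X$. (If one instead intends $\X$ as the matrix of latent positions, the statement is just that the row-supports are homeomorphic; the argument is identical.)

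The main thing to be careful about—rather than a genuine obstacle—is the degenerate case $\e = 0$: then $\se = 0$, the scaling factor collapses, and by the Corollary following \cref{prop:closure} the map $\varphi_\e$ is constant and certainly not a homeomorphism. So the statement implicitly requires $\e > 0$ (consistent with the phrasing ``under the conditions of \cref{prop:closure}'' together with the content of that theorem being nontrivial only for $\e > 0$), and I would state this hypothesis explicitly. A second minor point is interpreting what ``$\X$'' denotes topologically; since GRDPG latent positions are an i.i.d.\ sample from $\pr$, for the downstream TDA applications the relevant object is either the finite sample or $\supp(\pr)$, and the affine homeomorphism $\varphi_\e$ carries either one onto its image faithfully, preserving all geometric and topological structure (distances are merely rescaled by the constant $c = \se\rho_n^{1/2}$, which rescales filtration parameters but leaves persistence diagrams unchanged up to this known scaling). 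No hard estimates are needed; the proof is a one-line consequence of $\varphi_\e$ being an injective affine map with nonzero scaling.
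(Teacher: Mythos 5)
Your proof is correct and takes essentially the same approach as the paper, whose entire justification is the one-line remark that ``$\varphi_\e: \Rd \to \R^{d+1}$ is an affine map'' (immediately following the corollary in \cref{sec:tda}). You spell out the implicit content of that remark --- that the map is an \emph{injective} affine embedding onto the affine slice $\{\te\}\times\R^d$, and hence a homeomorphism onto its image with explicit affine inverse. Your explicit flagging of the degenerate case $\e = 0$ (where $\se = 0$ collapses $\varphi_\e$ to a constant map, so the statement fails) is a genuinely useful clarification that the paper elides; the paper's phrasing ``affine map'' alone does not suffice, since affine maps need not be injective, and \cref{prop:closure} is stated for $\e \ge 0$. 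Your remark about which topological object ``$\X$'' denotes (finite sample vs.\ $\supp(\pr)$) is likewise a reasonable reading not addressed in the text, and either interpretation is handled by the same argument.
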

In simpler terms, \cref{cor:homeomorphism} states that, in view of \cref{prop:closure}, the latent positions of $\M_\e(\Av)$ are related to those of $\Av$ by deformations which only allow stretching/squeezing/bending. The result is immediately obvious since $\varphi_\e: \Rd \to \R^{d+1}$ is an affine map. Notably, homeomorphisms preserve topological structures, as shown for spectral embeddings in \cref{fig:eflip-circle}.

Persistent homology forms the backbone of most TDA routines, where multiscale  geometric and topological  features underlying a collection of points $\X \in \Rnd$ are encoded in an object called a \textit{persistence diagram} $\dgm(\X)$. In a nutshell, $\dgm(\X)$ is a collection of points in the upper-half plane ${\Omega = \qty{(b, d) \in \R^2: b \le d}}$, where each point $(b, d) \in \dgm(\X)$ with $b < d$ represents the birth time, $b$, and death time, $d$, of a topological feature in $\X$. The topological features are indexed by the dimension of the feature, e.g., $0d$ features are connected components or `clusters', $1d$ features are loops, $2d$ features are holes, and so on. Given two persistence diagrams $D_1, D_2 \subset \Omega$, the \textit{bottleneck distance} $\w(D_1, D_2)$ quantifies the similarity between the topological features in the two diagrams. 

The following result shows that the persistence diagram $\dgm(\Xc/\ch{\rho}_n)$ is a consistent estimator of $\dgm(\X)$.
\begin{theorem}\label{thm:bottleneck-convergence}
    Under the conditions of \cref{thm:convergence-rate}, let $D_n = \dgm(\X)$ and $\ch{D}_n = \dgm(\Xc/\sqrt{\ch{\rho}_n})$. Then, with probability at least $1 - O(n^{-1})$,
    \begin{align}
        \Winf\qty(\ch{D}_n, D_n) = O\qty( \frac{\log{n}}{\sqrt{ n\se^4 \rho_n^2 }} ).\label{eq:bottleneck-convergence}
    \end{align}
\end{theorem}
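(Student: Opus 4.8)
The plan is to reduce \cref{thm:bottleneck-convergence} to the $\dttinf$-rate of \cref{thm:convergence-rate} by way of the bottleneck stability theorem for persistence diagrams (reviewed in \cref{sec:tda-appendix}). Stability asserts that, for two finite point clouds $P, Q \subset \Rd$ of equal cardinality and in each homological degree, $\Winf(\dgm(P), \dgm(Q))$ is controlled---up to a universal constant that depends only on the filtration convention (\v{C}ech versus Vietoris--Rips)---by the Gromov--Hausdorff distance $d_{GH}(P, Q)$, which in turn is dominated by the Hausdorff distance whenever $P$ and $Q$ lie in a common ambient space. So it suffices to bound a suitable Hausdorff distance between (canonical representatives of) $\Xc/\sqrt{\ch{\rho}_n}$ and $\X$ by $\dttinf(\Xc/\sqrt{\ch{\rho}_n}, \X)$.

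To carry this out, write $\widetilde\Y = \Xc/\sqrt{\ch{\rho}_n}$, which is already a canonical spectral representative since $\Xc = \spec(\Ac; d) = \Uv\abs{\L}^{1/2}$, and $\widetilde\X = \X\Qx\inv = \U_\X\abs{\L(\X)}^{1/2}$; regard the rows of each as a point cloud in $\Rd$. Let $\Ov \in \o(d) \cap \o(p,q)$ be the alignment achieving the minimum in the definition of $\dttinf(\Xc/\sqrt{\ch{\rho}_n}, \X)$. Because $\Ov \in \o(d)$ is a genuine Euclidean isometry, the clouds $\widetilde\Y$ and $\widetilde\Y\Ov$ have identical persistence diagrams, while the natural row-correspondence between $\widetilde\Y\Ov$ and $\widetilde\X$ matches each pair of points within Euclidean distance $\ttinf{\widetilde\X - \widetilde\Y\Ov} = \dttinf(\Xc/\sqrt{\ch{\rho}_n}, \X)$. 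Hence the Hausdorff distance between these two clouds---and therefore $d_{GH}$ of the original pair---is at most $\dttinf(\Xc/\sqrt{\ch{\rho}_n}, \X)$, and stability yields the deterministic inequality
\[
    \Winf\big(\,\dgm(\Xc/\sqrt{\ch{\rho}_n}),\; \dgm(\X)\,\big) \;\le\; C\cdot \dttinf\big(\Xc/\sqrt{\ch{\rho}_n},\, \X\big)
\]
for a universal constant $C$. The theorem then follows immediately: on the event of probability $1 - O(n^{-1})$ supplied by \cref{thm:convergence-rate}, the right-hand side is $O\big(\log n / \sqrt{n\se^4\rho_n^2}\big)$, and since the displayed bound is purely deterministic no further probabilistic accounting is needed.

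The one genuinely delicate point---where I would be most careful---is that persistence diagrams are invariant under orthogonal transformations but \emph{not} under general indefinite-orthogonal ones, whereas GRDPG latent positions are identifiable only up to $\o(p,q)$. The resolution, to be fixed as a convention at the outset, is that $\dgm$ is applied throughout to canonical spectral representatives: $\Xc = \spec(\Ac; d)$ is already of the form $\Uv\abs{\L}^{1/2}$, and $D_n = \dgm(\X)$ is read as $\dgm(\X\Qx\inv)$, the identifiable object that the adjacency spectral embedding targets and that \cref{thm:convergence-rate} controls. With this convention the only residual ambiguity between the two clouds is the orthogonal factor $\Ov$ that $\dttinf$ already minimizes over---and that $\dgm$ ignores---so the reduction goes through; this is precisely why \cref{def:dttinf} restricts the minimization to $\o(d) \cap \o(p,q)$ rather than all of $\o(p,q)$. \cref{cor:homeomorphism} plays only a complementary, qualitative role here: it explains why the extra coordinate introduced by \eflip{} does not destroy the topology, whereas the quantitative rate rests entirely on metric stability together with \cref{thm:convergence-rate}.
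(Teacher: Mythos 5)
Your reduction via diagram stability (Theorem C.2) to the $\ell_{2,\infty}$ rate is the right move, and you correctly put your finger on the delicate point: $\dgm$ is invariant under $\o(d)$ but \emph{not} under general elements of $\o(p,q)$, while $\dttinf$ measures distance between the spectrally aligned representatives $\Y\Q_\Y^{-1}$ and $\X\Q_\X^{-1}\Ov$. Up to that point your argument establishes
\[
\Winf\!\big(\dgm(\Xc/\sqrt{\ch{\rho}_n}),\ \dgm(\X\Q_\X^{-1})\big) \;\le\; \dttinf\!\big(\Xc/\sqrt{\ch{\rho}_n},\, \X\big),
\]
which is correct. The gap is in how you close the remaining distance to $D_n = \dgm(\X)$: you propose to ``read $D_n$ as $\dgm(\X\Q_\X^{-1})$'' as a convention, but this silently changes the statement of the theorem rather than proving it. The theorem asserts a bound for $\dgm(\X)$, the diagram of the actual latent positions, and since $\Q_\X$ is in $\o(p,q)$ but \emph{not} in general in $\o(d)$ (only the population alignment $\Q_\xi$ lies in $\o(p,q)\cap\o(d)$, per \cref{fact:Qz}), the diagrams $\dgm(\X\Q_\X^{-1})$ and $\dgm(\X)$ genuinely differ.

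The paper closes this gap quantitatively rather than by fiat. After the step you carried out, it inserts $\dgm(\X\Q_\xi^{-1}\O_\xi)$ and bounds two further terms: (a) $\Winf\!\big(\dgm(\X\Q_\X^{-1}), \dgm(\X\Q_\xi^{-1}\O_\xi)\big) \le \ttinf{\X}\,\opnorm{\Q_\X^{-1} - \Q_\xi^{-1}\O_\xi}$, which is $O(\sqrt{\log n/n})$ by \cref{lemma:subspace-alignment}; and (b) $\Winf\!\big(\dgm(\X\Q_\xi^{-1}\O_\xi), \dgm(\X)\big) = 0$ because $\Q_\xi^{-1}\O_\xi \in \o(d)$. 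The extra term (a) is dominated by your main term $O(\log n / \sqrt{n\se^4\rho_n^2})$ since $\se^4\rho_n^2 \le 1$, so the final rate is unchanged --- but the sample-to-population passage for the indefinite alignment is a necessary step of the proof, not a convention one may declare. As written, your proposal proves a bound for $\dgm(\X\Q_\X^{-1})$ in place of $\dgm(\X)$; with this additional triangle-inequality argument spliced in, it becomes a complete proof of the theorem as stated.
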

\vspace*{-0.75em}
\cref{thm:bottleneck-convergence} guarantees that $\Xc$ consistently recovers the essential topological features of $\X$; this includes connected components, loops, and other salient geometric and topological features. Since the $0$th order persistence diagram precisely captures the clusters in the underlying data and by noting that the rate in \cref{eq:bottleneck-convergence} is the same as in \cref{eq:convergence-rate}, the privacy-adjusted spectral embedding $\Xc$ provides the same privacy-utility tradeoff guarantees as in \cref{sec:convergence} and can recover more general topological features of $\X$.

Although we do not pursue it in this work, the convergence in $d_{2, \infty}$ metric in \cref{thm:convergence-rate} can be used to show that ``flat clustering'' methods like $k$-means are consistent under \eeldpe{} when the latent positions form well-separated clusters (see, e.g., \cite[Theorem~6]{lyzinski2014perfect}). Similarly, we believe that it should be possible to establish the consistency of DBSCAN when the distribution generating the latent positions, $\pr$, is sufficiently regular, e.g., admits a H\"{o}lder density. In other words, \cref{thm:convergence-rate} could be used to show the consistency of other downstream inferential procedures under \eeldpe{} when necessary conditions are met.



\vspace*{-0.6em}
\section{Experiments}
\label{sec:experiments}
\vspace*{-0.4em}


\begin{figure*}[ht!]
    \centering
    \vspace*{-0.5em}
    \ \hspace*{-1.5em}
    \begin{subfigure}[c]{0.97\textwidth}
    \begin{subfigure}[t]{0.24\textwidth}
        \centering
        \includegraphics[height=1.1\textwidth]{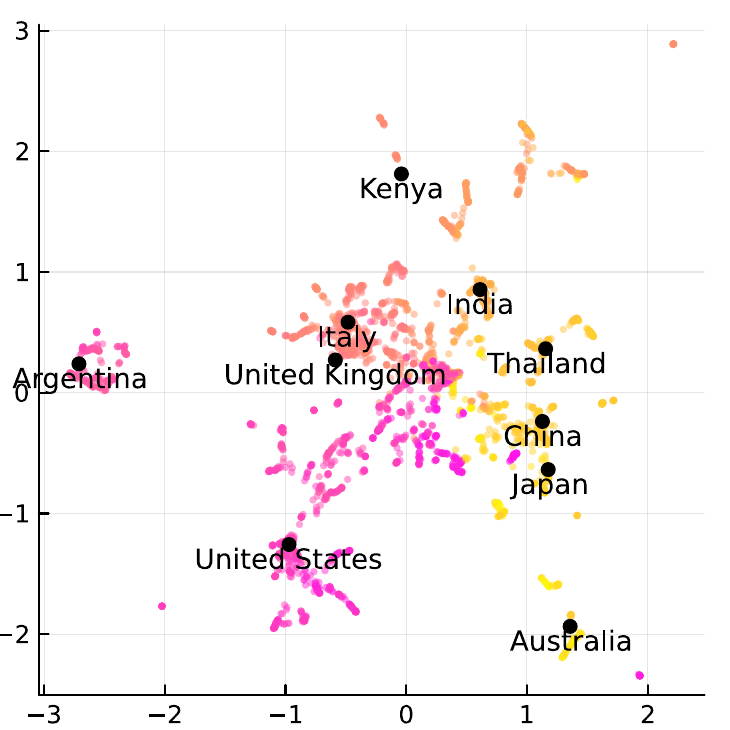}
        \caption{$\e=\infty$}
    \end{subfigure}
    \begin{subfigure}[t]{0.24\textwidth}
        \centering
        \includegraphics[height=1.1\textwidth]{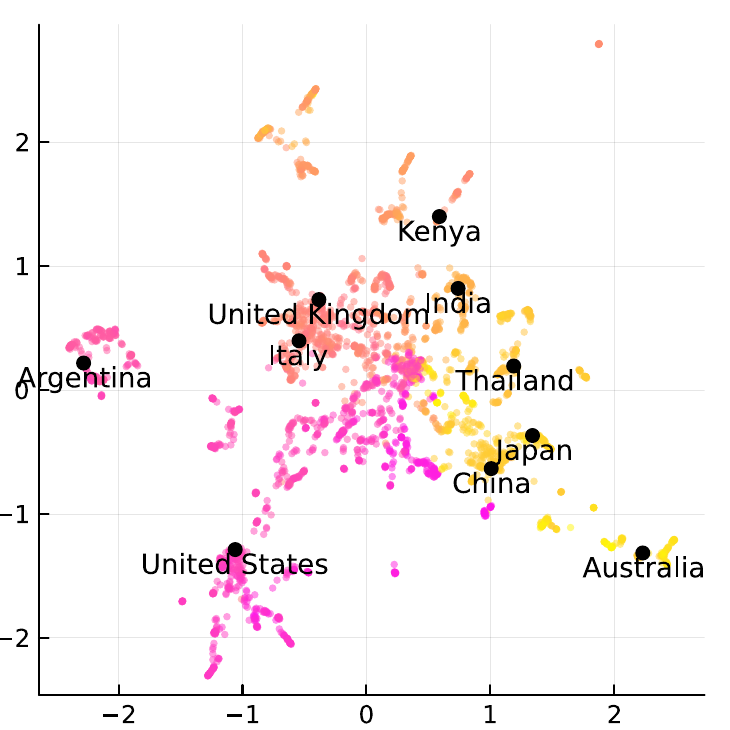}
        \caption{$\beta=0.33$}
    \end{subfigure}
    \begin{subfigure}[t]{0.24\textwidth}
        \centering
        \includegraphics[height=1.1\textwidth]{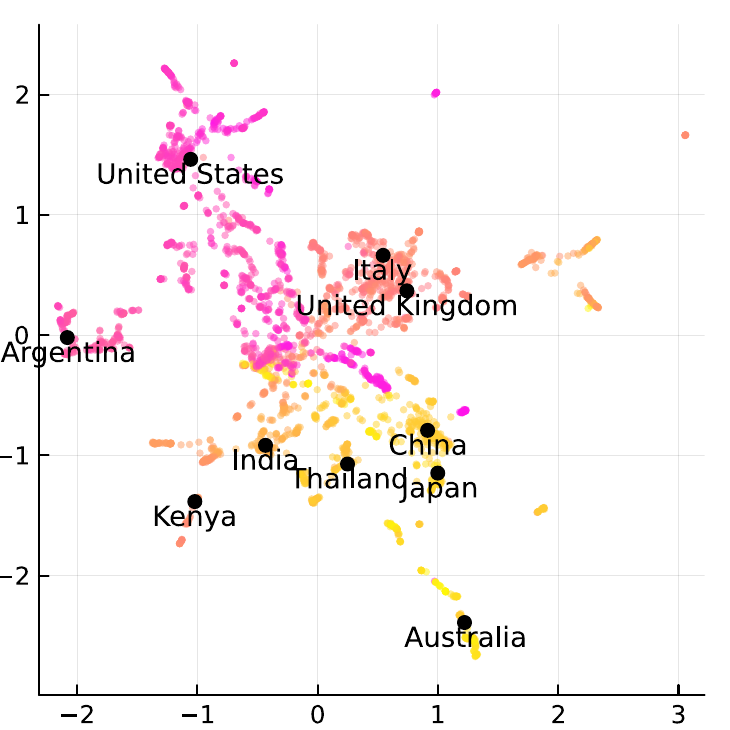}
        \caption{$\beta=0.66$}
    \end{subfigure}
    \begin{subfigure}[t]{0.24\textwidth}
        \centering
        \includegraphics[height=1.1\textwidth]{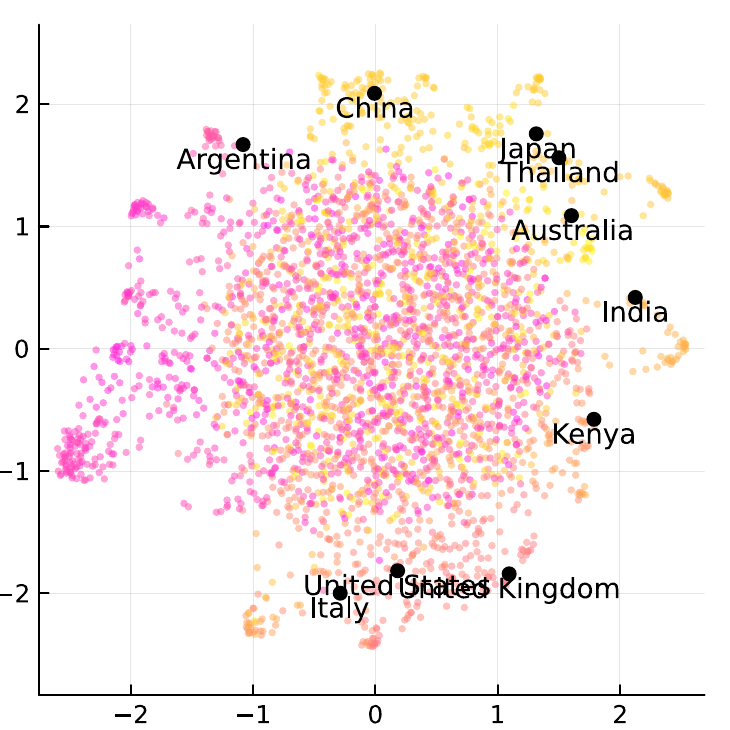}
        \caption{$\beta=1.1$}
    \end{subfigure}
    \end{subfigure}%
    \begin{subfigure}[c]{0.025\textwidth}
        \includegraphics[height=12\textwidth]{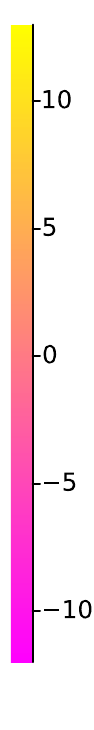}
    \end{subfigure}
    \vspace*{-0.4em}
    \caption{Spectral embedding followed by UMAP for the OpenFlights network in \cref{exp:flights}.}
    \label{fig:flights}
    \vspace*{1em}
\end{figure*}

\begin{experiment}\label{exp:flights}
    To examine recovery of latent positional information in real-world data, we use the OpenFlights dataset 
    where $\Av$ represents direct flight connections between $n=3254$ airports. Figure~\ref{fig:flights}\,(a) displays the scatterplot of $\mathfrak{P}(\X)$, obtained using spectral embedding of $\Av$ followed by UMAP---which uses topological information to perform unsupervised dimension reduction \citep[Section~2]{mcinnes2018umap2}. Each airport $\qty{i}$ is colored based on its timezone. Although the unweighted nature of the graph implies that all connections are treated equally, and disregards any additional information such as distance, flight duration, or frequency---$\mathfrak{P}(\X)$ recovers the subtle geographic information associated with the airports, e.g., Japan is close to China and Italy is close to the United Kingdom. 
    
    \vspace*{-0.4em}
    \noindent Since $\se^2 \approx \eps$ in the admissible regime of \cref{eq:bottleneck-convergence}, taking $\rho_n$ to be the sparsity of $\Av$, $\beta \in \qty{\tfrac{1}{3}, \tfrac{2}{3}, \tfrac{11}{10}}$, and
    $$
    \eps_n^2 = {\log{n}/\rho_n^2 n^\beta},
    $$
    Figures~\ref{fig:flights}\,(b,c,d) show the plots of $\mathfrak{P}(\Xc)$ obtained using $\M_\e(\Av)$. For moderately small $\eps$,
    Figures~\ref{fig:flights}\,(b,c) show that $\mathfrak{P}(\Xc)$ preserves (to a large extent) the salient relationships in Figure~\ref{fig:flights}\,(a). However, for very small $\eps$, as shown in Figure~\ref{fig:flights}\,(d),  $\mathfrak{P}(\Xc)$ no longer captures this information.
\end{experiment}


\begin{figure*}[!ht!]
    \centering
    \begin{subfigure}[t]{0.3\textwidth}
        \includegraphics[width=\textwidth]{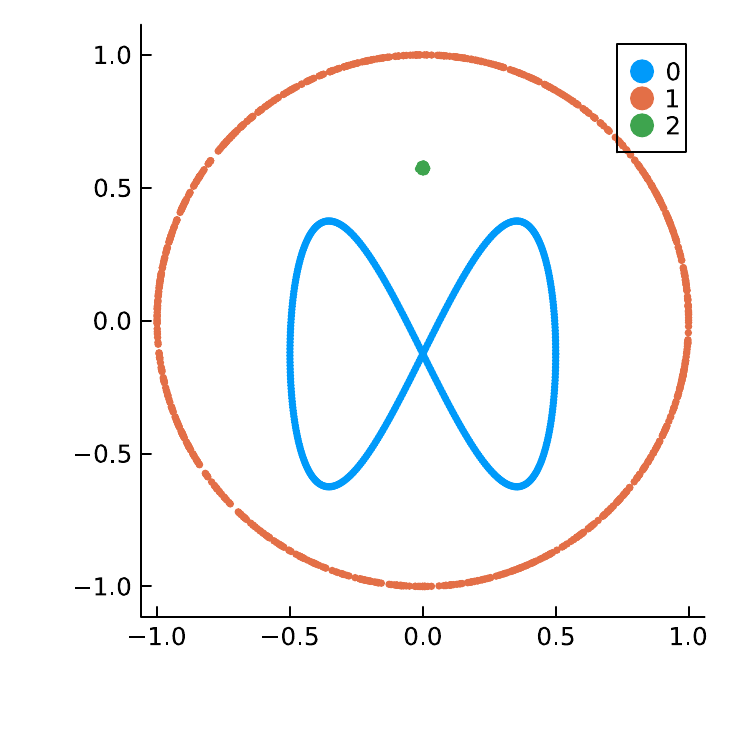}
        \vspace*{-2.0em}
        \caption{Latent space}
    \end{subfigure}
    \begin{subfigure}[t]{0.3\textwidth}
        \includegraphics[width=\textwidth]{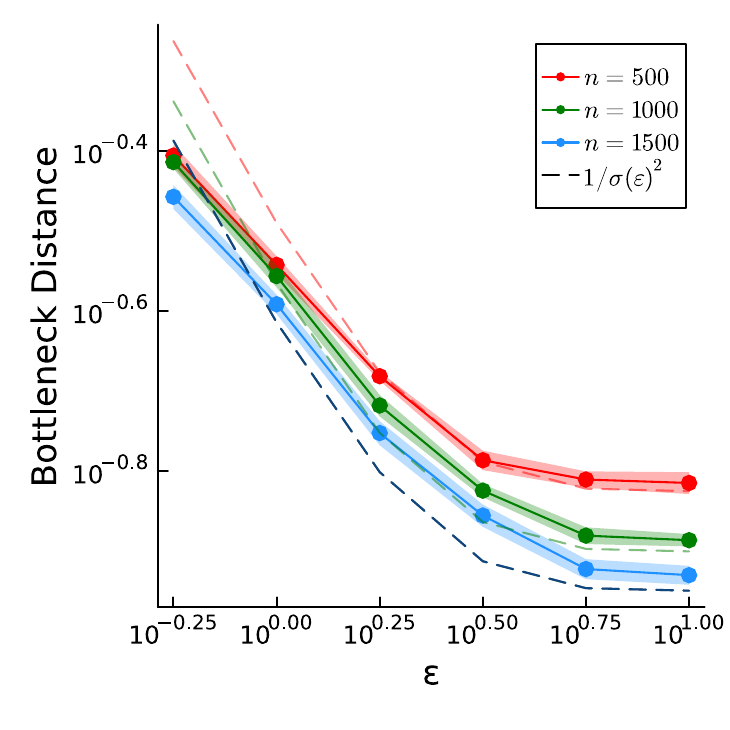}
        \vspace*{-2.0em}
        \caption{Bottleneck distance}
    \end{subfigure}
    \begin{subfigure}[t]{0.3\textwidth}
        \includegraphics[width=\textwidth]{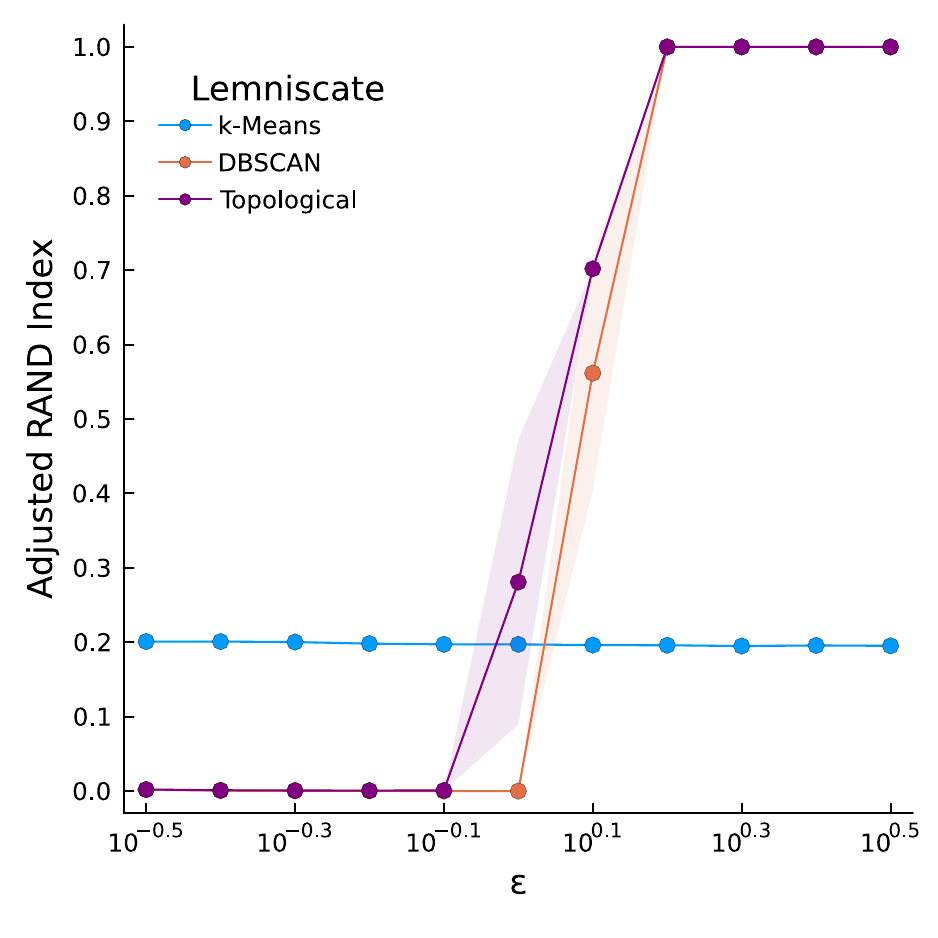}
        \vspace*{-2.0em}
        \caption{Clustering Accuracy}
    \end{subfigure}
    \vspace*{-0.33em}
    \caption{Recovery of topological information for the setup in \cref{exp:lemniscate}.}
    \label{fig:lemniscate}
    \vspace*{1em}
\end{figure*}

\begin{experiment}\label{exp:lemniscate}
    In this experiment, we illustrate the benefit of the topological perspective by employing persistence diagrams to perform \eedpe{} community detection. We generate $N=4n$ points for $\X$ from the following shape: (i) $2n$ are sampled uniformly from a circle, (ii) $n$ points sampled from a Lemniscate (which looks like "$\infty$"), and (iii) $n$ points sampled from a cluster inside the circle but disjoint from the rest. The shape contains $3$ non-trivial connected components, and has three loops (order-$1$ homological features): one from the circle, and two from the Lemniscate. See~\cref{fig:lemniscate}~(a). We then generate $\Av \sim \grdpg(\X, 1; 2, 0)$.

    For a range of $\eps$, we obtain the persistence diagram $\ch{D}^\e$ from $\M(\Av)$ and compute the bottleneck distance $W_\infty(D, \ch{D}^\e)$. The results are averaged across $10$ iterations. \cref{fig:lemniscate}~(b) shows the convergence in the bottleneck distance for $n \in \qty{500, \dots, 1500}$, and the black dashed line plots the r.h.s. of \cref{eq:bottleneck-convergence} when $n=1500$.

    Clustering for $\X$ by itself is particularly challenging for algorithms like $k$-means since $\X$ doesn't admit ``flat'' clusters. To address this limitation, we use a topological clustering algorithm (which is a simplified variant of the algorithm in \cite{kurlin2016fast}) and is described in Algorithm~2 in \cref{sec:tda-appendix}. Figure~\ref{fig:lemniscate}\,(c) plots the Adjusted RAND Index between the true labels and the predicted labels obtained using: (i) the topological approach, (iii) $k$-means when $k=3$ is provided as input, and (iii) using DBSCAN with \texttt{minPts} to be the size of the smallest cluster. The results illustrate how \cref{thm:bottleneck-convergence} enables access to using more tailored topological approaches to recover the underlying clusters.
\end{experiment}


\begin{figure*}[!h!]
    \centering
    \vspace*{-1em}
    \begin{subfigure}[t]{0.3\textwidth}
        \includegraphics[width=\textwidth]{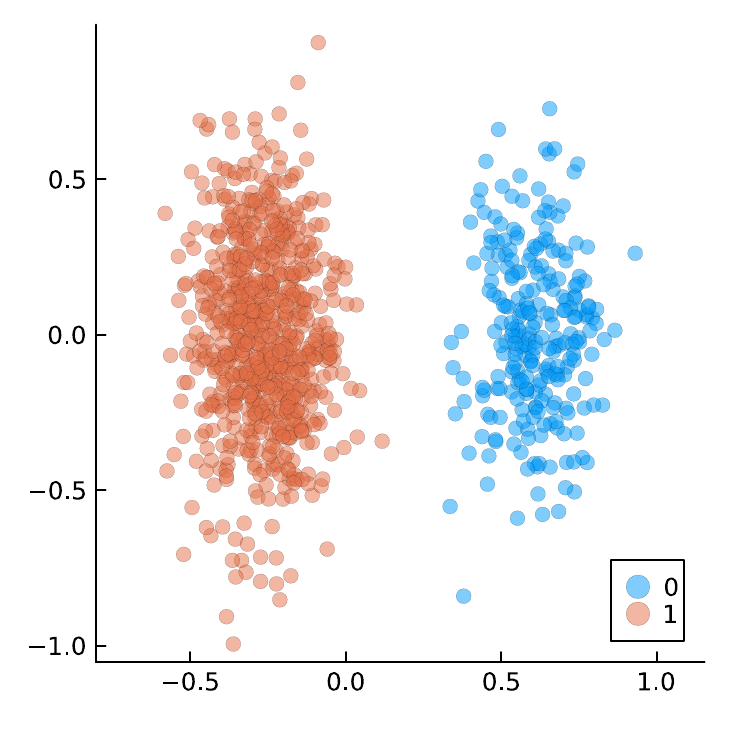}
        \vspace*{-2.0em}
        \caption{Latent space}
    \end{subfigure}
    \begin{subfigure}[t]{0.3\textwidth}
        \includegraphics[width=\textwidth]{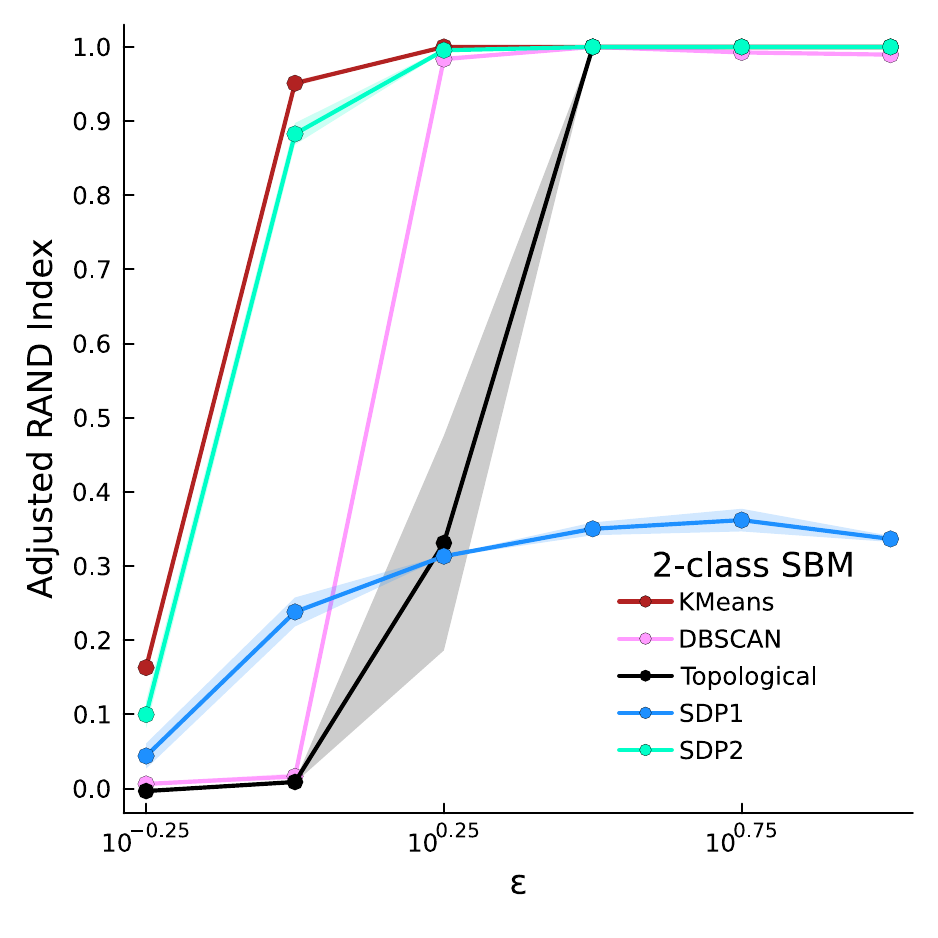}
        \vspace*{-2.0em}
        \caption{Clustering Accuracy}
    \end{subfigure}
    \begin{subfigure}[t]{0.3\textwidth}
        \includegraphics[width=\textwidth]{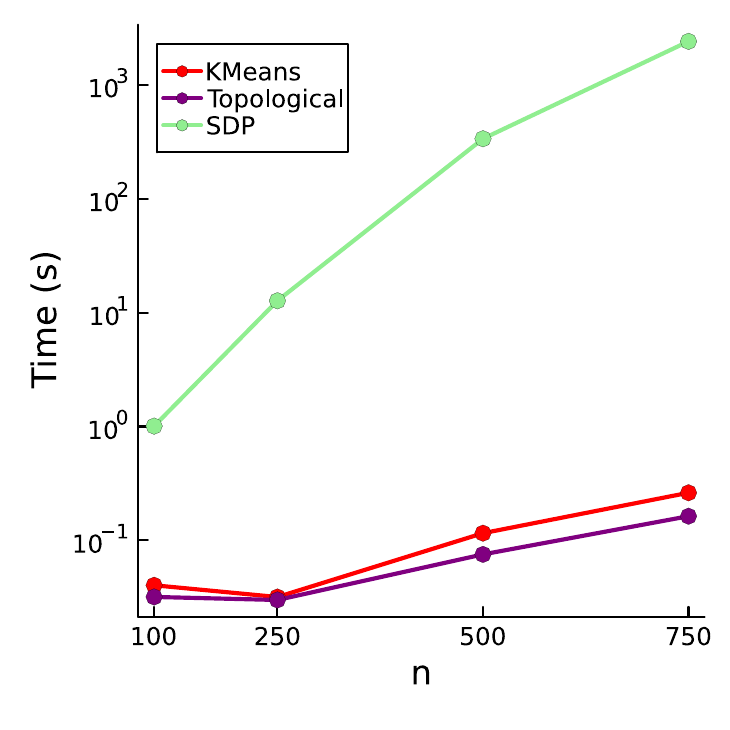}
        \vspace*{-2.0em}
        \caption{Clustering Accuracy}
    \end{subfigure}
    \vspace*{-0.33em}
    \caption{Performance of spectral methods alongside SDP-based methods in \cref{exp:clustering}.}
    \label{fig:clustering}
    \vspace*{-0.5em}
\end{figure*}

\begin{experiment}\label{exp:clustering}
    It is well-known that spectral estimators are suboptimal for several inferential procedures at the information-theoretic thresholds for sparse graphs. Many  semi-definite programming (SDP) methods have been proposed in literature to address this limitation for specific models \cite{abbe2017community}. Nevertheless, this approach comes at the expense of computational time and a careful selection of hyperparameters. To benchmark the performance, we generate $\Av$ from an imbalanced $2$-class SBM with imbalance ratio $1/3$. See \cref{fig:clustering}\,(a).
    
    SDP-solvers are particularly well suited for SBMs. We use two SDP-approaches: (i) SDP$_1$ from \cite{hajek2016achieving} which works for balanced SBMs, and (ii) SDP$_2$ from \cite[Eq.~(2)]{nguyendifferentially} for imbalanced SBMs with the imbalance ratio input to $1/3$. Figure~\ref{fig:clustering}\,(b) plots the Adjusted RAND Index between the true labels and the predicted labels. Here, the spectral approaches do about as well as SDP$_2$ when the imbalance ratio is known. On the other hand SDP$_1$ is suboptimal compared to the other methods.  Figure~\ref{fig:clustering}\,(c), shows the runtime of the methods. Notably, $k$-means and Algorithm~2 use sparse eigensolvers and scale well, whereas SDP methods can become prohibitively expensive for large graphs.
\end{experiment}



\vspace*{-0.5em}
\section{Conclusion}
\label{sec:conclusion}
\vspace*{-0.5em}

In this work, we address the problem of estimating the latent positions of a GRDPG under \eeldpe{} constraints. If $(\Av, \Xv) \sim \grdpg(\pr, \rho_n; p, q)$ is a GRDPG with latent positions $\X$, we showed that the \eedpe{} synthetic graph $\M_\e(\Av)$ obtained via \eflip{} can be used to construct a privacy-adjusted spectral embedding $\ch{\X}$ that achieves near-minimax optimal rates for sufficiently dense graphs. Additionally, we showed that the class of GRDPGs is closed under \eflip{}, with the latent positions of $\M_\e(\Av)$ related to those of $\Av$ by a simple geometric transformation. This perspective enabled us to tackle more nuanced inferential tasks, such as recovering topological information through persistence diagrams. For future work, it would be interesting to investigate the optimality of recovering the latent positions $\X$ under central $\e$-DP constraints, and to extend the analysis to more general graphical models such as graphons.
\vfill\pagebreak

\endgroup

\clearpage
\vfill\pagebreak
\renewcommand{\bibname}{References}
\renewcommand{\bibsection}{\section*{\bibname}}
\bibliography{refs}
\section*{Checklist}

\begin{enumerate}

\item For all models and algorithms presented, check if you include:
\begin{enumerate}
\item A clear description of the mathematical setting, assumptions, algorithm, and/or model. [\textbf{Yes}]
\item An analysis of the properties and complexity (time, space, sample size) of any algorithm. [\textbf{No}] \textit{The time/space complexity of the algorithms are standard.}
\item (Optional) Anonymized source code, with specification of all dependencies, including external libraries. [\textbf{Yes}]
\end{enumerate}

\item For any theoretical claim, check if you include:
\begin{enumerate}
\item Statements of the full set of assumptions of all theoretical results. [\textbf{Yes}]
\item Complete proofs of all theoretical results. [\textbf{Yes}]
\item Clear explanations of any assumptions. [\textbf{Yes}]
\end{enumerate}

\item For all figures and tables that present empirical results, check if you include:
\begin{enumerate}
\item The code, data, and instructions needed to reproduce the main experimental results (either in the supplemental material or as a URL). [\textbf{Yes}]
\smallskip
\begin{center}
        \textcolor{magenta}{\small \url{https://github.com/sidv23/grdpg-ldp}}
\end{center}
\bigskip
\item All the training details (e.g., data splits, hyperparameters, how they were chosen). [\textbf{Yes}]
        \item A clear definition of the specific measure or statistics and error bars (e.g., with respect to the random seed after running experiments multiple times). [\textbf{Yes}]
        \item A description of the computing infrastructure used. (e.g., type of GPUs, internal cluster, or cloud provider). [\textbf{No}]
\end{enumerate}

\item If you are using existing assets (e.g., code, data, models) or curating/releasing new assets, check if you include:
\begin{enumerate}
\item Citations of the creator If your work uses existing assets. [\textbf{Not Applicable}]
\item The license information of the assets, if applicable. [\textbf{Not Applicable}]
\item New assets either in the supplemental material or as a URL, if applicable. [\textbf{Not Applicable}]
\item Information about consent from data providers/curators. [\textbf{Not Applicable}]
\item Discussion of sensible content if applicable, e.g., personally identifiable information or offensive content. [\textbf{Not Applicable}]
\end{enumerate}


\end{enumerate}
\clearpage
\newpage

\onecolumn
\appendix
\allowbreak
\allowdisplaybreaks


\begingroup
\thispagestyle{plain}
\vspace*{1em}
\aistatstitle{
  {\scshape Appendix}\\
  {\large Signal Recovery from Random Dot-Product Graphs Under Differential Privacy}
}
\endgroup
\vspace*{-5em}


\begingroup
\vspace*{-1.2em}
\renewcommand{\contentsname}{}
\setcounter{tocdepth}{1}
\tableofcontents
\bigskip

A summary of notations is provided in \cref{tab:notation}. Throughout, we use $a_n \lesssim b_n$ and $a_n = O(b_n)$ to denote $a_n \le Cb_n$ for some constant $C > 0$ which may change from line to line but \textit{does not} depend on $\eps$ or $n$. We use $\op(b_n)$ and $\littleop(b_n)$ to denote the usual Mann-Wald asymptotic order for random variables, i.e., $X_n = \op(a_n)$ if $\pr(\abs{X_n/a_n} > C_r) \le n^{-r}$, and $X_n = \littleop(a_n)$ if $\limsup_n \pr(|X_n/a_n| > C) = 0$ for all $C > 0$.

\begin{table}[h!]
    \centering
    \small
    \caption{Notations}\label{tab:notation}
    \vspace*{-10pt}
    \begin{tblr}{width=\linewidth, colspec={X[-1,l] X[3,l]}}
        \toprule[2pt]

        $\o(d),\I[d]$ & The group of $d \times d$ orthogonal matrices, and the identity matrix\\
        $\o(p, q), \I[p,q]$ & The group of $d \times d$ indefinite orthogonal matrices, and the indefinite identity matrix\\
        $\bbB(n)$ & The set of binary, symmetric $n \times n$ matrices $\subset \qty{0,1}^{n\times n}$\\
        $\bbU(n, d)$ & The set (Stiefel manifold) of $n\times d$ matrices with orthonormal columns satisfying $\Uv\tr\Uv = \I[d]$\\
        
        \hline[dashed]
        
        $\L(\Av)$ & The diagonal matrix of eigenvalues of a square matrix $\Av$\\
        $\opnorm{\X}$ & The operator/spectral norm of matrix $\X$ given by $\sqrt{\eigen[\max](\X{}\tr\X)}$\\
        $\ttinf{\X}$ & The two-to-infinity norm of matrix $\X \in \R^{n \times d}$ given by $\max_{1 \le i \le n}\norm{\Xv_{i, *}}_2$\\
        $\norm{\uv}$ & The $\ell_2$-norm of a vector $\uv \in \Rd$\\
        
        \hline[dashed]
        
        $(\Av, \X) \sim \grdpg\qty(\pr, \rho_n; p, q)$ & GRDPG with $(p,q)$-admissible measure $\pr$, sparsity $\rho_n\le 1$ and latent positions $\X \in \Rnd$\\ 
        $\P \equiv \P_\X$ & The expected adjacency matrix of $\Av$, i.e., $\P\equiv\P_\X = \X\I[p, q]\X\tr$\\
        $\D_\X$ & The \textit{empirical} second moment matrix of $\X$, i.e., $\D_\X  = $ for $\xi \sim \pr$\\
        $\D_\xi,\D_{\pr}$ & The \textit{population} second moment matrix of $\pr$, i.e., $\D_\xi \equiv \D_{\pr} = \E(\xi \xi\tr)$ for $\xi \sim \pr$\\
        $\hX = \spec(\Av; d)$ & The adjacency spectral embedding of $\Av$ in $\Rd$; see \cref{eq:xhat-spec}\\
        $\O_\X$ & The $\o(d)$ matrix aligning $\hX$ to $\U_\P\abs{\L_\P}^{1/2}$; see \cref{fact:Ox}\\
        $\Qx$ & The $\o(p, q)$ matrix aligning $\X$ to $\U_\P\abs{\L_\P}^{1/2}$; see \cref{fact:Qx}\\
        $\Qz$ & The population analogue of $\Qx$; see \cref{fact:Qz}\\
        $\oz$ & The matrix aligning $\Qx$ to $\Qz$; see \cref{fact:Oz}\\
        
        \hline[dashed]

        $\pi(\e), \s(\e), \tau(\e)$ & Parameters of \eflip{}; see \cref{eq:se-te}\\
        $\M_\e(\Av)$ & The \eedpe{} synthetic copy of $\Av$ under \eflip{}\\
        $\ch{\Av}, \ch{\X}, \ch{\rho}_n$ & {The privacy-adjusted adjacency matrix, spectral embedding and estimated sparsity 
        }\\

        \hline[dashed]
        
        $\dgm(\X)$ & Persistence diagram associated with the rows of $\X$\\
        $\w(\sfD_1, \sfD_2)$ & Bottleneck distance between two persistence diagrams $\sfD_1, \sfD_2$\\
        $d_H(\bbX, \bbY)$ & Hausdorff distance between two compact sets $\bbX$ and $\bbY$\\
        \bottomrule[2pt]        
    \end{tblr}
\end{table}

\endgroup


\section{Spectral alignment matrices}
\label{sec:alignment}

There are several spectral alignment matrices which appear in the proofs of the main results. This section provides a brief overview of these matrices and their properties. See, also, \cite[Tables~1~\&~2]{agterberg2020nonparametric} and \cite[Section~3]{agterberg2020two} for a comprehensive overview.

Let $\xi \sim \pr$ be a random vector in $\Rd$ with distribution $\pr$ and let $\X \in \Rnd$ whose rows $X_1, X_2, \cdots, X_n$ are \iid{} copies of $\xi$. Following the setup in \cref{sec:results}:

\begin{enumerate}[label=(\roman*)]
    \item Let $\D_\xi := \E(\xi\xi\tr) \in \Rdd$ be the second-moment matrix associated with $\xi$ and let 
    $$
    \D_\X \defeq \frac 1n \X\tr\X \in \Rdd
    $$
    be the \textit{empirical} second-moment matrix associated with $\X$. 
    \item For the edge-probability matrix $\P = \rho_n\X \I[p,q] \X\tr \in [0, 1]^{n\times n}$ under sparsity $\rho_n$, let $\P \defeq \U_\P \L_\P \U_\P\tr$ be its spectral decomposition where $\L_\P$ contains the $d$ eigenvalues of $\P$ such that $\textup{sgn}(\L_\P) = \I[p,q]$ and $U_\P \in \bbU(n, d)$. Define 
    \begin{align}
        \t{\X} \defeq \U_\P \abs{\L_\P}^{1/2} \in \Rnd\label{eq:x-tilde-def}
    \end{align}
    as a surrogate for $\rho_n^{1/2}\X$.
    \item For the random graph $\Av \in \bbB(n)$ such that $\pr(\Av_{ij} = 1) = \P_{ij}$, let 
    $$
    \Av = \U \L \U\tr + \U_\perp \L_\perp \U_\perp
    $$
    be its spectral decomposition of $\Av$ where $\L \equiv \L_\Av$ contains the top-$d$ eigenvalues of $\Av$ \textit{by magnitude} and $\U \in \bbU(n, d)$ contains the corresponding eigenvectors. From \cref{def:ase}, 
    \begin{align}
        \hX \defeq \U \abs{\L}^{1/2} \in \Rnd \label{eq:x-hat-def}
    \end{align}
    is the \textit{adjacency spectral embedding} of $\Av$.
\end{enumerate}

\begin{remark}
    Note that the matrices $\hX$ and $\t{\X}$ depend on the sparsity via rescaling by a factor of $\rho_n^{1/2}$ whereas the matrix $\X$ does not.
\end{remark}

Comparing the expressions in \cref{eq:x-tilde-def} and \cref{eq:x-hat-def} and by noting that $\E(\Av) = \P$, one would expect that $\hX$ is close to $\t{\X}$. However, since $\U_\X, \U \in \bbU(n, d)$ arise from their respective spectral decompositions, they are unique only up to orthogonal transformations. The matrix $\O_\X$ is the orthogonal matrix that aligns $\hX$ to $\t{\X}$.

\begin{fact}\label{fact:Ox}
    $\O_\X \in \o(d)$ is the matrix which solves the orthogonal Procrustes problem:
    \begin{align}
        \O_\X \defeq \argmin_{\O \in \o(d)} \fnorm\big{\hX - \t{\X}\O}^2.\label{eq:w-def}
    \end{align}
\end{fact}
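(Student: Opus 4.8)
The assertion is the classical orthogonal Procrustes problem for the pair $\hX$, $\t{\X} = \U_\P\abs{\L_\P}^{1/2}$, so the plan is to reduce \cref{eq:w-def} to a linear trace maximization and apply the standard singular value decomposition (SVD) argument. First I would expand the objective: for every $\O \in \o(d)$, $\fnorm{\hX - \t{\X}\O}^2 = \fnorm{\hX}^2 + \fnorm{\t{\X}\O}^2 - 2\,\textup{tr}(\O\tr\t{\X}\tr\hX)$; since right multiplication by an orthogonal matrix preserves the Frobenius norm, $\fnorm{\t{\X}\O}^2 = \fnorm{\t{\X}}^2$ does not depend on $\O$, so minimizing over $\o(d)$ is equivalent to maximizing the linear functional $\O \mapsto \textup{tr}(\O\tr\t{\X}\tr\hX)$.

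Next I would take an SVD $\t{\X}\tr\hX = \mathbf{W}_1\boldsymbol{\Sigma}\mathbf{W}_2\tr$ with $\mathbf{W}_1, \mathbf{W}_2 \in \o(d)$ and $\boldsymbol{\Sigma} = \textup{diag}(\sigma_1, \dots, \sigma_d)$, $\sigma_i \ge 0$. Substituting and permuting cyclically, the objective equals $\sum_{i=1}^d \sigma_i \mathbf{Z}_{ii}$, where $\mathbf{Z} = \mathbf{W}_1\tr\O\,\mathbf{W}_2 \in \o(d)$. Because every diagonal entry of an orthogonal matrix has modulus at most $1$, this is bounded by $\sum_i \sigma_i = \textup{tr}(\boldsymbol{\Sigma})$, with equality precisely when $\mathbf{Z}$ fixes each coordinate having $\sigma_i > 0$. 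Taking $\mathbf{Z} = \I[d]$, i.e.\ $\O_\X = \mathbf{W}_1\mathbf{W}_2\tr$, attains this maximum, so the minimum in \cref{eq:w-def} is attained and $\O_\X = \mathbf{W}_1\mathbf{W}_2\tr$ is an admissible choice; alternatively, existence of a minimizer follows abstractly from compactness of $\o(d)$ and continuity of the objective.

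There is no genuine \textbf{obstacle} here; the only point worth recording is uniqueness of the minimizer, which holds exactly when $\t{\X}\tr\hX$ is nonsingular --- the generic situation in our regime, where $\hX$ and $\t{\X}$ both have rank $d$ and lie close to one another --- and may fail when $\t{\X}\tr\hX$ is rank deficient. This causes no difficulty, since every subsequent argument uses only the alignment property $\hX \approx \t{\X}\,\O_\X$ shared by any minimizer, so an arbitrary measurable selection suffices; the whole proof thus reduces to the elementary inequality $|\mathbf{Z}_{ii}| \le 1$ for orthogonal $\mathbf{Z}$.
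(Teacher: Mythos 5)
Your proof is correct and is the classical SVD argument for the orthogonal Procrustes problem. The paper itself provides no proof of this Fact --- note the $\defeq$ in \cref{eq:w-def}; it is presented purely as a \emph{definition} of the alignment matrix $\O_\X$, with the only nontrivial content being that the argmin over $\o(d)$ is attained, which your compactness remark (or, equivalently, the explicit construction $\O_\X = \mathbf{W}_1\mathbf{W}_2\tr$ from the SVD of $\t{\X}\tr\hX$) settles. Your observation about uniqueness is also accurate and consistent with how the paper later uses $\O_\X$ only through the alignment property $\hX \approx \t{\X}\O_\X$, so nothing downstream is affected by a potential selection.
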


The positions $\t{\X} \in \Rnd$ act as a surrogate for the rescaled latent positions $\rho_n^{1/2}\X$. Since $\t{\X}$ arises from $\P$---which admits an indefinite spectral decomposition, $\t{\X}$ and $\rho_n^{1/2}\X$ are related by an $\o(p,q)$ transformation.

\begin{fact}[{\cite[p.~1457]{rubin2022statistical}, \cite[Section~4]{agterberg2020two} and \cite[Section~3.1]{agterberg2020nonparametric}}]
    \label{fact:Qx}
    The matrix $\Qx \in \o(p,q)$ aligns $\t{\X}$ to $\rho_n^{1/2}\X$, i.e.,
    $$
    \rho_n^{1/2}\X\Qx\inv = \t{\X} = \U_\P \abs{\L_\P}^{1/2}.
    $$
    From \cite[Eq.~(1)]{agterberg2020two}, the expression for $\Qx$ is given by
    \begin{align}
    \Qx \defeq \qty(\frac{1}{n\rho_n}\abs{\L_\P})^{-1/2} \V\tr \D_\X\hpow,\label{eq:Qx}
    \end{align}
    where $\V \in \o(d)$ are the eigenvectors in the spectral decomposition $\D_\X\hpow \I[p,q] \D_\X\hpow = \V \L_2 \V\tr$.
\end{fact}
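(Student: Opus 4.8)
The plan is to write down $\Qx$ by hand, verify directly that $\rho_n^{1/2}\X\Qx\inv = \t{\X}$, deduce $\Qx \in \o(p,q)$ from the fact that this identity preserves the indefinite Gram matrix, and finally re-express $\Qx$ in the stated closed form. Throughout I work on the almost-sure event that $\X$ has full column rank $d$. Then $\X\I[p,q]$ and $\X\tr$ each have rank $d$, so $\P = \rho_n\X\I[p,q]\X\tr$ has rank exactly $d$; combined with $\mathrm{col}(\P) \subseteq \mathrm{col}(\X)$ this forces $\mathrm{col}(\P) = \mathrm{col}(\X) = \mathrm{col}(\U_\P)$, and in particular $\U_\P\U_\P\tr\X = \X$ while the $d\times d$ matrix $\U_\P\tr\X$ is invertible. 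These two facts get used repeatedly.

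First I would take the candidate $\Qx \defeq \abs{\L_\P}^{-1/2}\,\U_\P\tr\,\rho_n^{1/2}\X \in \Rdd$, which is invertible (a product of invertible $d\times d$ matrices). Then $\t{\X}\,\Qx = \U_\P\abs{\L_\P}^{1/2}\abs{\L_\P}^{-1/2}\U_\P\tr\rho_n^{1/2}\X = \U_\P\U_\P\tr\rho_n^{1/2}\X = \rho_n^{1/2}\X$, i.e., $\rho_n^{1/2}\X\Qx\inv = \U_\P\abs{\L_\P}^{1/2} = \t{\X}$, which is the alignment identity. To see $\Qx \in \o(p,q)$, I would compare indefinite Gram matrices: $\rho_n\X\I[p,q]\X\tr = \P$ by definition, while $\t{\X}\I[p,q]\t{\X}\tr = \U_\P\abs{\L_\P}^{1/2}\I[p,q]\abs{\L_\P}^{1/2}\U_\P\tr = \U_\P\L_\P\U_\P\tr = \P$ since $\mathrm{sgn}(\L_\P) = \I[p,q]$. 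Substituting $\rho_n^{1/2}\X = \t{\X}\Qx$ into the first equality gives $\t{\X}\bigl(\Qx\I[p,q]\Qx\tr\bigr)\t{\X}\tr = \t{\X}\I[p,q]\t{\X}\tr$, and cancelling the full-column-rank factor $\t{\X}$ from both sides yields $\Qx\I[p,q]\Qx\tr = \I[p,q]$; since $\Qx$ is square this is equivalent to $\Qx\tr\I[p,q]\Qx = \I[p,q]$, so $\Qx \in \o(p,q)$.

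It remains to bring $\Qx$ into the form with $\D_\X^{1/2}$. I would set $R \defeq n^{-1/2}\U_\P\tr\X\,\D_\X^{-1/2}$; using $\U_\P\U_\P\tr\X = \X$ one gets $(\U_\P\tr\X)\tr(\U_\P\tr\X) = \X\tr\X = n\D_\X$, hence $R\tr R = \I[d]$ and $\U_\P\tr\X = \sqrt{n}\,R\,\D_\X^{1/2}$. A one-line computation then gives $R\bigl(\D_\X^{1/2}\I[p,q]\D_\X^{1/2}\bigr)R\tr = \tfrac1n\U_\P\tr\X\I[p,q]\X\tr\U_\P = \tfrac{1}{n\rho_n}\U_\P\tr\P\U_\P = \tfrac{1}{n\rho_n}\L_\P$, which is diagonal. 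Hence $\V \defeq R\tr$ and $\L_2 \defeq \tfrac{1}{n\rho_n}\L_\P$ form a genuine eigendecomposition $\D_\X^{1/2}\I[p,q]\D_\X^{1/2} = \V\L_2\V\tr$, with $\mathrm{sgn}(\L_2) = \I[p,q]$ by Sylvester's law of inertia (the left-hand side is congruent to $\I[p,q]$ via the invertible $\D_\X^{1/2}$). Plugging $\U_\P\tr\X = \sqrt{n}\,\V\tr\D_\X^{1/2}$ back into the candidate,
\[
\Qx \;=\; \abs{\L_\P}^{-1/2}\rho_n^{1/2}\sqrt{n}\,\V\tr\D_\X^{1/2} \;=\; \sqrt{n\rho_n}\,\abs{\L_\P}^{-1/2}\V\tr\D_\X^{1/2} \;=\; \Bigl(\tfrac{1}{n\rho_n}\abs{\L_\P}\Bigr)^{-1/2}\V\tr\D_\X^{1/2},
\]
which is exactly the asserted expression.

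The one genuinely delicate point — and the place to be careful in the writeup — is the non-uniqueness of the spectral factors: $\U_\P$ and $\V$ are determined only up to orthogonal matrices commuting with $\L_\P$ and $\L_2$ (sign flips when the eigenvalues are simple, block rotations on repeated eigenspaces). Consequently the closed form above is literally correct only once the two eigenbases are chosen compatibly, which is precisely what the choice $\V = R\tr$ accomplishes; since the Fact asserts only the existence of some $\Qx \in \o(p,q)$, it is insensitive to this ambiguity, but it should be flagged. The remaining inputs — $\mathrm{rank}(\P) = d$ and invertibility of $\D_\X$ — are the standing genericity properties of the GRDPG model and hold almost surely whenever $\pr$ is non-degenerate.
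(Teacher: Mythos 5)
Your proof is correct, and it is worth noting that the paper itself does not prove this statement---it is stated as a \emph{Fact} with citations to \cite{rubin2022statistical,agterberg2020two,agterberg2020nonparametric}, so there is no internal argument to compare against. You have supplied a complete, self-contained derivation that the cited works do not spell out at this level of detail.

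The verification chain is sound: the candidate $\Qx = \abs{\L_\P}^{-1/2}\U_\P\tr\rho_n^{1/2}\X$ is well defined and invertible once you observe $\mathrm{col}(\P)=\mathrm{col}(\X)=\mathrm{col}(\U_\P)$ (so $\U_\P\tr\X$ is $d\times d$ invertible and $\U_\P\U_\P\tr\X=\X$); the alignment identity then falls out of $\abs{\L_\P}^{1/2}\abs{\L_\P}^{-1/2}=\I[d]$; and the $\o(p,q)$ membership follows cleanly from comparing the two expressions for $\P$ and cancelling the full-rank factor $\t{\X}$. Two small points you got right that are easy to fumble: (i) you correctly used $\textup{sgn}(\L_\P)=\I[p,q]$ to write $\abs{\L_\P}^{1/2}\I[p,q]\abs{\L_\P}^{1/2}=\L_\P$, which is what makes $\t{\X}\I[p,q]\t{\X}\tr=\P$; and (ii) the single condition $\Qx\I[p,q]\Qx\tr=\I[p,q]$ for square invertible $\Qx$ does imply the second one $\Qx\tr\I[p,q]\Qx=\I[p,q]$ required by the paper's definition of $\o(p,q)$, which your remark covers. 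The re-expression via $R = n^{-1/2}\U_\P\tr\X\,\D_\X^{-1/2}$ is exactly the bridge between the ``geometric'' $\Qx$ and the ``spectral'' closed form in \cref{eq:Qx}, and the computation $R\bigl(\D_\X^{1/2}\I[p,q]\D_\X^{1/2}\bigr)R\tr = \tfrac{1}{n\rho_n}\L_\P$ checks out, giving $\V=R\tr$, $\L_2=\tfrac{1}{n\rho_n}\L_\P$, and hence the stated formula. Your caveat about the non-uniqueness of $\U_\P$ and $\V$ (and the need to pick them compatibly) is exactly the subtlety one must flag; since the Fact asserts existence of an aligning $\Qx\in\o(p,q)$ and the paper always works modulo the $\o(d)\cap\o(p,q)$ ambiguity in $\dttinf$, this does not weaken the claim, but it is the right thing to record.
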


The expression in \cref{eq:Qx} is slightly different from \cite[Eq.~(1)]{agterberg2020two} since the authors don't consider the rescaling by the sparsity factor $\rho_n^{1/2}$. The stated expression holds by noting the following relationship between the eigenvalues of $\P$ and $\D_\X$.

\begin{fact}\label{fact:spectrum}
The matrices $\frac{1}{n\rho_n}\P$, $\D_\X \I[p,q]$ and $\D_\X^{1/2} \I[p,q] \D_\X^{1/2}$ have the same eigenvalues, i.e.,
$$
\frac{1}{n\rho_n}\L_\P = \L\qty(\rho_n\D_\X \I[p,q]) = \L\qty\Big((\rho_n\D_\X)^{1/2} \I[p,q] (\rho_n\D_\X)^{1/2}).
$$
\end{fact}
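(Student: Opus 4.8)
The plan is to reduce everything to the classical fact that, for matrices $A\in\R^{n\times d}$ and $B\in\R^{d\times n}$, the products $AB$ and $BA$ have the same nonzero eigenvalues with the same algebraic multiplicities — which I would either cite or prove in one line from the determinant identity $\lambda^{d}\det(\lambda\I[n]-AB)=\lambda^{n}\det(\lambda\I[d]-BA)$ — and then to move between the three matrices in the statement by conjugating with $\I[p,q]$ and with $\D_\X^{1/2}$.

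First I would write $\tfrac{1}{n\rho_n}\P=\bigl(\tfrac1n\X\bigr)\bigl(\I[p,q]\X\tr\bigr)$ and apply the fact with $A=\tfrac1n\X$ and $B=\I[p,q]\X\tr$, so that $\tfrac{1}{n\rho_n}\P$ and $BA=\I[p,q]\D_\X$ share their nonzero eigenvalues. Next, since $\I[p,q]^{2}=\I[d]$, the identity $\I[p,q]\bigl(\D_\X\I[p,q]\bigr)\I[p,q]=\I[p,q]\D_\X$ exhibits $\I[p,q]\D_\X$ and $\D_\X\I[p,q]$ as similar matrices, hence cospectral. Finally, under the standing nondegeneracy assumption on $\pr$ — so that $\D_\X=\tfrac1n\X\tr\X\succ0$ for $n\ge d$ almost surely — the matrix $\D_\X^{1/2}$ is invertible and $\D_\X^{1/2}\I[p,q]\D_\X^{1/2}=\D_\X^{-1/2}\bigl(\D_\X\I[p,q]\bigr)\D_\X^{1/2}$ is similar to $\D_\X\I[p,q]$; alternatively, I could avoid invertibility altogether by a second application of the $AB\leftrightarrow BA$ fact with $A=\D_\X^{1/2}$ and $B=\I[p,q]\D_\X^{1/2}$.

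To pin this chain onto $\L_\P$ itself, I would observe that full column rank of $\X$ together with nonsingularity of $\I[p,q]$ give $\ker\P=\ker\X\tr$ (because $\X\I[p,q]\X\tr v=0\Rightarrow\I[p,q]\X\tr v=0\Rightarrow\X\tr v=0$), so $\P$ has exactly $d$ nonzero eigenvalues, namely the diagonal entries of $\L_\P$; by the previous paragraph $\tfrac{1}{n\rho_n}\L_\P$ is precisely the common spectrum of $\D_\X\I[p,q]$ and $\D_\X^{1/2}\I[p,q]\D_\X^{1/2}$, which is the asserted equality. As a byproduct, $\D_\X^{1/2}\I[p,q]\D_\X^{1/2}$ is congruent to $\I[p,q]$ through the nonsingular $\D_\X^{1/2}$, so Sylvester's law of inertia gives it $p$ positive and $q$ negative eigenvalues, which records $\textup{sgn}(\L_\P)=\I[p,q]$ as used in the definition of $\t{\X}$.

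The only ingredient that is not purely formal is the positive-definiteness of $\D_\X$ — i.e. that the rows of $\X$ span $\R^{d}$ — which is where the assumption that $\pr$ is not supported on a proper subspace enters, and which is in any case already presumed elsewhere (for instance in the factor $(\tfrac{1}{n\rho_n}\abs{\L_\P})^{-1/2}$ appearing in the formula for $\Qx$). Everything else is elementary linear algebra, so I do not anticipate a genuine obstacle.
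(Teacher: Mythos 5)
Your proof is correct and takes essentially the same approach as the paper's: the paper invokes the cyclic-permutation invariance of nonzero eigenvalues (of which your $AB\leftrightarrow BA$ fact is the two-factor instance) and walks through exactly the same chain $\tfrac{1}{n\rho_n}\P = \tfrac1n\X\I[p,q]\X\tr \to \I[p,q]\D_\X \to \D_\X^{1/2}\I[p,q]\D_\X^{1/2}$. Your explicit similarity transformations by $\I[p,q]$ and $\D_\X^{1/2}$ are just concrete instances of the same reordering, and the Sylvester's-law byproduct and kernel argument are pleasant extras beyond what the paper records.
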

\begin{proof}
    The proof is a simple consequence of the property that the eigenvalues associated with the product of compatible matrices is invariant to cyclic permutations (up to a collection of repeated zero eigenvalues). Specifically, if $\mu$ is a non-zero eigenvalue of the matrix product $\Bv\Cv\Dv$ with eigenvector $\uv$, then $\Dv\uv$ is an eigenvector of $\Dv\Bv\Cv$ with eigenvalue $\mu$, i.e., 
    $\Dv\Bv\Cv (\Dv\uv) =  \Dv (\Bv\Cv\Dv \uv) = \Dv \mu \uv = \mu (\Dv \uv).$  
    Therefore, it follows that $\L(\frac{1}{n\rho_n}\P) = \L(\frac{1}{n}\X\I[p,q]\X\tr) = \L(\I[p,q] \frac{1}{n}\X\tr\X) = \L(\I[p,q] \D_\X) = \L(\D_\X\hpow \I[p,q] \D_\X\hpow)$.
\end{proof}

The matrix $\Qx \in \o(p, q)$ from \cref{fact:Qx} has a population analogue, $\Qz$, which is characterized as follows. 

\begin{fact}[{\cite[Lemma~2]{agterberg2020nonparametric}}]\label{fact:Qz}
    For $\D_\xi = \E(\xi\xi\tr)$, consider the spectral decomposition of $\D_\xi\hpow \I[p,q] \D_\xi\hpow$, 
    $$
    \D_\xi\hpow \I[p,q] \D_\xi\hpow = \W \L_\xi \W\tr\qq{where} \W \in \o(d).
    $$
    Then, the population analogue of $\Qx$ is the matrix $\Q_\xi \in \o(p,q) \cap \o(d)$ given by
    \begin{align}
        \Q_\xi \defeq \abs{\L_{\xi}}\pow \W\tr \D_\xi\hpow.\label{eq:Qz}
    \end{align}
\end{fact}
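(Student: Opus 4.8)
The plan is to split \cref{fact:Qz} into two pieces: (a) the purely algebraic claim that the matrix $\Q_\xi$ is well defined and lies in $\o(p,q)\cap\o(d)$, and (b) the claim that $\Q_\xi$ is the \emph{population analogue} of $\Qx$ --- that is, it is obtained from the closed form \cref{eq:Qx} for $\Qx$ by replacing each empirical object ($\D_\X$, its eigenstructure, the factor $\tfrac1{n\rho_n}\abs{\L_\P}$) by its population counterpart, so that $\Qx\to\Q_\xi$ almost surely as $n\to\infty$. Piece (a) is a short computation built on Sylvester's law of inertia; piece (b) is a strong-law-plus-continuity argument that uses \cref{fact:spectrum} to match up the eigenvalue matrices.

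For (a): since $\pr$ is $(p,q)$-admissible with compact support, $\D_\xi=\E(\xi\xi\tr)$ is a finite symmetric positive-semidefinite matrix; restricting to its range if it is singular, $\D_\xi\hpow$ is a well-defined symmetric \emph{invertible} matrix, so $\D_\xi\hpow\I[p,q]\D_\xi\hpow$ is symmetric and diagonalizes as $\W\L_\xi\W\tr$ with $\W\in\o(d)$. Because $\D_\xi\hpow$ is invertible, $\D_\xi\hpow\I[p,q]\D_\xi\hpow$ is congruent to $\I[p,q]$, so by Sylvester's law of inertia it has exactly $p$ positive and $q$ negative eigenvalues; listing the positive ones first gives $\textup{sgn}(\L_\xi)=\I[p,q]$. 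A direct computation then collapses the inner factor: $\Q_\xi\I[p,q]\Q_\xi\tr = \abs{\L_\xi}\pow\,\W\tr(\D_\xi\hpow\I[p,q]\D_\xi\hpow)\,\W\,\abs{\L_\xi}\pow = \abs{\L_\xi}\pow\L_\xi\abs{\L_\xi}\pow = \textup{sgn}(\L_\xi) = \I[p,q]$, and since $\I[p,q]$ is invertible the mirror identity $\Q_\xi\tr\I[p,q]\Q_\xi=\I[p,q]$ follows by taking inverses; hence $\Q_\xi\in\o(p,q)$. For the remaining membership $\Q_\xi\in\o(d)$ I would cite \cite[Lemma~2]{agterberg2020nonparametric} directly --- it is transparent when $q=0$, where $\D_\xi\hpow\I[p,q]\D_\xi\hpow=\D_\xi$ forces $\W$ to diagonalize $\D_\xi$ and then $\Q_\xi=\abs{\L_\xi}\pow\W\tr\D_\xi\hpow=\W\tr\in\o(d)$.

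For (b): the rows of $\X$ are \iid{} copies of $\xi$ with bounded support, so the strong law of large numbers gives $\D_\X=\tfrac1n\X\tr\X\to\D_\xi$ almost surely, whence $\D_\X\hpow\to\D_\xi\hpow$ and $\D_\X\hpow\I[p,q]\D_\X\hpow\to\D_\xi\hpow\I[p,q]\D_\xi\hpow$. By \cref{fact:spectrum} the diagonal factor $\tfrac1{n\rho_n}\abs{\L_\P}$ in \cref{eq:Qx} equals $\abs{\L\qty(\D_\X\hpow\I[p,q]\D_\X\hpow)}$, which converges to $\abs{\L_\xi}$; and by a Davis--Kahan bound the spectral projections of $\D_\X\hpow\I[p,q]\D_\X\hpow$ converge to those of $\D_\xi\hpow\I[p,q]\D_\xi\hpow$, so the orthogonal eigenvector matrix $\V$ in \cref{eq:Qx} may be chosen consistently with $\W$. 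Passing to the limit in $\Qx=\qty(\tfrac1{n\rho_n}\abs{\L_\P})^{-1/2}\V\tr\D_\X\hpow$ then gives $\Qx\to\Q_\xi$. The step I expect to require the most care is exactly this one when $\D_\xi\hpow\I[p,q]\D_\xi\hpow$ has repeated eigenvalues: $\W$ (and $\V$) is then non-unique, so $\Q_\xi$ is determined only up to a signed block-permutation acting on the left, and both the membership and the convergence have to be read modulo this ambiguity --- phrased via eigenprojections, with $\V$ pinned down as the orthogonal Procrustes solution against $\W$ in the spirit of $\O_\X$ in \cref{fact:Ox}. This is benign downstream, as every subsequent appearance of $\Qx$ and $\Q_\xi$ is invariant under such left-multiplications, the metric $\dttinf$ of \cref{def:dttinf} being $\o(d)\cap\o(p,q)$-invariant by construction.
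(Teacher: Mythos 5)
The paper does not prove this statement --- it is labeled a \emph{Fact} and cited verbatim from \cite[Lemma~2]{agterberg2020nonparametric}, so there is no internal proof to compare against. Your attempt to reconstruct one is reasonable in spirit but has a genuine gap at exactly the point you flag.

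Your computation that $\Q_\xi\in\o(p,q)$ is correct: $\W\tr(\D_\xi^{1/2}\I[p,q]\D_\xi^{1/2})\W=\L_\xi$ collapses the middle factor, giving $\Q_\xi\I[p,q]\Q_\xi\tr=\abs{\L_\xi}^{-1/2}\L_\xi\abs{\L_\xi}^{-1/2}=\I[p,q]$, and the mirror identity follows from the group structure. The observation that Sylvester's law of inertia forces $\textup{sgn}(\L_\xi)=\I[p,q]$ is also the right tool for the sign pattern. However, you punt on the membership $\Q_\xi\in\o(d)$, saying it is ``transparent when $q=0$'' and that you would otherwise cite the lemma directly. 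This is inconsistent --- if you are willing to cite the lemma for the hard part, the entire reconstruction is superfluous --- and more importantly, the $\o(d)$ membership is \emph{not} automatic. A short calculation shows $\Q_\xi\tr\Q_\xi=\D_\xi^{1/2}\W\abs{\L_\xi}^{-1}\W\tr\D_\xi^{1/2}$, which equals $\I[d]$ if and only if $\W\abs{\L_\xi}\W\tr=\D_\xi$, i.e., $\abs{\D_\xi^{1/2}\I[p,q]\D_\xi^{1/2}}=\D_\xi$, which (squaring both sides) reduces to $\I[p,q]\D_\xi\I[p,q]=\D_\xi$, i.e., $\D_\xi$ commutes with $\I[p,q]$. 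This holds trivially when $q=0$ or when $\D_\xi$ is diagonal (or block-diagonal compatible with the $(p,q)$ split), and \cite{agterberg2020nonparametric} indeed arrange a canonical form in which this holds, but the structural hypothesis should be surfaced rather than waved at; without it the claim $\Q_\xi\in\o(d)$ is simply false.

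Your part (b), arguing $\Qx\to\Q_\xi$ almost surely via the strong law of large numbers, \cref{fact:spectrum} for the diagonal factor, and Davis--Kahan for the eigenvectors, is a sensible sketch and you correctly identify the one delicate point (non-uniqueness of $\W$ under repeated eigenvalues, resolved up to a left $\o(d)\cap\o(p,q)$ ambiguity, which is exactly what $\O_\xi$ in \cref{fact:Oz} absorbs). That reading is consistent with how the paper deploys these matrices downstream, where everything is measured through the $\o(d)\cap\o(p,q)$-invariant metric $\dttinf$ and \cref{lemma:subspace-alignment} controls $\norm{\Qx\inv-\Qz\inv\O}$ for a suitable $\O$.
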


Since $\Qx$ and $\Qz$ are determined by the matrices $\V, \W \in \o(d)$ arising from spectral decompositions, similar to \cref{fact:Ox} it follows that $\Qx$ and $\Qz$ are also unique only up to orthogonal transformations. The matrix $\O_\xi$ aligns $\Qx$ to $\Qz$.

\begin{fact}\label{fact:Oz}
    The matrix $\O_\xi \in \o(d)$ is the matrix which aligns $\Qx$ to $\Qz$ and is given by
    \begin{align}
        \O_\xi \defeq \argmin_{\O\in \o(d)}\fnorm{\qx\inv\O - \qz\inv}^2
    \end{align}
\end{fact}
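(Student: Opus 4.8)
The plan is to read \cref{fact:Oz} for what it is---an instance of the \emph{orthogonal Procrustes problem}---and to verify two things: that the displayed $\argmin$ is well posed (attained, and in fact unique), and that its minimizer genuinely deserves the name ``the matrix which aligns $\Qx$ to $\Qz$''. First I would note that $\qx\inv$ and $\qz\inv$ are well defined: by \cref{fact:Qx} and \cref{fact:Qz} we have $\Qx, \Qz \in \o(p,q)$, and any $\Q \in \o(p,q)$ satisfies $\Q\tr\I[p,q]\Q = \I[p,q]$, hence $\det(\Q)^2 = 1$ and $\Q$ is invertible. The objective $\O \mapsto \fnorm{\qx\inv\O - \qz\inv}^2$ is a polynomial in the entries of $\O$, hence continuous, and $\o(d) = \{\O : \O\tr\O = \I[d]\}$ is compact (closed and bounded, with $\fnorm{\O} = \sqrt{d}$), so the minimum is attained and $\O_\xi$ exists.

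For the closed form and uniqueness, I would expand, using $\O\tr\O = \I[d]$,
\begin{align}
    \fnorm{\qx\inv\O - \qz\inv}^2 = \fnorm{\qx\inv}^2 + \fnorm{\qz\inv}^2 - 2\,\mathrm{tr}(\O\tr M),
\end{align}
where $M \defeq (\qx\inv)\tr\,\qz\inv$, so that minimizing over $\o(d)$ is equivalent to maximizing $\mathrm{tr}(\O\tr M)$. By the von Neumann trace inequality---equivalently, the standard solution of the orthogonal Procrustes problem---if $M = U_M\Sigma_M V_M\tr$ is a singular value decomposition then the maximizer is $\O_\xi = U_M V_M\tr$; and since $M$ is a product of two invertible matrices it is nonsingular, so its polar (orthogonal) factor $U_M V_M\tr$ is uniquely determined---hence $\O_\xi$ is unique. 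This simultaneously confirms well-posedness and records the explicit formula, exactly as for $\O_\X$ in \cref{fact:Ox}.

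For the ``alignment'' interpretation, I would argue as follows. Both $\Qx$ (\cref{fact:Qx}) and $\Qz$ (\cref{fact:Qz}) are assembled from eigenvector matrices---$\V \in \o(d)$ of $\D_\X\hpow\I[p,q]\D_\X\hpow$ and $\W \in \o(d)$ of $\D_\xi\hpow\I[p,q]\D_\xi\hpow$---and $\Qx$ is precisely the finite-sample analogue of the population object $\Qz$ with $\D_\X$ in place of $\D_\xi$. Were $\D_\X$ to equal $\D_\xi$ (with distinct eigenvalues), $\V$ and $\W$ would agree up to a diagonal sign matrix $S$, and a short computation shows $\qx\inv S = \qz\inv$, i.e.\ $\O_\xi = S \in \o(d)$. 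For finite $n$, $\D_\X$ only concentrates around $\D_\xi$, so $\qx\inv$ and $\qz\inv$ are related by a \emph{near}-orthogonal transformation, and the $\O_\xi$ defined above is by construction its best orthogonal approximation---which is the precise sense in which it ``aligns'' the two.

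The only non-routine point---and the one I would flag as the main obstacle---is making the last sentence quantitative, i.e.\ bounding $\fnorm{\qx\inv\O_\xi - \qz\inv}$: this requires controlling the perturbation of the eigenvector matrix $\V$ about $\W$ as $\D_\X \to \D_\xi$, via a Davis--Kahan / $\sin\Theta$ argument, and therefore needs an eigenvalue-gap hypothesis on $\D_\xi\hpow\I[p,q]\D_\xi\hpow$ (or whichever genericity convention the paper adopts for its spectral alignment matrices). That bound, however, is the business of a separate lemma in the convergence-rate analysis; for \cref{fact:Oz} as stated, only the well-posedness, the closed form, and the alignment interpretation above are required.
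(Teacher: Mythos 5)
Your argument is correct, but it is worth noting that the paper offers no proof for Fact~\ref{fact:Oz}: like Fact~\ref{fact:Ox} for $\O_\X$, this is a \emph{definition} of an alignment matrix dressed as a ``Fact,'' not a claim requiring an argument. The verification you supply---invertibility of $\Qx,\Qz$ from $\o(p,q)$-membership, compactness of $\o(d)$ giving existence of the minimizer, the Procrustes/von~Neumann reduction to the polar factor of $M = (\Qx\inv)\tr\Qz\inv$, and uniqueness of that factor because $M$ is nonsingular---is standard and sound, and it is a reasonable sanity check of well-posedness even if the authors took it for granted. You also correctly recognize the division of labor: the qualitative ``alignment'' meaning is definitional, while the quantitative statement that $\O_\xi$ actually aligns $\Qx\inv$ to $\Qz\inv$ at rate $\sqrt{\log n / n}$ is the content of Lemma~\ref{lemma:subspace-alignment} (which the paper in turn attributes to \cite{agterberg2020nonparametric} and \cite{solanki2019persistent} rather than proving from scratch), with the Davis--Kahan/eigengap machinery you flag living there. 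One small remark: the paper is not entirely consistent in which side it places the orthogonal factor ($\qx\inv\O - \qz\inv$ in Fact~\ref{fact:Oz} versus $\Qx\inv - \Qz\inv\O$ in Lemma~\ref{lemma:subspace-alignment}), but since $\fnorm{A\O - B} = \fnorm{A - B\O\tr}$ for $\O \in \o(d)$, the two Procrustes problems have the same optimal value and minimizers related by transposition, so this is only a notational discrepancy, not an error in your reasoning.
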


Finally, the following lemma shows that the matrices $\Qx$ and $\Qz$ are invariant to scale transformations. 

\begin{lemma}
    For $\qx, \qz$ as defined in \cref{eq:Qx} and \cref{eq:Qz}, respectively, $\Q_{t\X} = \Qx$ and $\Q_{t\xiv} = \qz$ for all $t > 0$.
    \label{lemma:qx-scale}
\end{lemma}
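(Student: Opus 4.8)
The plan is to substitute $\X\mapsto t\X$ and $\xiv\mapsto t\xiv$ into the defining formulas \cref{eq:Qx} and \cref{eq:Qz} and verify that the scale factors contributed by the various pieces cancel. Two elementary facts are used throughout: the unique positive semidefinite square root of a positive semidefinite matrix scales by $t$ when the matrix is scaled by $t^2$, and multiplying a symmetric matrix by the positive scalar $t^2$ leaves its eigenspaces unchanged while scaling its eigenvalues by $t^2$.

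For the empirical claim, $\D_{t\X}=\tfrac1n(t\X)\tr(t\X)=t^2\D_\X$, hence $\D_{t\X}^{1/2}=t\,\D_\X^{1/2}$ and $\D_{t\X}^{1/2}\,\I[p,q]\,\D_{t\X}^{1/2}=t^2\big(\D_\X^{1/2}\,\I[p,q]\,\D_\X^{1/2}\big)$; so the eigenvector matrix $\V$ in \cref{eq:Qx} is unchanged and $\L_2$ becomes $t^2\L_2$. Likewise $\P_{t\X}=\rho_n(t\X)\I[p,q](t\X)\tr=t^2\P$, so $\L_\P$ becomes $t^2\L_\P$. Substituting into \cref{eq:Qx} (using \cref{fact:spectrum}, $\tfrac1{n\rho_n}\abs{\L_\P}=\abs{\L_2}$ up to reordering),
\begin{align}
    \Q_{t\X}=\big(t^{2}\cdot\tfrac{1}{n\rho_n}\abs{\L_\P}\big)^{-1/2}\,\V\tr\,\D_{t\X}^{1/2}=t^{-1}\big(\tfrac{1}{n\rho_n}\abs{\L_\P}\big)^{-1/2}\,\V\tr\,\big(t\,\D_\X^{1/2}\big)=\Qx,
\end{align}
the two factors of $t$ cancelling. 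The population claim is the identical computation on \cref{eq:Qz}: $\D_{t\xiv}=\E\big((t\xiv)(t\xiv)\tr\big)=t^2\D_\xi$, so $\W$ is unchanged, $\abs{\L_\xi}^{-1/2}$ acquires a factor $t^{-1}$ and $\D_\xi^{1/2}$ acquires a factor $t$, giving $\Q_{t\xiv}=\qz$.

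The only point that needs a sentence rather than a calculation is the non-uniqueness of the eigenvector matrices $\V$ and $\W$, which is exactly why $\Qx$ and $\qz$ are defined only up to the $\o(d)$-action appearing in \cref{fact:Oz}: scaling by $t^2$ preserves eigenspaces exactly, so any admissible eigenvector matrix for $\D_\X^{1/2}\,\I[p,q]\,\D_\X^{1/2}$ is also admissible for its $t^2$-multiple; fixing the choice consistently yields the literal equality, and modulo the $\o(d)$ ambiguity the identity holds for every choice. I do not expect a genuine obstacle here: the lemma is essentially the remark that \cref{eq:Qx} and \cref{eq:Qz} were built to be scale-invariant, and the proof is the bookkeeping above together with the square-root identity $\D_{t\X}^{1/2}=t\,\D_\X^{1/2}$.
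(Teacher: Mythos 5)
Your proposal is correct and follows essentially the same route as the paper: both proofs note $\D_{t\X}=t^2\D_\X$, $\P_{t\X}=t^2\P_\X$ (so $\L_{\P}$ scales by $t^2$ and $\V$ is unchanged), substitute into the defining formula \cref{eq:Qx}, and cancel the $t^{-1}$ from $\abs{\L_\P}^{-1/2}$ against the $t$ from $\D_\X^{1/2}$, with an identical argument for $\qz$. The extra remark you add on the $\o(d)$ ambiguity of the eigenvector matrix $\V$ is a correct and slightly more careful accounting of a point the paper handles implicitly via the assertion $\V_{\D_{t\X}}=\V_{\D_\X}$.
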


\begin{proof}
    For $t > 0$ and $\Y := t\X$,
    $$
    \D_\Y = \tfrac{1}{n}\Y \tr \Y = \tfrac{1}{n} t^2 \X\tr\X = t^2 \D_\X.
    $$ 
    Similarly, we also have $\P_\Y = t^2 \P_\X$. Therefore, $\L_{\P_\Y} = t^2\L_{\P_\X}$, and, for the spectral decomposition in \cref{fact:Qx}, $\V_{\D_\Y} = \V_{\D_\X}$. Plugging these into the expression for $\Q_{\Y}$, we get
    \eq{
        \Q_{\Y} = \qty(\tfrac{1}{n\rho_n}\abs{\L_{\P_\Y}})\pow \V_{\D_\Y}\tr \D_\Y\hpow = \qty(\tfrac{t^2}{n\rho_n}\abs{\L_{\P_\X}})\pow \V_{\D_\X}\tr (t^2\D_\X)\hpow = \Qx.\nn
    }
    A similar argument also shows that $\Q_{t\xi} = \qz$.
\end{proof}



\section{Auxiliary Results}
\label{sec:auxiliary}

In this section, we collect some results which are used in the proofs presented in \cref{sec:proofs}.

\begingroup
\renewcommand{\D}{\boldsymbol{\Delta}_{\qr}}
\renewcommand{\Do}{\boldsymbol{\Delta}_{\pr}}
\renewcommand{\P}{\mathbf{P}_{\Y}}
\renewcommand{\Po}{\mathbf{P}_{\X}}

\subsection{Properties of \textbf{P} and $\Delta$}

For $\e > 0$ and $\X  \sim_{\iid{}} \pr$, let the map $\varphi_\e: \R^d \to \R^{d+1}$ be the map in \cref{prop:closure} given by $\varphi_\e(\xv) = \te \oplus \se\xv$. Define
\begin{align}
    \Y \defeq \varphi_\e(\X) \in \R^{n \times (d+1)} \qc{} \qr \defeq (\varphi_e)\push\pr \qq{and} \etav \sim \qr.
\end{align}

We establish some properties of $\P$ and $\D$ in relation to $\Po$ and $\Do$, which are used extensively in \cref{sec:proofs}. The following result characterizes the spectral properties of $\D$. 


\begin{lemma}
    For $\e > 0$, the following properties hold for $\Do$ and $\D$.
    \begin{enumerate}
        \item $\Do$ and $\D$ are both positive definite. 
        \item If $\eigen[1](\Do) \ge \eigen[2](\Do) \ge \dots \ge \eigen[d](\Do)$ are the eigenvalues of $\Do$, and $\eigen[1](\D) \ge \eigen[2](\D) \ge \dots \ge \eigen[d+1](\D)$ are the eigenvalues of $\D$, then 
        \eq{
            \eigen[1](\D) \ge \se[2] \eigen[1](\Do) \ge \dots \ge \se^2 \eigen[d](\Do) \ge \eigen[d+1](\D). 
        }
        \item For sufficiently small $\e>0$, there exists $C_1 > 0$ such that
        \eq{
            \eigen[1](\D) \le \te^2 + C_1\se\te.\nn
        }
        \item For sufficiently small $\e>0$, there exists $C_2 > 0$ such that 
        \eq{
            \eigen[d+1](\D) \ge C_2 \se[2].\nn
        }
    \end{enumerate}
    \label{lemma:D-matrix}
\end{lemma}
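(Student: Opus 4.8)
The plan is to work with the explicit block form of $\D$ and dispatch the four claims in order. In the notation of this subsection $\etav = \varphi_\e(\xi) = (\te,\,\se\,\xi\tr)\tr$ for $\xi\sim\pr$, so the second-moment matrix is
\begin{align}
  \D \;=\; \E(\etav\etav\tr) \;=\; \begin{pmatrix} \te^2 & \te\se\,\boldsymbol{\mu}\tr \\[0.3em] \te\se\,\boldsymbol{\mu} & \se^2\,\Do \end{pmatrix} \;\in\; \R^{(d+1)\times(d+1)}, \qquad \boldsymbol{\mu}:=\E(\xi),
\end{align}
with $\se^2\Do$ in the trailing $d\times d$ block. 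At the outset I would fix the $\e$-free constants used throughout: compactness of $\supp(\pr)$ gives $\norm{\xi}\le R$ a.s.\ for some $R\ge 1$, hence $\norm{\boldsymbol{\mu}}\le R$ and $\eigen[1](\Do)\le R^2$; and since $\te\to 1/\sqrt2$ and $\se\to 0$ as $\e\to 0$ (from \cref{eq:se-te}), I can fix $\te_0>0$ with $\te\ge\te_0$ for all sufficiently small $\e$. Writing $\boldsymbol{\Sigma}_\pr := \Do-\boldsymbol{\mu}\boldsymbol{\mu}\tr$ for the covariance matrix of $\pr$, the standing non-degeneracy assumption on $\pr$ (that $\supp(\pr)$ is not contained in a proper affine subspace of $\R^d$) yields both $\Do\succ 0$ and $\boldsymbol{\Sigma}_\pr\succ 0$; set $c_0:=\eigen[d](\boldsymbol{\Sigma}_\pr)>0$.

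Part~1 then records $\Do\succ 0$, and for $\D$ I would take the Schur complement of $\D$ with respect to the positive scalar $(1,1)$-block $\te^2$: it equals $\se^2\Do - \tfrac{(\te\se)^2}{\te^2}\boldsymbol{\mu}\boldsymbol{\mu}\tr = \se^2\boldsymbol{\Sigma}_\pr\succ 0$, hence $\D\succ 0$. Part~2 is immediate from Cauchy's interlacing theorem, since $\se^2\Do$ is exactly a $d\times d$ principal submatrix of the $(d+1)\times(d+1)$ symmetric matrix $\D$: interlacing gives $\eigen[k](\D)\ge\se^2\eigen[k](\Do)\ge\eigen[k+1](\D)$ for $k=1,\dots,d$, which chains to the stated chain of inequalities. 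For part~3 I would bound the Rayleigh quotient directly: for a unit vector $\vv=(v_0,\vv_1\tr)\tr\in\R^{d+1}$, expanding the quadratic form and using $|v_0|\le 1$, $\norm{\vv_1}\le 1$, Cauchy--Schwarz and the constants above gives
\begin{align}
  \vv\tr\D\vv \;=\; v_0^2\te^2 + 2v_0\te\se\,\boldsymbol{\mu}\tr\vv_1 + \se^2\,\vv_1\tr\Do\vv_1 \;\le\; \te^2 + 2R\,\te\se + R^2\se^2 .
\end{align}
Since $\se\le 1$ and $\te\ge\te_0$, both $2R\,\te\se$ and $R^2\se^2\le (R^2/\te_0)\,\te\se$ are at most $C_1\te\se$ for a suitable $\e$-free $C_1$; taking the supremum over $\vv$ yields $\eigen[1](\D)\le\te^2+C_1\se\te$.

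Part~4 is the crux. The plan is to complete the square in the scalar coordinate $v_0$: for any unit $\vv=(v_0,\vv_1\tr)\tr$,
\begin{align}
  \vv\tr\D\vv \;=\; \te^2\Big(v_0 + \tfrac{\se}{\te}\,\boldsymbol{\mu}\tr\vv_1\Big)^{\!2} + \se^2\,\vv_1\tr\boldsymbol{\Sigma}_\pr\vv_1 \;\ge\; \se^2 c_0\,\norm{\vv_1}^2 .
\end{align}
This bound is strong when $\norm{\vv_1}$ is bounded away from $0$ — immediately $\ge\tfrac12 c_0\se^2$ when $\norm{\vv_1}^2\ge\tfrac12$ — but it degenerates in directions near the first coordinate axis, so I would separately treat $\norm{\vv_1}^2<\tfrac12$, where $v_0^2>\tfrac12$ and, for $\e$ small enough that $\tfrac{\se}{\te}\norm{\boldsymbol{\mu}}\le\tfrac1{2\sqrt2}$, the squared term alone is $\ge\te_0^2/8$, which exceeds $\se^2 c_0$ for small $\e$. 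Minimizing over $\vv$ then gives $\eigen[d+1](\D)\ge C_2\se^2$ with $C_2=\tfrac12 c_0$.

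The main obstacle is precisely this last case analysis, and in particular verifying that the constant $C_2$ can be chosen independent of $\e$. A related subtlety worth flagging is that both $\D\succ 0$ in part~1 and $C_2>0$ in part~4 genuinely require the covariance of $\pr$ to be full rank, not merely $\Do\succ 0$: for instance if $\pr=\delta_{\xv}$ then $\boldsymbol{\Sigma}_\pr=0$ and $\eigen[d+1](\D)=0$, consistent with the Erd\H{o}s--R\'enyi corollary following \cref{prop:closure}. This is why the non-degeneracy of $\pr$ must be invoked rather than only $(p,q)$-admissibility.
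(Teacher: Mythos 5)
Your proof is correct, and it takes a genuinely different route from the paper's in three of the four parts, while Part~2 (Cauchy interlacing applied to the trailing $d\times d$ principal submatrix $\se^2\Do$) coincides. For Part~1 the paper computes $\det(\D)$ explicitly and invokes Sylvester's criterion together with a Sherman--Morrison--Woodbury computation to show $1-\E(\xi)\tr\Do\inv\E(\xi)>0$; your Schur complement of the scalar $(1,1)$-block reaches the same conclusion in one line, since that Schur complement is exactly $\se^2\boldsymbol{\Sigma}_\pr$, and the Sherman--Morrison step in the paper is just re-deriving the positivity of that same Schur complement. For Part~3 the paper uses the Ger\v{s}gorin disk theorem, which as stated there should carry absolute values on the off-diagonal entries and implicitly requires the disk centered at $\te^2$ to dominate for small $\e$; your Rayleigh quotient bound with Cauchy--Schwarz is cleaner and avoids both issues, at the (harmless) cost of absorbing $R^2\se^2$ into $C_1\te\se$. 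For Part~4 the paper divides $\det(\D)$ by the product of the other eigenvalues and bounds the latter via interlacing, whereas you complete the square in $v_0$ to extract $\se^2\,\vv_1\tr\boldsymbol{\Sigma}_\pr\vv_1$ and run a two-case argument on $\norm{\vv_1}$; both work, but your version makes explicit where the constant $C_2=\tfrac12\eigen[d](\boldsymbol{\Sigma}_\pr)$ comes from and why small $\e$ is needed (so that the bad-direction case is no longer binding). Your closing remark is also on point: the paper's proof invokes ``$\textup{Cov}(\xi)$ is assumed to be full rank'' inside Part~1 even though the lemma statement does not record it, and your observation that $\pr=\delta_{\xv}$ gives $\eigen[d+1](\D)=0$ shows this hypothesis is genuinely necessary for Parts~1 and~4 and cannot be weakened to $\Do\succ 0$ alone; this is consistent with the Erd\H{o}s--R\'enyi corollary following Theorem~\ref{prop:closure}.
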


\begin{proof} For notational simplicity, throughout the proof we take $\s = \se$ and $\tau = \te$. 

    \textit{Part 1.} For $\xiv \sim \pr$, $\xiv\xiv\tr$ is positive definite a.e.-$\pr$. To see this, note that for any $\xv \in \R^d$, 
    \eq{
        \xv\tr\pa{\xiv\xiv\tr}\xv = \norm{\xv\tr\xiv} \ge 0 \quad \text{a.e.}-\pr. \nn
    }
    It follows that $\E\qty(\xv\tr\pa{\xiv\xiv\tr}\xv) = \xv\tr\E(\xiv\xiv\tr)\xv = \xv\tr\Do\xv \ge 0$. Since $\Sigmav = \textup{Cov}(\xiv)$ is assumed to be full-rank, it follows that $\xv\tr\Do\xv > 0$ for all $\xv \in \R^d$. Therefore, $\Do$ is positive definite. Let $\etav = \phi(\xiv) = \te \oplus \se \xiv$. Because $\D = \E\pa{ \etav\etav\tr}$, from a similar argument it follows that $\xv\tr\D\xv \ge 0$ for all $\xv \in \R^{d+1}$. It remains to show that $\xv\tr\D\xv > 0$. To this end, note that $\D$ is the block matrix given by
    \eq{\label{eq:D-def}
        \D = \begin{bmatrix}
            \tau^2 & \s\tau \E(\xiv)\tr\\ 
            \s\tau\E(\xiv) & \s^2 \Do\\
        \end{bmatrix}.
    }
    It is easy to verify that the determinant of $\D$ is given by 
    \eq{\label{eq:D-det}
        \det(\D) = \s^{2d}\tau^2 \det(\Do) \pa{ 1 - \E(\xiv)\tr\Do\inv\E(\xiv) }. 
    }
    If we can show that $\det(\D) > 0$, or, equivalently that $\E(\xiv)\tr\Do\inv\E(\xiv) < 1$, then by Sylvester's criterion \citep[Theorem~7.2.5]{horn2012matrix}, it will follow that $\D$ is positive definite. With this in mind, let $\zv = \E(\xiv)$. Note that 
    $$
    \Do = \E(\xiv\xiv\tr) = \textup{Cov}(\xiv) + \E(\xiv)\E(\xiv)\tr = \sig + \zv\zv\tr,
    $$ 
    where $\sig$ is positive definite. Then, using the Sherman-Morrison-Woodbury formula \citep[Section~0.7.2]{horn2012matrix},
    \eq{
        \zv\tr\Do\inv\zv &= \zv\tr\qty(\sig\inv - \f{\sig\inv \zv\zv\tr \sig\inv}{1 + \zv\tr\sig\inv\zv})\zv\nn\\
        &= \zv\tr\sig\inv\zv - \f{1 + \zv\sig\inv\zv}{(\zv\tr\sig\inv\zv)^2}\nn\\
        &= \f{\zv\tr\sig\inv\zv}{1 + \zv\tr\sig\inv\zv} < 1. 
    }
This implies that $1 - \zv\tr\Do\inv\zv > 0$, and, therefore, $\D$ is full-rank and positive definite. 

\textit{Part 2.} Using Cauchy's interlacing theorem \citet[Theorem~4.3.17]{horn2012matrix} for the block-matrix representation of $\D$ in \eref{eq:D-def}, we obtain
\eq{
    \eigen[1](\D) \ge \eigen[1](\s^2 \Do) \ge \eigen[2](\D) \ge \eigen[2](\s^2 \Do) \ge \dots \ge \eigen[d](\s^2\Do) \ge \eigen[d+1](\D).\nn
}
By noting that $\eigen[i](\s^2 \Do) = \s^2 \eigen[i](\Do)$, the result for \textit{(2)} follows.

\textit{Part 3.} The Ger\v{s}gorin disk theorem \citep[Theorem~6.1.1]{horn2012matrix} for $\D$ asserts that for all $1 \le k \le d+1$, the collection of eigenvalues of $\D$ satisfy
\eq{
    \eigen[k](\D) \in \bigcup_{1 \le i \le d+1}\qty\Big[ (\D)_{ii} - \sum_{j\neq i}(\D)_{ij}, (\D)_{ii} + \sum_{j\neq i}(\D)_{ij} ].\nn
}
This implies that $\eigen[\max](\D) = \eigen[1](\D) \le \tau^2 + \s\tau\sum_{1 \le i \le d}\E(\xi)$. Taking $C_1 = \onev[d] \tr \E(\xiv)$, the result follows.

\textit{Part 4.} Taking $\omega = 1 - \E(\xiv)\tr\Do\inv\E(\xiv) > 0$ and using the fact that $\det(\D) = \prod_{1\le i \le d+1}\eigen[i](\D)$, we obtain 
\eq{
    \eigen[d+1](\D) &= \f{\det(\D)}{\eigen[1](\D) \times \prod_{j=2}^{d}\eigen[j](\D)}\nn\\
    &\en{i} \f{\s^{2d}\tau^2 \det(\Do) \pa{ 1 - \E(\xiv)\tr\Do\inv\E(\xiv) }}{\eigen[1](\D) \times \prod_{j=2}^{d}\eigen[j](\D)}\nn\\
    &= \s^{2d}\tau^2\omega \f{\prod_{j=1}^d\eigen[j](\Do)}{\eigen[1](\D) \times \prod_{j=2}^d\eigen[j](\D)}\nn\\
    &=  \f{\eigen[d](\Do)\tau^2\omega}{\eigen[1](\D)}\times \s^2 \times \left({\prod_{j=1}^{d-1}\s^{2} \eigen[j](\Do)}  \middle/ {\prod_{j=2}^d\eigen[j](\D)} \right)\nn\\
    &\en{ii}[\ge]  \pa{\f{\eigen[d](\Do)  \tau^2\omega}{\tau^2 + \s \tau C_1}} \s^2\nn\\
    &\en{iii}[\ge] \s^2 C_2,\nn
}
where (i) follows from the definition of $\det(\D)$ from \eref{eq:D-def}, (ii) uses the interlacing property of the eigenvalues in part \textit{2}, i.e. $\eigen[j](\D) \le \s^2 \eigen[j-1](\Do)$ for $j \in \pb{2 \dots d}$, and (iii) follows from taking $C_2 = \eigen[d](\Do) \tau^2 \omega / ({\tau^2 + \s \tau C_1}) > 0$. 

\end{proof}


The next lemma characterizes the spectral properties of $\P$. 

\begin{lemma}
    For $\e > 0$, the following properties hold for $\Po$ and $\P$.
    \begin{enumerate}
        \item Up to a collection of repeated zero eigenvalues, 
        $$
            \L\pa{ \P } = \L\qty\big({\yty \I[p+1, q]}) = \L\pa{ \pa{\yty}\pow \I[p+1,q] \pa{\yty}\pow}.
        $$
        \item With probability greater than $1 - 2(d+1)/n$, 
        $$\norm{\yty\I[p+1,q] - n\D\I[p+1,q]} \le C\sqrt{n\log n}.$$
        \item $\abs{ \eigen[d+1](\P) } = \Omega_{\pr}\qty\Big({ n\se[2] })$.
    \end{enumerate}
    \label{lemma:P-matrix}
\end{lemma}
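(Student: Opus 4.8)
All three parts are spectral bookkeeping on top of \cref{lemma:D-matrix}; I would organize them as follows.

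\textbf{Part 1} mirrors \cref{fact:spectrum}. Writing $\P = (\Y\I[p+1,q])(\Y\tr)$ and using that for conformable $\Bv,\Cv$ the products $\Bv\Cv$ and $\Cv\Bv$ share their nonzero eigenvalues with multiplicity, one gets $\L(\P) = \L(\yty\,\I[p+1,q])$ up to the $n-(d+1)$ zero eigenvalues carried by the low-rank matrix $\P$. Factoring $\yty = (\yty)\hpow(\yty)\hpow$ with the positive semidefinite square root and applying the same fact once more (with $\Bv = (\yty)\hpow\I[p+1,q]$ and $\Cv = (\yty)\hpow$) yields $\L(\yty\,\I[p+1,q]) = \L((\yty)\hpow\I[p+1,q](\yty)\hpow)$. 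This is routine.

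\textbf{Part 2} is a matrix concentration estimate. Since $\yty = \sum_{i=1}^n \eta_i\eta_i\tr$ with $\eta_i = \varphi_\e(X_i)$ i.i.d.\ from $\qr$ and $\E[\eta_i\eta_i\tr] = \D$, the quantity $\yty - n\D$ is a sum of i.i.d., mean-zero, symmetric $(d+1)\times(d+1)$ matrices. They are uniformly bounded, $\opnorm{\eta_i\eta_i\tr} = \te[2] + \se[2]\norm{X_i}^2 \le \pi(\e) + (1-2\pi(\e))M^2 \le 1 + M^2$ with $M = \max_{\xv\in\supp(\pr)}\norm{\xv} < \infty$; this bound does not depend on $\e$ or $n$, which keeps the final constant $\e$-uniform. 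A matrix Bernstein (or matrix Hoeffding) inequality then gives $\pr\pa{\opnorm{\yty - n\D} \ge t} \le 2(d+1)\exp(-c\,t^2/n)$ for $t = O(n)$, and taking $t = C\sqrt{n\log n}$ with $C$ large enough (depending only on $M$ and $d$) yields the stated bound with failure probability at most $2(d+1)/n$. Finally $\opnorm{(\yty - n\D)\I[p+1,q]} = \opnorm{\yty - n\D}$ because $\I[p+1,q]$ is orthogonal.

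\textbf{Part 3} combines the first two parts with \cref{lemma:D-matrix}. First reduce to $\lambda_{\min}(\yty)$: by Part 1, $\abs{\eigen[d+1](\P)}$ is the smallest-magnitude eigenvalue of $\yty\,\I[p+1,q]$, and $\mu$ is such an eigenvalue iff $\det(\yty - \mu\I[p+1,q]) = 0$ (multiply the characteristic equation by $\I[p+1,q]$, using $\I[p+1,q]^2 = \I[d+1]$); but if $\abs{\mu} < \lambda_{\min}(\yty)$ then $\yty - \mu\I[p+1,q] \succeq (\lambda_{\min}(\yty) - \abs{\mu})\I[d+1] \succ 0$ is nonsingular, so every eigenvalue of $\yty\,\I[p+1,q]$ has magnitude at least $\lambda_{\min}(\yty)$, i.e.\ $\abs{\eigen[d+1](\P)} \ge \lambda_{\min}(\yty)$. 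It then remains to show $\lambda_{\min}(\yty) = \Omega_{\pr}(n\se[2])$. The direct route is Weyl's inequality: $\lambda_{\min}(\yty) \ge n\,\eigen[d+1](\D) - \opnorm{\yty - n\D} \ge nC_2\se[2] - C\sqrt{n\log n}$ using \cref{lemma:D-matrix} (part 4) and Part 2, which is $\Omega(n\se[2])$ in the regime $n\se^4 = \omega(\log n)$ where the lemma is ultimately applied. For an $\e$-uniform bound one can instead exploit the $1+d$ block structure of $\eta_i\eta_i\tr$ (entry $\te[2]$, off-diagonal block $\se\te X_i$, block $\se[2]X_iX_i\tr$): its block $LDL^{\!\top}$ factorization $\yty = \Lv\,\textup{diag}(n\te[2],\,\Sv)\,\Lv\tr$ has Schur complement $\Sv = \se[2]\sum_i (X_i - \bar X)(X_i - \bar X)\tr$ and unit lower-triangular $\Lv$ with off-diagonal block of norm $\le \se M/\te \le 1$ for $\e$ small, whence $\lambda_{\min}(\yty) \ge \tfrac14\min\{n\te[2],\,\lambda_{\min}(\Sv)\}$; since $\textup{Cov}(\xi) \succ 0$, matrix concentration gives $\lambda_{\min}(\Sv) \ge \tfrac12 n\se[2]\lambda_{\min}(\textup{Cov}(\xi))$ with probability $1 - O(n^{-1})$, and $\se[2] \le \te[2]$ for small $\e$, which closes the bound.

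\textbf{Main obstacle.} Parts 1 and 2 are mechanical. The delicate point is Part 3: transferring the population lower bound $\eigen[d+1](\D) \gtrsim \se[2]$ of \cref{lemma:D-matrix} to the random Gram matrix $\yty$ while keeping every constant independent of $\e$, so the estimate stays sharp as $\e \to 0$. A bare Weyl perturbation via Part 2 suffices only once $n\se^4$ dominates $\log n$; the block/Schur-complement argument above removes this restriction by using the $\e$-uniform bound on $\norm{\eta_i}$ together with $\se[2] \lesssim \te[2]$.
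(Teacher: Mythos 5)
Parts 1 and 2 of your proof match the paper's argument exactly: Part 1 is the cyclic-permutation fact from \cref{fact:spectrum}, and Part 2 is matrix Bernstein applied to the i.i.d., uniformly bounded rank-one summands $\eta_i\eta_i\tr$ with expectation $\D$ (the paper cites Tropp's Theorem 6.1.1 with the same choice $\delta = \log n$).

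Part 3 is where your route genuinely diverges, and I think your version is cleaner. The paper argues
\begin{align}
\bigl|\,\lvert\lambda_i(\yty\I[p+1,q])\rvert - \lvert\lambda_i(n\D\I[p+1,q])\rvert\,\bigr| \le \bigl\lVert \yty\I[p+1,q] - n\D\I[p+1,q]\bigr\rVert,
\end{align}
citing Weyl's perturbation theorem for the second inequality. But $\yty\I[p+1,q]$ and $n\D\I[p+1,q]$ are not symmetric, and Weyl's theorem (Horn--Johnson 4.3.1) only covers Hermitian perturbations. One can check it actually fails for matrices of this form: with $\I = \mathrm{diag}(1,-1)$, $\Av = \begin{psmallmatrix}1 & a\\ a & 1\end{psmallmatrix}$, $\Bv = \begin{psmallmatrix}1 & b\\ b & 1\end{psmallmatrix}$ (both PSD for $a,b\in[0,1]$), the eigenvalues of $\Av\I$ are $\pm\sqrt{1-a^2}$, and taking $a=0.9$, $b=0.8$ gives $\lvert\sqrt{1-a^2}-\sqrt{1-b^2}\rvert \approx 0.16 > 0.1 = \lVert\Av-\Bv\rVert$. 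Your argument sidesteps this entirely: the determinant observation that $\det(\yty\I - \mu\Iv_{d+1}) = 0$ iff $\det(\yty - \mu\I) = 0$, and that the latter is impossible when $\lvert\mu\rvert < \lambda_{\min}(\yty)$, gives $\lvert\lambda_{d+1}(\P)\rvert \ge \lambda_{\min}(\yty)$ directly (this is equivalent to invoking the paper's own cited fact, Horn--Johnson 5.6.9, on $\yty\I$ rather than only on $\D\I$). You then apply Weyl where it is actually valid — to the symmetric pair $(\yty, n\D)$ — and conclude via \cref{lemma:D-matrix}(4) and Part~2. This reproduces the paper's final bound with a correct justification.

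Your additional observation — that the Weyl route only yields a nontrivial bound once $n\se^4 = \Omega(\log n)$, whereas the block $LDL^{\top}$/Schur-complement argument on $\yty$ gives an $\e$-uniform lower bound $\lambda_{\min}(\yty) = \Omega(n\se^2)$ whenever $\textup{Cov}(\xi)\succ 0$ — is correct and removes a regime restriction that the paper's proof carries implicitly (the paper states the condition ``$\s = \omega(\sqrt{\log n/n})$'', which should read $\s^2 = \omega(\sqrt{\log n/n})$; you state it correctly). The Schur complement $\Sv = \se^2\sum_i (X_i-\bar X)(X_i-\bar X)\tr$ concentrates around $n\se^2\,\textup{Cov}(\xi)$ at an $\e$-free rate, and $\sigma_{\min}(\Lv)^{-1} \le 1 + \se M/\te$ is bounded. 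That said, the restricted version is all the downstream results use (they operate where $n\se^4\rho_n^2 = \omega(\log n)$), so the Schur improvement is nice to have but not strictly required; the Weyl route applied to the symmetric pair is already sufficient and is the closer analogue of the paper's intent.
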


\begin{proof}
    As before, for notational simplicity throughout the proof we take $\s = \se$ and $\tau = \te$. 
    
    \textit{Part 1.} The first claim follows from the same argument as in \cref{fact:spectrum}.
    
    \textit{Part 2.} First, we note that $\yty$ can be written as the sum of iid random matrices
    \eq{
        \yty = \sum_{i=1}^n \Yv_i \tr \Yv_i \in \R^{d+1 \times d+1},\nn
    }
    with $\E(\Yv_i\tr\Yv_i) = \D$. Since $\pr$ has compact (and, therefore, bounded) support, each $\Xv_i \sim \pr$ has $\norm{\Xv_i} \le \diam(\bX) = L < \infty$ a.e.-$\pr$. Consequently, $\norm{\Yv_i} = \sqrt{\Yv_i\tr\Yv_i} = \norm{\tau \oplus \s \Xv_i} \le \sqrt{\tau^2 + L^2} = M < \infty$ a.e. For $t>0$, we can use the matrix Bernstein inequality \citep[Theorem~6.1.1]{tropp2015introduction} to obtain the tail bound
    \eq{
        \pr\qty\Big( \norm{\yty - n\D} > t ) \le 2d \exp{ \f{-t^2}{2 v(\yty) + \f{2}{3}
        Mt} },
        \label{eq:bernstein}
    }
    where $v(\yty) = n \norm{\E(\Yv_i\tr\Yv_i\Yv_i\tr\Yv_i)} \le n \max\pb{ \tau^2 + \s\tau C_1, M^2 } = n k$ for $k = \max\pb{ \tau^2 + \s\tau C_1, M^2 }$. This follows from the fact that $E(\Yv_i\tr\Yv_i) = \D$ is bounded in spectral norm from Lemma~\ref{lemma:D-matrix}(\textit{3}), and $E(\Yv_i\Yv_i\tr) \le \norm{\Yv_i}^2 \le M^2$. For $\delta > 0$, \eref{eq:bernstein} is equivalent to 
    \eq{
        \pr\qty\Bigg( \norm{\yty - n\D} \le \sqrt{{2 k \delta n}} + \f{2M}{3} \delta) \ge 1 - 2d \ e^{-\delta}.\nn
    }
    Taking $\delta = \log n$ and $C$ sufficiently large, it follows that
    \eq{
        \pr\qty\Big(\norm{\yty - n\D} \le C \sqrt{n \log n}) \ge 1 - \f{2(d+1)}{n}.\nn
    }
    By noting that 
    $$
    \norm{\yty\I[p+1, q] - n\D\I[p,q]} \le \norm{\yty - n\D} \cdot \norm{\I[p+1,q]} = \norm{\yty - n\D},
    $$
    we obtain the desired result, i.e. with probability greater than $1- 2(d+1) n\inv$
    \eq{\label{eq:bernstein2}
        \norm{\yty\I[p+1, q] - n\D\I[p,q]} \le C\sqrt{n\log n}. 
    }

    \textit{Part 3.} We begin by noting that
    \eq{
        \qty\Big| \abs\big{\eigen[i](\yty\I[p+1, q])} - \abs\big{\eigen[i](n\D\I[p+1, q])}| 
        &\en{i}[\le] \qty| {\eigen[i](\yty\I[p+1, q])} - {\eigen[i](n\D\I[p+1, q])} |\nn\\ 
        &\en{ii}{\le} \norm{\yty\I[p+1, q] - n\D\I[p+1, q]},\label{eq:bernstein3}
    }
    where (i) follows from the reverse triangle inequality, and (ii) is a consequence of Weyl's perturbation theorem \citep[Theorem~4.3.1]{horn2012matrix}. Therefore, the eigenvalues of $\yty\I[p+1, q]$ can be controlled by the eigenvalues of $n\D\I[p+1, q]$. We now obtain a lower bound on $\eigen[i](n\D\I[p+1, q]) = n\eigen[i](\D\I[p+1, q])$. 

    From \textit{Part (1)} we have that $\eigen[i](\D\I[p+1, q]) = \eigen[i](\I[p+1, q]\D)$. Furthermore, the singular-values of $\I[p+1, q]\D$ are given by 
    $$
    \singular[i](\I[p+1, q]\D) =  \sqrt{ \eigen[i]\qty\Big( \qty(\I[p+1, q]\D)\tr\I[p+1, q]\D ) } = \sqrt{ \eigen[i](\D^2) } = \abs{\eigen[i](\D)}.
    $$ 
    From \citet[Theorem~5.6.9]{horn2012matrix}, it follows that 
    \eq{
        \abs{\eigen[\min](\D\I[p+1, q])} = \abs{\eigen[\min](\I[p+1, q]\D)} \ge \singular[\min](\I[p+1, q]\D) = \abs{\eigen[\min](\D)} \ge \s^2 C_2,\nn
    }
    
    where the last inequality follows from Lemma~\ref{lemma:D-matrix}. Therefore, we obtain the lower bound 
    \eq{\label{eq:bernstein4}
        \abs{\eigen[\min](n\D\I[p+1, q])} \ge n\s^2 C_2.
    }

    Similarly, we can also find an upper bound for $\abs{\eigen[\max](\D\I[p+1, q])}$ using \citet[Theorem~5.6.9]{horn2012matrix} by observing that
    \eq{\label{eq:bernstein5}
    \abs{\eigen[\max](\D\I[p+1, q])} \le \singular[\max](\D\I[p+1, q]) \le \abs{\eigen[\max](\D)} = \tau^2 + \s\tau C_1.\nn
    }
    This implies that $\abs{\eigen[\max](n\D\I[p+1, q])} = \Theta(n)$. Moreover, combining \eref{eq:bernstein3} with \eref{eq:bernstein2} we obtain
    \eq{
        \pr\qty\Big( \abs{\eigen[d+1](\yty\I[p+1, q])} >  \abs{\eigen[d+1](n\D\I[p+1, q])} - C\sqrt{n\log n}  ) \ge 1 - \f{2(d+1)}{n},
    }
    which, when combined with \eref{eq:bernstein4} yields
    \eq{
        \pr\qty\Big( \abs{\eigen[d+1](\yty\I[p+1, q])} >  n\s^2 C_2 - C\sqrt{n\log n}  ) \ge 1 - \f{2(d+1)}{n}.\nn
    }
    When $\s > \f{2C}{C_2}\sqrt{{\log n}/{n}}$, or, alternatively, when $\s = \omega\qty(\sqrt{{\log n}/{n}})$, the lower bound is non-trivial. Therefore, when $\s = \omega\qty(\sqrt{{\log n}/{n}})$ it follows that
    \eq{
        \pr\qty\Bigg( \abs{\eigen[d+1](\yty\I[p+1, q])} >  \f{C_2}{2}n\s^2  ) \ge 1 - \f{2(d+1)}{n}.\nn
    }
    The result follows by noting that this is equivalent to the statement that $\abs{\eigen[d+1](\P)} = \Omega_{\pr}(n\s^2 )$.
\end{proof}


We collect the main findings from Lemma~\ref{lemma:D-matrix} and Lemma~\ref{lemma:P-matrix} in the following corollary. 

\begin{corollary}
    For $\P$, $\D$ and their associated eigenvalues $\eigen[i](\P)$ and $\eigen[i](\D)$:
    \begin{enumerate}
        \item $\eigen[\min](\D) = \Omega(\se^2)$ and $\eigen[\max](\D) = O(1)$. 
        \item $\eigen[\min](\P) = \Omega(n\se^2)$ and $\eigen[\max](\P) = \Theta(\eigen[\max]{(\Po)})$.
    \end{enumerate}
    \label{corollary:spectral-norm}
\end{corollary}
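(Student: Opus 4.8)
The plan is to obtain \cref{corollary:spectral-norm} by reading the required bounds directly off \cref{lemma:D-matrix} and \cref{lemma:P-matrix}, supplying only the elementary facts $\te^2 = 1/(e^\e+1) < 1$, $\se^2 = (e^\e-1)/(e^\e+1) < 1$, $\se\te \le \tfrac{1}{2}(\se^2+\te^2) < \tfrac{1}{2}$, and $\se^2+\te^2 = e^\e/(e^\e+1) \ge \tfrac{1}{2}$, which are what is needed to turn the inequalities proved there into $O(\cdot)$, $\Omega(\cdot)$, $\Theta(\cdot)$ statements with constants free of $n$ and $\e$.

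For Part~1: $\D$ is positive definite by \cref{lemma:D-matrix}(1), so $\eigen[\min](\D) = \eigen[d+1](\D) \ge C_2\se^2 = \Omega(\se^2)$ by \cref{lemma:D-matrix}(4), and $\eigen[\max](\D) = \eigen[1](\D) \le \te^2 + C_1\se\te < 1 + C_1 = O(1)$ by \cref{lemma:D-matrix}(3). I would additionally record, for use in Part~2, that $\eigen[\max](\D) = \Omega(1)$: since $\D$ is positive semidefinite, $\eigen[\max](\D) \ge (\D)_{11} = \te^2$, while the interlacing bound \cref{lemma:D-matrix}(2) gives $\eigen[\max](\D) \ge \se^2\eigen[\max](\Do)$; combining, $\eigen[\max](\D) \ge \max\{\te^2,\ \se^2\eigen[\max](\Do)\} \ge \tfrac{1}{4}\min\{1,\ \eigen[\max](\Do)\}$ using $\max\{\se^2,\te^2\} \ge \tfrac{1}{4}$, and $\eigen[\max](\Do)$ is an absolute positive constant because $\pr$ is fixed with compact support and full-rank covariance. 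Hence $\eigen[\max](\D) = \Theta(1) = \Theta(\eigen[\max](\Do))$.

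For Part~2: the bound $\eigen[\min](\P) = \Omega(n\se^2)$ is precisely \cref{lemma:P-matrix}(3), where — here and throughout \cref{lemma:P-matrix} — $\eigen[\min](\P)$, $\eigen[\max](\P)$ denote the smallest and largest \emph{nonzero} eigenvalues of the indefinite, rank-$(d+1)$ matrix $\P$ measured \emph{in absolute value}. For $\eigen[\max](\P)$ I would combine \cref{lemma:P-matrix}(1)--(2) with Weyl's inequality to get the upper bound $\eigen[\max](\P) \le n\,\singular[\max](\D\I[p+1,q]) + C\sqrt{n\log n} = n\eigen[\max](\D) + C\sqrt{n\log n} = O(n)$, using $\singular[\max](\D\I[p+1,q]) = \eigen[\max](\D)$ together with Part~1. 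For the matching lower bound I would use the identity $\P = \te^2\onev[n]\onev[n]\tr + \se^2\Po$, which is immediate from $\Y = \varphi_\e(\X)$ and amounts to \cref{prop:closure}: the Rayleigh quotient at $\onev[n]/\sqrt n$ gives $\eigen[\max](\P) \ge \te^2 n + \se^2\,\onev[n]\tr\Po\onev[n]/n \ge \te^2 n$ since $\Po$, being an edge-probability matrix, has nonnegative entries, while Weyl's inequality (using that $\te^2\onev[n]\onev[n]\tr$ is positive semidefinite) shows the top eigenvalue of $\P$ is at least $\se^2$ times that of $\Po$, which is positive and of order $n$ because the GRDPG signature has $p \ge 1$ (matrix-Bernstein plus Weyl, with constant depending only on the fixed matrix $\Do$); hence $\eigen[\max](\P) = \Omega(\se^2 n)$ too. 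Since $\max\{\se^2,\te^2\} \ge \tfrac{1}{4}$, these two lower bounds force $\eigen[\max](\P) = \Omega(n)$, and with the $O(n)$ upper bound and the analogous $\eigen[\max](\Po) = \Theta(n)$ this yields $\eigen[\max](\P) = \Theta(n) = \Theta(\eigen[\max](\Po))$.

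The argument is routine apart from bookkeeping; the place that needs care is the eigenvalue accounting for the indefinite matrices $\P$ and $\Po$ — staying consistent that $\eigen[\min]$, $\eigen[\max]$ mean the smallest/largest nonzero eigenvalue \emph{in absolute value}, keeping in mind that the dominant eigenvalue may carry either sign, and verifying that every hidden constant is uniform in $\e$. That uniformity is the real point: it holds because $\pr$ is a fixed distribution (so $\eigen[\min](\Do)$, $\eigen[\max](\Do)$ and the diameter of $\supp(\pr)$ are absolute constants), because the bounds imported from \cref{lemma:D-matrix} and \cref{lemma:P-matrix} are uniform over the relevant range of $\e$, and because in the lower bound for $\eigen[\max](\P)$ the two competing terms $\te^2 n$ and $\se^2 \eigen[\max](\Po)$ cannot both be small, thanks to $\se^2+\te^2 \ge \tfrac{1}{2}$.
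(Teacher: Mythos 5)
Your proposal is correct, and it is essentially the argument the paper's corollary is meant to encode; the paper itself supplies no proof, merely prefacing the corollary with ``we collect the main findings from Lemma~\ref{lemma:D-matrix} and Lemma~\ref{lemma:P-matrix}.'' The bounds in Part~1 and $\eigen[\min](\P) = \Omega(n\se^2)$ are direct reads of Lemma~\ref{lemma:D-matrix}(3)--(4) and Lemma~\ref{lemma:P-matrix}(3), exactly as you say. Where you go beyond what the paper makes explicit is the $\Theta$ claim for $\eigen[\max](\P)$: the proof of Lemma~\ref{lemma:P-matrix}(3) only records an \emph{upper} bound $\abs{\eigen[\max](\D\I[p+1,q])} \le \te^2 + C_1\se\te$ and then asserts $\abs{\eigen[\max](n\D\I[p+1,q])} = \Theta(n)$ without justifying the lower bound, which is not obvious since $\te^2 + C_1\se\te$ can be small when $\eps$ is large. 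Your two-pronged lower bound --- $\eigen[\max](\P) \ge \te^2 n$ from the Rayleigh quotient at $\onev[n]/\sqrt{n}$ and $\eigen[\max](\P) \ge \se^2\eigen[\max](\Po)$ from Weyl's inequality applied to the decomposition $\P = \te^2\onev[n]\onev[n]\tr + \se^2\Po$, combined via $\se^2 + \te^2 \ge 1/2$ --- is a clean and genuinely needed repair. Your care about the conventions ($\eigen[\max]$, $\eigen[\min]$ as extreme nonzero eigenvalues in absolute value; the implicit $\rho_n \equiv 1$ normalization of $\Po$ in \cref{sec:auxiliary}; uniformity of constants in $\eps$) is also well placed, since these are exactly the places where the corollary could silently fail.
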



For $\M_\e(\Av) =: \Av_\Y \sim \grdpg(\Y; p+1, q)$, next lemma establishes a tail bound for $\norm{\Av_\Y - \Pv_\Y}$ using a straightforward application of \citet[Theorem~5.2]{lei2015consistency}. 

\begin{lemma}Under the conditions of Theorem~\ref{prop:grdpg-estimation}, there exists a constant $C>0$ such that with probability greater than $1 - 1/n$,
    \eq{
        \norm{\Av_\Y - \Pv_\Yv} \le C\sqrt{ n \gamma },\nn
    }
    where $\gamma = \se[2] \max_{ij}(\Po)_{ij} + \te[2] \le 1$. 
    \label{lemma:lei1}
\end{lemma}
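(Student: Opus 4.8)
The plan is to recognize $\Av_\Y \defeq \M_\e(\Av)$ as a random graph whose edges are mutually independent given $\X$, and then invoke the spectral‑norm concentration bound of \cite[Theorem~5.2]{lei2015consistency}, which controls $\|\Av - \E\Av\|$ for such graphs in terms of their maximal connection probability.

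First I would record the structural facts needed to apply that theorem. By \cref{prop:closure}, $(\Av_\Y, \Y)\sim\grdpg(\varphi_\e\push\pr, 1; p+1, q)$ with $\Y \defeq \varphi_\e(\X)$, so conditionally on $\X$ (equivalently on $\Y$) the entries $\{(\Av_\Y)_{ij}\}_{i<j}$ are independent with $(\Av_\Y)_{ij}\sim\textup{Ber}\big((\Pv_\Y)_{ij}\big)$, where $\Pv_\Y = \Y\I[p+1,q]\Y\tr$; this conditional independence is inherited from the conditional independence of the edges of $\Av$ together with the independent coin flips defining $\F$ in \cref{def:eflip}. From the explicit form $\varphi_\e(\xv) = \te\oplus\se\rho_n^{1/2}\xv$, the off‑diagonal entries satisfy $(\Pv_\Y)_{ij} = \te^2 + \se^2\rho_n\,X_i\tr\I[p,q]X_j = \te^2 + \se^2(\Po)_{ij}$, and $(p,q)$‑admissibility of $\pr$ gives $(\Po)_{ij}\in[0,\rho_n]$, so that $\max_{ij}(\Pv_\Y)_{ij} = \te^2 + \se^2\max_{ij}(\Po)_{ij} = \gamma \le \te^2+\se^2\le 1$.

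Next I would take $d\defeq n\gamma$ and verify the sparsity hypothesis $d\ge c_0\log n$ of \cite[Theorem~5.2]{lei2015consistency}. Since $\gamma\ge\te^2$ and $\te^2,\se^2$ cannot both be small — they coincide at $\e=\log 2$, where each equals $1/3$ — under the hypotheses of \cref{prop:grdpg-estimation} (which include $n\rho_n=\omega(\log n)$ and the scale normalization $\E(\xi_1\tr\I[p,q]\xi_2)=1$) one has $n\gamma=\omega(\log n)$: either $\te^2\ge 1/3$ and $n\gamma\ge n/3$, or $\se^2\ge 1/3$ and $n\gamma\ge\tfrac13 n\max_{ij}(\Po)_{ij}$, the latter being $\omega(\log n)$ with high probability because $\max_{i<j}X_i\tr\I[p,q]X_j\ge\binom{n}{2}^{-1}\sum_{i<j}X_i\tr\I[p,q]X_j\to 1$. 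Applying \cite[Theorem~5.2]{lei2015consistency} conditionally on $\Y$ with $r=1$ then produces a constant $C$ with $\big\|\Av_\Y - \E[\Av_\Y\mid\Y]\big\|\le C\sqrt{n\gamma}$ with conditional probability at least $1-n^{-1}$; as this bound is uniform over admissible realizations of $\Y$, it also holds unconditionally. Finally, since $\Av_\Y$ has zero diagonal while $\E[\Av_\Y\mid\Y]$ equals $\Pv_\Y$ with its diagonal zeroed out, $\|\Av_\Y - \Pv_\Y\|\le C\sqrt{n\gamma}+\max_i(\Pv_\Y)_{ii}\le C\sqrt{n\gamma}+\gamma\le(C+1)\sqrt{n\gamma}$, using $\gamma\le 1\le\sqrt{n\gamma}$; absorbing the constant into $C$ gives the claim.

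Since the quantitative content is essentially a single citation, the only step requiring real care is the bookkeeping in the previous paragraph: confirming that the Lei–Rinaldo sparsity condition $n\gamma\ge c_0\log n$ genuinely follows from the standing hypotheses across all relevant $(\e,\rho_n)$ regimes — in particular the regime $\e\to\infty$, where $\te^2\to 0$ and one must fall back on $\se^2\max_{ij}(\Po)_{ij}\asymp\rho_n$ — and checking that the diagonal mismatch between $\Av_\Y$ and $\Pv_\Y$ contributes only a lower‑order term. I do not anticipate any genuine obstacle beyond this.
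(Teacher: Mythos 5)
Your proposal matches the paper's proof: both reduce the claim to a direct application of Lei--Rinaldo's Theorem~5.2 after checking that $\E[\M_\e(\Av)_{ij}\mid\X] = (\Pv_\Y)_{ij}$ for $i<j$ and that $\max_{ij}(\Pv_\Y)_{ij} = \gamma \le 1$. Where you diverge is in the sparsity verification, and there your version is actually more careful than the paper's: the paper dismisses the requirement $n\gamma \ge c_0\log n$ with the chain ``$n\gamma > n\te^2 > c_0\log n$ trivially satisfied,'' but $\te^2 = 1/(e^\e+1)$ vanishes as $\e\to\infty$, and nothing in the standing hypotheses prevents $e^\e$ from growing faster than $n/\log n$, in which case $n\te^2 = o(\log n)$ and the paper's chain breaks. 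Your dichotomy --- for $\e\le\log 2$ use $\te^2\ge 1/3$ so $n\gamma \gtrsim n$, while for $\e>\log 2$ use $\se^2\ge 1/3$ and $n\gamma \gtrsim n\rho_n\max_{ij}X_i\tr\I[p,q]X_j = \omega(\log n)$ via the scale normalization $\E[\xi_1\tr\I[p,q]\xi_2]=1$ --- closes this gap uniformly over $\e$. Your remark about the diagonal mismatch between $\Av_\Y$ (zero diagonal) and $\Pv_\Y$ (with $(\Pv_\Y)_{ii}>0$) is also a genuine point the paper's proof passes over in silence; as you note, its $O(\gamma) \le O(\sqrt{n\gamma})$ contribution is harmless, so the conclusion is unaffected. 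In short: same route, same citation, with two small bookkeeping corrections on your side that the paper should arguably have made explicit.
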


\begin{proof}
    We may directly apply \citet[Theorem~5.2]{lei2015consistency} to obtain the claim provided we verify the following two conditions: (i) $\E\qty\big(\M_\e(\Av)) = \P$, and (ii) $n \max_{i, j}(\P)_{ij} < n \gamma$. Note that the additional requirement that $n \gamma > n\te^2 > c_0 \log n$ for a fixed $c_0 > 0$ is trivially satisfied. To verify the first condition, for $1 \le i < j \le n$ we have that
    \eq{
        \E\qty\bigg(\M_\e(\Av_{ij})) &\en{i} \pr\qty\Big({ \M_\e(\Av_{ij}) = 1})\nn\\
        &\en{ii} \pr\qty\Big({ \M_\e(\Av_{ij}) = 1} \mid \Av_{ij = 1}) \pr\pa{\Av_{ij} = 1} + \pr\qty\Big({ \M_\e(\Av_{ij}) = 1} \mid \Av_{ij = 0}) \pr\pa{\Av_{ij} = 0}\nn\\
        &= \qty\big(1 - \pi(\e)) \Xv_i\tr\I[p,q]\Xv_j + \pi(\e) (1-\Xv_i\tr\I[p,q]\Xv_j)\nn\\
        &= \se[2]\Xv_i\tr\I[p,q]\Xv_j + \te[2] = (\P)_{ij},\nn
    }
    where (i) uses the fact that $\M_\e(\Av_{ij})$ is from a Bernoulli distribution, and (ii) uses the definition of \eflip{}. For the second condition, note that 
    \eq{
        \max_{i,j \in [n]}(\P)_{ij} = \max_{i,j \in [n]}\se[2]\Xv_i\tr\I[p,q]\Xv_j + \te[2] = \se[2]\qty\Big(\max_{i,j \in [n]}\Xv_i\tr\I[p,q]\Xv_j) + \te[2] = \gamma \le 1.\nn
    }
    Therefore, the claim that $\norm{\Av_\Y - \P} = \op\pa{\sqrt{n}}$ follows from \citet[Theorem~5.2]{lei2015consistency}. 
\end{proof}

\endgroup


\subsubsection{Upper bounds for GRDPG Estimation}

In this section, recall some key results from \cite{rubin2022statistical,agterberg2020nonparametric,yan2023minimax} which are used in the proofs of our main results. The alignment matrices appearing here are described in \cref{sec:alignment}.

Recall that for $\X, \Y \in \Rnd$, the $\ell_{2\to\infty}$ metric between $\X$ and $\Y$ which reproduces the non-identifiability of the GRDPG is given by
\begin{align}
    \dttinf(\X, \Y) = \inf_{\O \in \o(p,q) \cap \o(d)} \ttinf\Big{\Y\Qy\inv - \X\Qx\inv\O}.\label{eq:dttinf-2}
\end{align}
where the matrices $\Qx, \Qy$ are the alignment matrices for $\X$ and $\Y$, respectively, as defined in \cref{fact:Qx}. Equivalently,
\begin{align}
    \dttinf(\X, \Y) = \inf_{\O \in \o(p,q) \cap \o(d)} \ttinf\Big{\U_\Y\abs{\L_\Y}^{1/2} - \U_\X\abs{\L_\X}^{1/2}\O}.\label{eq:dttinf-3}
\end{align}

The following result from \cite{agterberg2020nonparametric} is a restatement of the main result from \cite{rubin2022statistical} for estimating the latent positions of GRDPGs under the $\dttinf$ metric.

\begin{proposition}[Theorem~4 of \cite{agterberg2020nonparametric}]\label{prop:grdpg-estimation}
    Given $(p,q)$-admissible $\pr$, let $(\Av, \Xv) \sim \grdpg(\pr; p, q, \rho_n)$, and let $\hX = \spec(\Av; d) \in \Rnd$ be the adjacency spectral embedding of $\Av$. If $n\rho_n = \omega(\log^4{n})$, then with probability $1 - O(n^{-2})$,
    \begin{align}
        \dttinf\qty(\hX(\Av), \rho_n^{1/2}\X) = O\qty(\frac{\log{n}}{\sqrt{n}}).
    \end{align}
    Equivalently, for estimating $\X$, it follows that with probability $1 - O(n^{-2})$,
    \begin{align}
        \dttinf\qty(\rho_n^{-1/2}\hX(\Av), \X) = O\qty(\frac{\log{n}}{\sqrt{n\rho_n}}).
    \end{align}
\end{proposition}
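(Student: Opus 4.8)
This is a restatement of Theorem~4 of \cite{agterberg2020nonparametric} (following Theorem~1 of \cite{rubin2022statistical}); the plan below recapitulates the entrywise eigenvector analysis that proves it. First I would reduce the claim to an $\ell_{2,\infty}$ eigenvector-perturbation bound. Write $\P := \rho_n\X\I[p,q]\X\tr$ with top-$d$ spectral data $(\U_\P,\L_\P)$ and $\t{\X} := \U_\P\abs{\L_\P}^{1/2}$, so that $\rho_n^{1/2}\X\Qx\inv = \t{\X}$ by \cref{fact:Qx}. Because $\hX = \spec(\Av;d)$ is itself an adjacency spectral embedding, \cref{eq:dttinf-3} turns the target into
\begin{align}
    \dttinf\qty(\hX,\ \rho_n^{1/2}\X) \;=\; \min_{\O\in\o(p,q)\cap\o(d)} \ttinf{\hX\O - \t{\X}},
\end{align}
so it is enough to exhibit one aligner $\O \in \o(p,q)\cap\o(d)$ --- supplied, up to a lower-order correction, by the Procrustes matrix $\O_\X$ of \cref{fact:Ox} and the alignment apparatus of \cref{sec:alignment} --- with $\ttinf{\hX\O-\t{\X}} = O(\log n/\sqrt n)$. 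The ``equivalently'' display would then follow at once, since $\dttinf$ is positively homogeneous ($\dttinf(c\X,c\Y)=c\,\dttinf(\X,\Y)$, by \cref{lemma:qx-scale}): rescaling by $\rho_n^{-1/2}$ gives the bound for estimating $\X$.

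Next I would assemble the two ingredients that control this quantity. (i) \emph{Noise concentration}: with probability $1-O(n^{-2})$, $\opnorm{\Av-\P}=O(\sqrt{n\rho_n})$ by the Lei--Rinaldo bound --- this is \cref{lemma:lei1} at $\e=\infty$ (so $\se=1$, $\te=0$). (ii) \emph{Eigenvalue separation}: conditionally on $\X$, the empirical second moment $\tfrac1n\X\tr\X$ concentrates around the positive-definite $\D_\xi = \E(\xi\xi\tr)$ with $O(1)$ condition number, so by \cref{fact:spectrum} the nonzero eigenvalues of $\P$ satisfy $\abs{\lambda_1(\P)} = \Theta(n\rho_n)$ and $\abs{\lambda_d(\P)} = \Omega(n\rho_n)$ while $\lambda_{d+1}(\P) = 0$. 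Since $n\rho_n = \omega(\log^4 n)$, this forces $\opnorm{\Av-\P} = o(\abs{\lambda_d(\P)})$: the signal dominates the noise, so by Weyl's inequality the top-$d$ eigenvalues of $\Av$ inherit the sign pattern $\I[p,q]$, a Davis--Kahan bound gives $\opnorm{\U_\Av\U_\Av\tr - \U_\P\U_\P\tr} = O((n\rho_n)^{-1/2})$, and $\U_\Av$ inherits the incoherence $\ttinf{\U_\P} = O(\sqrt{d/n})$.

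The hard part is upgrading this $\ell_2$-level control to the row-wise bound, because $\U_\Av$ is not independent of the rows of $\Av$. I would expand $\hX\O_\X - \t{\X}$ in the usual way: the leading term is, up to rotation and signs, of the form $(\Av-\P)\U_\P\abs{\L_\P}^{-1/2}$, while every remaining term is either quadratic in $\opnorm{\Av-\P}/\abs{\lambda_d(\P)}$ or controlled by $\ttinf{\U_\Av - \U_\P R}$ for a suitable rotation $R$. The latter I would handle by a \emph{leave-one-out} construction, comparing $\U_\Av$ row by row with the eigenvectors of the matrix obtained by zeroing the $i$-th row and column of $\Av$ (which is independent of row $i$). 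For the leading term, each row $i$ of $(\Av-\P)\U_\P$ is a sum of $n$ independent, bounded, mean-zero vectors in $\R^d$; a vector Bernstein inequality plus a union bound over $i \in [n]$, followed by division by $\abs{\lambda_d(\P)}^{1/2} = \Theta(\sqrt{n\rho_n})$ and bookkeeping of the (mildly lossy) remainders, should yield $\ttinf{\hX\O_\X - \t{\X}} = O(\log n/\sqrt n)$ --- the stated rate, whose $\sqrt{\log n}$ gap to the conjectured $O(\sqrt{\log n/n})$ is the same slack noted after \cref{eq:convergence-rate-no-privacy}. I expect this step to be the main obstacle: the indefinite signature $(p,q)$ rules out off-the-shelf positive-semidefinite eigenvector-perturbation results, so one must track the full top-$d$ eigenspace (positive and negative eigenvalues jointly), control the alignment through an $\o(p,q)$ correction, and run the leave-one-out argument carefully enough for an $\ell_{2,\infty}$ --- rather than a mere $\ell_2$ --- guarantee. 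Transferring from $\t{\X}$ to $\rho_n^{1/2}\X$ would additionally use that $\Qx$ in \cref{fact:Qx} has $O(1)$-bounded norm and inverse, which follows from its explicit formula and the concentration of $\tfrac1n\X\tr\X$ around $\D_\xi \succ 0$.
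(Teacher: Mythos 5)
This proposition is imported verbatim from Theorem~4 of \cite{agterberg2020nonparametric} (building on Theorem~1 of \cite{rubin2022statistical}), and the paper does not re-prove it, so there is no internal proof to compare your sketch against. Judged on its own terms, your reconstruction is sound and identifies the right ingredients: the reduction via \cref{eq:dttinf-3} and \cref{fact:Qx} to bounding $\ttinf{\hX\O-\t{\X}}$; the $\opnorm{\Av-\P}=O(\sqrt{n\rho_n})$ concentration (\cref{lemma:lei1} at $\e=\infty$); the eigenvalue separation $\abs{\lambda_d(\P)}=\Omega(n\rho_n)$ forcing the sign pattern $\I[p,q]$; the positive-homogeneity step via \cref{lemma:qx-scale}; and the $\sqrt{\log n}$ slack. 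One methodological divergence from the cited proof (as reflected in the paper's own proof of the closely parallel \cref{thm:convergence-rate} in Appendix~\cref{proof:prop:grdpg-estimation}): the proof in \cite{rubin2022statistical,agterberg2020nonparametric} does not rely on a leave-one-out construction but instead uses the deterministic decomposition from \cite[Eq.~(15)]{rubin2022statistical} of the form $\Xt-\t{\X}\O=(\Av-\P)\U_\P\abs{\L_\P}^{-1/2}\I[p,q]\O_\X+\Rv$, with $\ttinf{\Rv}$ controlled by the two-to-infinity perturbation propositions of Cape--Tang--Priebe \cite[Propositions~6.3~\&~6.5]{cape2019two} and $\ttinf{(\Av-\P)\U_\P}$ controlled by incoherence-aware concentration (\cite[Lemma~12]{rubin2022statistical}, \cite[Lemma~3]{agterberg2020nonparametric}). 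Your leave-one-out alternative would also work --- it is the route taken in the symmetric-eigenvector perturbation literature --- but it is not the one used here, and you would have to redo that argument for the indefinite signature, whereas Cape--Tang--Priebe's bounds are already signature-agnostic. Either way, your sketch is a faithful reconstruction of why the result holds.
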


In addition to the above result, we also need the following key property of the matrices $\Qx$ and its population analogue $\Qz$. The expressions for $\Qz, \Qz$ are given in \cref{fact:Qx} and \cref{fact:Oz}, respectively. The following result is an improvement of \cite[Lemma~16]{solanki2019persistent} using the matrix concentration result from \cite[Lemma~2]{agterberg2020nonparametric}.

\begin{lemma}
    For $\D\equiv \D_{\pr}$ and $\X  \sim_{\iid{}} \pr$, let $\Qx \in \o(p, q)$ and $\Qz \in \o(p, q) \cap \o(d)$ be the alignment matrices described in \cref{fact:Qx} and \cref{fact:Qz}, respectively. Then, there exists a block orthogonal matrix $\O \in \o(d) \cap \o(p, q)$ such that
    \eq{
        \norm{\Qx\inv - \Qz\inv\O} = \op\pa{\sqrt{\frac{\log n}{n}}}.\nn
    }
    Moreover, by \cref{lemma:qx-scale}, the result holds true for arbitrary scale transformations $t\X$ and $t\xi$ for $t > 0$.
    \label{lemma:subspace-alignment}
\end{lemma}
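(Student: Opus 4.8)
The plan is to reduce the entire statement to a single matrix-concentration estimate for the empirical second-moment matrix and then propagate it through the (locally smooth) algebra that builds $\Qx$ out of $\D_\X$. Introduce the shorthands $M_\X := \D_\X^{1/2}\I[p,q]\D_\X^{1/2}$ and $M_\xi := \D_\xi^{1/2}\I[p,q]\D_\xi^{1/2}$, with spectral decompositions $M_\X = \V\L_2\V\tr$ and $M_\xi = \W\L_\xi\W\tr$ as in \cref{fact:Qx} and \cref{fact:Qz}. Since every factor of $\Qx = \abs{\L_2}^{-1/2}\V\tr\D_\X^{1/2}$ and of $\Qz = \abs{\L_\xi}^{-1/2}\W\tr\D_\xi^{1/2}$ is invertible, we have the closed forms $\Qx\inv = \D_\X^{-1/2}\V\abs{\L_2}^{1/2}$ and $\Qz\inv = \D_\xi^{-1/2}\W\abs{\L_\xi}^{1/2}$. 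It therefore suffices to show that the three pieces $\D_\X^{-1/2}$, $\V$ and $\abs{\L_2}^{1/2}$ converge --- with $\V$ converging only after a common orthogonal alignment --- to their population counterparts at the rate $\sqrt{\log n/n}$, and then to multiply out the resulting expansions, absorbing cross terms using boundedness of the population factors.

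The first step is the concentration bound, which is essentially \cite[Lemma~2]{agterberg2020nonparametric}: because $\D_\X = n\inv\sum_{i=1}^{n} X_iX_i\tr$ is an average of i.i.d. matrices that are bounded in operator norm (as $\supp(\pr)$ is compact) with mean $\D_\xi$, matrix Bernstein gives $\norm{\D_\X - \D_\xi} = \op(\sqrt{\log n/n})$. As $\D_\xi$ is a fixed positive-definite matrix, on the same event $\D_\X$ is invertible with $\lambda_{\min}(\D_\X) \gtrsim \lambda_{\min}(\D_\xi)$, so both the square-root and the inverse-square-root maps are Lipschitz on the region of interest; hence $\norm{\D_\X^{1/2} - \D_\xi^{1/2}}$ and $\norm{\D_\X^{-1/2} - \D_\xi^{-1/2}}$ are also $\op(\sqrt{\log n/n})$, and multiplying by the fixed bounded matrix $\I[p,q]$ yields $\norm{M_\X - M_\xi} = \op(\sqrt{\log n/n})$.

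The crux is the spectral step, and it is where the block-orthogonal structure of $\O$ is forced. The population matrix $M_\xi$ is fixed and, by Sylvester's law of inertia, congruent to $\I[p,q]$, so it has exactly $p$ positive and $q$ negative eigenvalues, all bounded away from $0$ and $\infty$ by constants; consequently any two distinct eigenvalue clusters of $M_\xi$ are separated by a $\Theta(1)$ gap. Weyl's inequality then gives $\norm{\abs{\L_2}^{1/2} - \abs{\L_\xi}^{1/2}} = \op(\sqrt{\log n/n})$ once the eigenvalues are matched cluster by cluster, and a Davis--Kahan $\sin\Theta$ bound applied to each eigenvalue cluster of $M_\xi$ produces an orthogonal matrix $\O$ --- block-diagonal with respect to the eigenspace decomposition of $M_\xi$ --- with $\norm{\V - \W\O} = \op(\sqrt{\log n/n})$. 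Two features of this $\O$ close the argument: its block structure refines the splitting of coordinates into the $p$-dimensional positive and $q$-dimensional negative eigenspaces of $M_\xi$, and since a matrix lying in both $\o(d)$ and $\o(p,q)$ is exactly one that commutes with $\I[p,q]$ (equivalently, is block diagonal $\mathrm{diag}(\O_1,\O_2)$ with $\O_1 \in \o(p)$, $\O_2 \in \o(q)$), this places $\O \in \o(d) \cap \o(p,q)$; moreover, being block-constant on the eigenspaces, $\abs{\L_\xi}^{1/2}$ commutes with $\O$. This spectral step is the main obstacle: one has to run Davis--Kahan cluster by cluster to accommodate possible multiplicities in the spectrum of $M_\xi$ and then verify these two properties carefully, whereas everything else is routine perturbation bookkeeping anchored to the single estimate of the previous step.

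Finally, substituting $\D_\X^{-1/2} = \D_\xi^{-1/2} + \op(\sqrt{\log n/n})$, $\V = \W\O + \op(\sqrt{\log n/n})$ and $\abs{\L_2}^{1/2} = \abs{\L_\xi}^{1/2} + \op(\sqrt{\log n/n})$ into $\Qx\inv = \D_\X^{-1/2}\V\abs{\L_2}^{1/2}$, using boundedness of all population factors to control the cross terms and using $\O\abs{\L_\xi}^{1/2} = \abs{\L_\xi}^{1/2}\O$, gives
\[
\Qx\inv \;=\; \D_\xi^{-1/2}\W\abs{\L_\xi}^{1/2}\O \;+\; \op\big(\sqrt{\log n/n}\big) \;=\; \Qz\inv\O \;+\; \op\big(\sqrt{\log n/n}\big),
\]
which is the claimed bound. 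The scale-invariance addendum is immediate from \cref{lemma:qx-scale}, since $\Q_{t\X} = \Qx$ and $\Q_{t\xi} = \Qz$ for every $t>0$, so the identical bound holds verbatim after rescaling.
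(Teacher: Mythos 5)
Your proof is correct and follows exactly the route the paper intends: the paper states this lemma without proof, noting only that it ``is an improvement of [Solanki et al., Lemma 16] using the matrix concentration result from [Agterberg et al., Lemma 2],'' and your argument reconstructs precisely that combination --- a matrix Bernstein bound $\norm{\D_\X - \D_\xi} = \op(\sqrt{\log n/n})$ propagated through the Lipschitz square-root/inverse-square-root maps, a cluster-by-cluster Weyl and Davis--Kahan/Procrustes step to align eigenvalues and eigenvectors, and a final product expansion using commutativity of the block-diagonal $\O$ with $\abs{\L_\xi}^{1/2}$. Your observation that the resulting $\O$ lies in $\o(d) \cap \o(p,q)$ because its eigenvalue-cluster blocks refine the positive/negative split (hence $\O$ commutes with $\I[p,q]$) is the right justification for the ``block orthogonal'' qualifier, and the scale-invariance closing remark via $\Q_{t\X}=\Qx$ matches the paper's.
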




\section{Proofs}
\label{sec:proofs}

In this section, we present the proofs for the main results in \cref{sec:results}. We refer the reader to \cref{tab:notation} for a summary of notations used in this section. The proofs also make use of the supporting results in \cref{sec:auxiliary}.

Throughout the proofs, for $\e > 0$ we use $\sigma \defeq \se$ and $\tau \defeq \te$ for brevity. For the map $\varphi \defeq \varphi_\e: \R^{d} \to \R^{d+1}$ given by $\varphi_\e(\xv) = \tau \oplus \s\rho_n\hpow\xv$, we define $\Y = \varphi(\X)$ and $\qr = (\varphi)\push\pr$ for the pushforward measure, and $\etav \sim \qr$ to denote an arbitrary random variable distributed according to $\qr$.


\subsection{Proof of \cref{prop:closure}}
\label{proof:prop:closure}

For $(\Av, \X) \sim \grdpg\pa{\pr, \rho_n; p, q}$, let $\Av^\circ = \M_\e(\Av)$ and $\Av^\bullet \sim \grdpg(\varphi{}\push\pr, 1; p+1, q)$. Since each entry $\Av^\circ_{ij}, \Av^\bullet_{ij} \in \pb{0, 1}$ are independent, it suffices to show that 
\begin{align}
    \Y \sim_{\iid{}} \qr \qq{and} \Av^\circ_{ij} \distas \Av^\bullet_{ij} \qq{for all}  1 \le i < j \le n. 
\end{align}

For the first claim, note that for measurable sets $A_1 \cdots A_n \subset \R^{d+1}$,
\begin{align}
    \qr( Y_1 \in A_1, \cdots, Y_n \in A_n ) &= \pr( X_1 \in \varphi\inv(A_1),  \cdots, X_n \in \varphi\inv(A_n) ) = \prod_{i=1}^n \pr(\varphi\inv(A_i)) = \prod_{i=1}^n\qr(A_i),
\end{align}
where the third equality follows from the fact that $X_i$ are iid and the final equality follows from the definition of the pushforward $\varphi\push\pr$. 

For the second claim, from Definition~\ref{def:eflip}, observe that
\eq{
    \pr\pa{ \Av^\circ_{ij} = 1 \mid \X } 
    &= 
        \pr\qty\Big({ \pb{\F(\Av_{ij}) = 1 - \Av_{ij}} \cap \pb{\Av_{ij} = 0} \mid \X}) 
        +   
        \pr\qty\big({ \pb{\F(\Av_{ij}) = \Av_{ij}} \cap \pb{\Av_{ij} = 1} \mid \X})\nn\\
    &= \pi(\e) \pr\qty\Big({ {\Av_{ij} = 0} \mid \X}) + \qty\big(1 - \pi(\e)) \pr\qty\Big({  {\Av_{ij} = 1} \mid \X})\nn\\[5pt]
    &\en{i} \pi(\e) \qty(1 - \rho_nX_i\tr\I[p,q]X_j) + \qty\big(1 - \pi(\e)) \rho_nX_i\tr\I[p,q]X_j\nn\\[7pt]
    &= \pi(\e) + \qty\big( 1 - 2\pi(\e)) \rho_nX_i\tr\I[p,q]X_j\nn\\[7pt]
    &= \te[2] + \se[2] \rho_n X_i\tr\I[p,q]X_j\nn\\[5pt]
    &\en{ii} 
    \begin{pmatrix}
        \te & \se \rho_n\hpow X_i\tr
    \end{pmatrix}
    \begin{pmatrix}
        1 & 0 \\ 0 & \I[p,q]
    \end{pmatrix}
    \begin{pmatrix}
        \te \\ \se \rho_n\hpow X_j
    \end{pmatrix}\\
    &= Y_i \I[p+1, q] Y_j\\[7pt]
    &= \pr\pa{ \Av^\bullet_{ij} = 1 | \Y },\nn
}
where (i) follows from the fact that $\Av \sim \grdpg\pa{\pr, \rho_n; p, q}$, and (ii) follows from the inner-product in $\R^{d+1}$ w.r.t. the indefinite identity matrix $\I[p+1, q]$ with signature $(p+1, q)$.

\qed


\subsection{Composition of \eflip{}s and \cref{prop:composition}}
\label{sec:composition}

Given $(\Av, \Xv) \sim \grdpg(\pr, \rho_n; p, q)$, let $\M_{\e_2}( \M_{\e_1}(\Av) )$ be edge-DP graph obtained after applying \eflip{} successively with privacy budgets $\e_1, \e_2 > 0$. Let $\tau_i = \tau(\e_i)$, $\s_i = \s(\e_i)$ and $\pi_i = \pi(\e_i)$ for $i=1,2$. 

On the one hand, taking $\Av^{(1)} = \M_{\e_1}(\Av)$ and $\Av^{(2)} = \M_{\e_2}(\Av^{(1)})$, from \cref{prop:closure} it follows that
\begin{align}
    \qty\big(\Av^{(1)}, \varphi_1(\X)) \sim \grdpg\qty\Big(\varphi_1{}\push\pr, 1; p+1, q) \qq{and} \qty\big(\Av^{(2)}, \varphi_2\circ\varphi_1(\X)) \sim \grdpg\qty\Big((\varphi_2\circ\varphi_1)\push\pr, 1; p+2, q),
\end{align}
where $\varphi_1: \Rd \to \R^{d+1}$ and $\varphi_2: \R^{d+1} \to \R^{d+2}$ are given by $\varphi_1(\xv) = \tau_1 \oplus \sigma_1\xv$ and $\varphi_2(\yv) = \tau_2 \oplus \sigma_2\yv$. Specifically, for $\varphi = \varphi_2 \circ \varphi_1$ given by
\begin{align}
    \varphi(\xv) = \qty\Big( \tau_2, \s_2 \tau_1, \s_2 \s_1\xv\tr )\tr \in \R^{d+2},\label{eq:composition-1}
\end{align}
the latent positions of $\Av^{(2)}$ are $\varphi(\X) \in \R^{n \times (d+2)}$. On the other hand, we have
\begin{align}
    \pr\qty\Big( \Av^{(2)}_{ij} = 1 ) 
    &= (1-\pi_2) \cdot \pr\qty\Big( \Av^{(1)}_{ij} = 1 )\\ 
    &\qquad + \pi_2 \cdot \pr\qty\Big( \Av^{(1)}_{ij} = 0 )\\
    &= (1-\pi_2) \cdot \qty\Big{ (1-\pi_1) \cdot \pr(\Av_{ij}=1) + \pi_1\cdot \pr(\Av_{ij}=0) }\\ 
    &\qquad + \pi_2 \cdot \qty\Big{  \pi_1 \cdot \pr(\Av_{ij}=1) + (1-\pi_1)\cdot \pr(\Av_{ij}=0)  }\\
    &= \qty\Big(1 - \pi_1 - \pi_2 + 2\pi_1\pi_2) \cdot \pr(\Av_{ij}=1)\\ 
    &\qquad + \qty\Big(\pi_1 + \pi_2 - 2\pi_1\pi_2) \cdot \pr(\Av_{ij}=0)\\[5pt]
    &= (1-\pi') \cdot \pr(\Av_{ij}=1) + \pi' \cdot \pr(\Av_{ij}=0),
\end{align}
where $\pi' \defeq (\pi_1 + \pi_2 - 2\pi_1\pi_2)$. In other words, using \eref{eq:se-te}, for $\e'$ given by
$$
\e' = \log\qty( \frac{1}{\pi_1 + \pi_2 - 2\pi_1\pi_2} - 1 ),
$$ 
it follows that $\M_{\e'}(\Av) \distas \M_{\e_2} \circ \M_{e_1}(\Av)$, i.e., $\Av^{(2)}$ is equivalently the graph obtained by applying \eflip{} to $\Av$ with privacy budget $\e'$. Using \cref{prop:closure} to this characterization would imply that the latent positions of $\Av$ are given by $\varphi'(\X) \in \R^{d+1}$ where 
\begin{align}
    \varphi'(\xv) = \tau(\eps') \oplus \sigma(\eps')\xv.\label{eq:composition-2}
\end{align}
The following result is a more general version of \cref{prop:composition}, and shows that the two characterizations above are equivalent. In particular, only one extra dimension is needed to faithfully encode the latent positions of the graph after several applications of \eflip{}.

\begin{proposition}\label{prop:multiple-eflips}
    Under the same setting as \cref{prop:closure}, for $\e_1 \cdots \eps_m > 0$, let $\Av^{(m)} = \M_{\e_1} \circ \cdots \circ \M_{\e_m}(\Av)$ be the graph obtained after applying \eflip{} $m$ times. Let $\varphi_{\eps_i}: \R^{m+i-1}\to\R^{m+i}$ be given by $\varphi_{\eps_i}(\yv) = \tau(\eps_i) \oplus \sigma(\eps_i)\yv$ and let
    $$
    \varphi^{(m)} = \varphi_{\eps_1} \circ \varphi_{\eps_2} \circ \cdots \circ \varphi_{\eps_m}: \R^d \to \R^{d+m}.
    $$
    Then, there exists $\Qv \in \o(p+m, q)$ and $a, b > 0$ such that
    \begin{align}
        \Qv\varphi^{(m)}(\xv) = (\zerov[{m-1}], a, b\xv\tr)\tr \quad \forall \xv \in \R^d,
    \end{align}
    and for $\psi: \R^d \to \R^{d+1}$ given by $\psi(\xv) = (a, b\xv\tr)\tr$, it follows that
    $$
    (\Av^{(m)}, \psi(\X)) \sim \grdpg(\psi\push\pr, 1; p+1, q).
    $$
\end{proposition}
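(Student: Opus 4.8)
The plan is to prove the statement by induction on $m$, using \cref{prop:closure} as the base case and the probabilistic collapse of iterated \eflip{}s (the computation preceding the proposition) as the inductive engine. The key observation is that $\o(p+m,q)$-invariance from \cref{eq:grdpg-equiv} lets us replace the ``honest'' $(d+m)$-dimensional latent positions $\varphi^{(m)}(\X)$ by any $\o(p+m,q)$-equivalent representation without changing the distribution of the graph, so it suffices to exhibit a single $\Qv \in \o(p+m,q)$ that sends $\varphi^{(m)}(\xv)$ to a vector whose first $m-1$ coordinates vanish.

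First I would establish the probabilistic reduction: applying \eflip{} with budgets $\e_1,\dots,\e_m$ in succession is, entrywise, a single randomized response with flip probability $\pi' = \pi'(\pi_1,\dots,\pi_m)$ obtained by the recursion $\pi'_1 = \pi_1$, $\pi'_{k} = \pi'_{k-1}(1-\pi_k) + (1-\pi'_{k-1})\pi_k$ (this is exactly the two-step computation in \cref{sec:composition} iterated). Provided $\pi' \in (0,1/2)$ — which holds since each step is a convex combination strictly away from $0$ and $1/2$ whenever the $\e_i$ are finite — there is a single $\e' > 0$ with $\pi(\e') = \pi'$, hence $\Av^{(m)} \distas \M_{\e'}(\Av)$. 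Applying \cref{prop:closure} to $\M_{\e'}(\Av)$ directly yields $(\Av^{(m)}, \psi(\X)) \sim \grdpg(\psi\push\pr, 1; p+1, q)$ with $\psi(\xv) = (\tau(\e'), \sigma(\e')\rho_n^{1/2}\xv^\top)^\top$, which already gives the second display with $a = \tau(\e')$, $b = \sigma(\e')\rho_n^{1/2}$ — note in our normalization $\rho_n$ has been absorbed, matching the convention in \cref{sec:proofs} where $\varphi(\xv) = \tau \oplus \s\rho_n^{1/2}\xv$.

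It then remains to reconcile this with the ``naive'' $(d+m)$-dimensional map $\varphi^{(m)}$, i.e., to produce $\Qv \in \o(p+m,q)$ with $\Qv\varphi^{(m)}(\xv) = (\zerov[m-1], a, b\xv^\top)^\top$. Writing $\varphi^{(m)}(\xv) = (c_m, c_{m-1}, \dots, c_1, \beta\xv^\top)^\top$ where $c_j$ are the scalar constants $\tau(\e_j)\prod_{i>j}\sigma(\e_i)$ and $\beta = \prod_i \sigma(\e_i)$, observe that the ``extra'' block consists of $m$ fixed scalars $(c_m,\dots,c_1)$ contributing $\sum_j c_j^2$ to the squared $\I[p+m,q]$-norm, all with positive signature. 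A single ordinary rotation $\Rv \in \o(m) \subset \o(p+m,q)$ acting on those $m$ coordinates can rotate the vector $(c_m,\dots,c_1)$ onto $(0,\dots,0,\sqrt{\sum_j c_j^2})$, fixing the $\xv$ block; the resulting representation $(\zerov[m-1], a, b\xv^\top)^\top$ has $a = (\sum_j c_j^2)^{1/2}$ and $b = \beta$, and one checks $a^2 = \tau(\e')^2$, $b = \sigma(\e')\rho_n^{1/2}$ using the identity $\sum_j c_j^2 + \beta^2 = 1$ together with $\beta^2 = \prod_i \sigma(\e_i)^2 = \prod_i(1-2\pi_i) = 1-2\pi'$ — the last equality being precisely the claim that the recursion for $\pi'$ corresponds to multiplying the $\sigma^2 = 1-2\pi$ factors, which is the content of \cref{prop:composition} for $m=2$ and extends by induction. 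Finally, invariance under $\o(p+m,q)$ (\cref{eq:grdpg-equiv}) and the fact that the first $m-1$ coordinates are identically zero across all rows let us discard them, giving the claimed $\grdpg(\psi\push\pr,1;p+1,q)$ representation in $\R^{d+1}$.

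The main obstacle I anticipate is the bookkeeping in the second step — verifying the algebraic identity $\sum_{j=1}^{m} c_j^2 + \beta^2 = 1$ and its refinement $1 - \sum_j c_j^2 = 1-2\pi' = \sigma(\e')^2$ cleanly by induction, so that the collapsed constants $a,b$ genuinely coincide with the parameters $\tau(\e'),\sigma(\e')\rho_n^{1/2}$ predicted by the probabilistic argument. Once that identity is in hand, everything else is either a direct appeal to \cref{prop:closure} or an elementary rotation argument in the $m$-dimensional ``privacy'' block, with the signature never becoming an issue because all the adjoined coordinates carry $+1$ on the diagonal of $\I[p+m,q]$.
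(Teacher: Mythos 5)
Your overall strategy is sound and, in fact, tidier than the paper's. The paper proves the existence of $\Qv \in \o(p+m,q)$ by induction on $m$, zeroing out one new coordinate at a time with a $2\times 2$ rotation embedded in a block-orthogonal matrix; you instead observe that the $m$ adjoined coordinates are all constants carrying $+1$ signature, so a \emph{single} block rotation $\mathrm{diag}(R,\mathbf{I}_d)$ with $R\in\o(m)$ collapses them in one step, and this block matrix is in $\o(p+m,q)$ because $\I[p+m,q]=\mathrm{diag}(\I[m],\I[p,q])$. Your approach also makes explicit the probabilistic identification $a=\tau(\eps')$, $b=\sigma(\eps')$, which the paper presents only informally for $m=2$ in the surrounding discussion and does not carry through the formal proof.

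However, the algebraic identity you propose to verify in the final bookkeeping step is wrong, and the two identities you state are mutually inconsistent. You claim $\sum_{j} c_j^2 + \beta^2 = 1$ \emph{and} $1-\sum_j c_j^2 = 1-2\pi' = \sigma(\eps')^2$; substituting the second into the first forces $\sum_j c_j^2 = 2\pi'$, whereas what you actually need is $\sum_j c_j^2 = \pi' = \tau(\eps')^2$. The source of the error is assuming $\tau(\eps)^2+\sigma(\eps)^2=1$; from the definitions $\tau(\eps)^2=\pi(\eps)$ and $\sigma(\eps)^2=1-2\pi(\eps)$, so in fact $\tau(\eps)^2+\sigma(\eps)^2=1-\pi(\eps)\neq 1$. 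Already for $m=1$: $c_1^2+\beta^2=\tau_1^2+\sigma_1^2 = 1-\pi_1$. The correct statement is a pair of recursions, not a closed normalization identity. Writing $S_m=\sum_{j\le m}c_j^2$ and $B_m=\prod_{i\le m}\sigma(\eps_i)^2$, prepending one more flip gives
\begin{align}
S_{m+1}=\tau_{m+1}^2+\sigma_{m+1}^2 S_m=\pi_{m+1}+(1-2\pi_{m+1})\pi'_m=\pi'_{m+1},\qquad B_{m+1}=(1-2\pi_{m+1})(1-2\pi'_m)=1-2\pi'_{m+1},
\end{align}
so by induction $a^2=S_m=\pi'=\tau(\eps')^2$ and $b^2=B_m=1-2\pi'=\sigma(\eps')^2$ directly. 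No global ``$\sum c_j^2+\beta^2=1$'' is needed or true. Once you replace your spurious identity with this recursion the argument closes cleanly, and it remains a genuinely simpler alternative to the paper's one-coordinate-at-a-time induction.
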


\begin{proof}[Proof of \cref{prop:multiple-eflips}]
    Let $\varphi_i = \varphi_{\eps_i}, \tau_i=\tau(\eps_i)$ and $\sigma_i = \sigma(\eps_i)$ for all $i \in [m]$.  The proof follows by induction on $m$. For $m=1$, the result holds trivially from \cref{prop:closure}. By induction hypothesis, suppose the claim holds for $\Av^{(m-1)}$, $m \ge 2$, i.e., there exists $\W \in \o(p+m-1, q)$ and $\alpha, \beta > 0$ such that 
    \begin{align}
        \W\varphi^{(m-1)}(\xv) = \qty(\zerov[{m-2}], \alpha, \beta\xv\tr)\tr \in \R^{d+m-1} \qq{for all} \xv \in \R^d.
    \end{align}
    For $\xv \in \R^{d}$, let $\yv = \varphi^{(m-1)}(\xv) = \qty\big(y_1, y_2, \cdots, y_{m-1}, \gamma \xv\tr)\tr \in \R^{d+m-1}$ where\footnote{A simple calculation shows that $y_1 = \tau_{m-1}$ and $\gamma = \prod_{i=1}^{m-1}\sigma_i$.} $\gamma > 0$. By definition of $\varphi_{m}$, we have 
    $$
    \varphi_{m}(\yv) = \qty(\tau_m, \sigma_m\yv\tr)\tr = \qty(\tau_m, \sigma_m y_1, \sigma_m \yv_{2:m-1}, \sigma_m \gamma \xv\tr)\tr.
    $$
    Let $R \in \o(2)$ be the rotation matrix such that $R(\tau_m, \sigma_m y_1)\tr = (0, \delta y_1)\tr$, i.e., 
    $$
    R = \begin{pmatrix}
        \cos\theta & -\sin\theta\\
        \sin\theta & \cos\theta
    \end{pmatrix}
    \qq{for} \theta = \arctan\qty(\frac{\tau_m}{\sigma_m y_1}),
    $$
    and let $\Q_1$ be the block-orthogonal matrix given by
    $$
    \Q_1 = \begin{pmatrix}
        R & \mathsf{O}_{2 \times (d+m-2)}\\
        \mathsf{O}_{(d+m-2) \times 2} & \I[d+m-2]
    \end{pmatrix} \in \R^{(d+m) \times (d+m)}.
    $$
    By construction, $\Q_1\varphi_{m}(\yv) = (0, \delta y_1, \sigma_m\yv_{2:m-1}, \sigma_m\gamma\xv\tr)\tr$. Similarly, let $\Q_2$ be the matrix given by
    $$
    \Q_2 = \begin{pmatrix}
        1 & \zerov[{(p+m-1)}]\tr\\
        \zerov[{(p+m-1)}] & \Wv
    \end{pmatrix} \in \R^{(p+m) \times (p+m)}.
    $$
    From the induction hypothesis, it follows that $\Q_2\Q_1 \varphi_{m}(\yv) = (0, \zerov[{m-2}], \sigma_m \alpha, \sigma_m \beta \xv\tr)\tr =: (\zerov[m-1], a, b\xv\tr)\tr$. Therefore, for $\Q \defeq \Q_2\Q_1$ and $\varphi^{(m)} = \varphi_{m} \circ \varphi^{(m-1)}$,
    $$
    \Q\varphi^{(m)}(\xv) = (\zerov[m-1], a, b\xv\tr)\tr \in \R^{d+m}.
    $$
    It remains to show that $\Q = \Q_2\Q_1 \in \o(p+m, q)$. This follows by noting that $\Q_2 \in \o(p+m, q)$ since
    \begin{align}
        \Q_2\tr \I[p+m, q] \Q_2 = \begin{pmatrix}
            1 & \zerov[{(p+m-1)}]\tr\\
            \zerov[{(p+m-1)}] & \Wv\tr\I[p+m-1, q]\Wv
        \end{pmatrix}
        = \begin{pmatrix}
            1 & \zerov[{(p+m-1)}]\tr\\
            \zerov[{(p+m-1)}] & \I[p+m-1, q]
        \end{pmatrix}
        = \I[p+m, q],
    \end{align}
    and $\Q_1 \in \o(p+m, q)$ since
    \begin{align}
        \Q_1\tr \I[p+m, q] \Q_1 
        = \begin{pmatrix}
            R\tr \I[2, 0] R & \mathsf{O}_{2 \times (d+m-2)}\\
            \mathsf{O}_{(d+m-2) \times 2} & \I[p+m-2, q]
        \end{pmatrix} 
        = \I[p+m, q]
    \end{align}
    Therefore, it follows that
    $$
    (\Q_2\Q_1)\tr \I[p+m, q] (\Q_2\Q_1)\tr = \Q_1\tr \qty(\Q_2\tr \I[p+m, q] \Q_2) \Q_1 = \Q_1\tr \I[p+m, q] \Q_1 = \I[p+m, q].
    $$
\end{proof}


\begingroup
\newcommand{\Dpr}{\D_{\pr}}
\newcommand{\Dqr}{\D_{\qr}}
\renewcommand{\tv}{\mathsf{D}_{\mathsf{TV}}}
\newcommand{\kl}{\mathsf{D}_{\mathsf{KL}}}
\newcommand{\chisq}{\mathsf{D}_{\chi^2}}
\newcommand{\Xo}{\X_0}
\newcommand{\Uo}{\U_0}
\newcommand{\trr}{{}^{{\scalebox{0.75}{$\mathbf{\mathsf{T}}$}}}}
\newcommand{\T}{\mathcal{A}}

\subsection{Proof of \cref{thm:minimax}}
\label{proof:thm:minimax}

The outline of the proof is as follows:
\begin{enumerate}[label=\Circled{\arabic*}, itemsep=0.0em]
    \item \textbf{Finite Reduction:} We consider a finite set of latent positions $\calX_0, \calX_1 \subset \calX$ which are $\eta$-separated in the $\dttinf$ metric to reduce the minimax risk over $\calX$ to the minimax risk over a finite subset. 
    \item \textbf{Le Cam's Lemma:} Next, we apply Le Cam's lemma to lower bound the minimax risk over $\calX_0 \cup \calX_1$ in terms of the total variation metric between mixture distributions of the private outputs under any \eeldpe{} mechanism satisfying \cref{def:noninteractive-dp}.
    \item \textbf{Bounding via $\chi^2$-divergence:} We bound the total variation metric from above using the $\chi^2$-divergence and simplify the expression in terms of the induced marginal distributions of the private outputs.
    \item \textbf{Simplifying $\chi^2$-divergence:} We further simplify the expression for the $\chi^2$-divergence using the structure of the probability matrices generated by the latent positions in $\calX_0 \cup \calX_1$ and the constraints imposed by \eeldpe{}. The proof in this step is fairly technical but uses standard techniques.
    \item \textbf{Setting the optimal separation $\eta$:} We set the optimal $\eta$ based on the above steps to obtain the result.
\end{enumerate}

\quad\textbf{Step \Circled{1}.}\quad Let $\calX$ be the set of all $(p,q)$-admissible latent positions, and fix $\Xo \in \calX$ as follows. Let $\Uo \in \bbU(n, d)$ be the matrix guaranteed by \cite[Lemma~4]{yan2023minimax} satisfying:
\begin{align}
    \Uo\onev[d] = \frac{1}{\sqrt{n}}\onev[n] \qc{} \sqrt{\frac{d}{n}} \le \ttinf{\Uo} \le \sqrt{\frac{d}{n-2d}}, \qq{and} \max_{i \in [n], j \in [d]} \abs{\U_{0,ij}} \le \frac{1}{\sqrt{n-2d}}.
\end{align}
i.e., the first column of $\Uo$ is proportional to the vector of $1$s and has incoherence parameter $\mu(\Uo) = nd/(n-2d)$. For $\lambda_1 = {n/2}$ and $\lambda_2 = \cdots = \lambda_d = {n/12d}$, let $\L = \textup{diag}(\lambda_1, \dots, \lambda_d)$, and set $\Xo \defeq \Uo\L\hpow$. By construction, when $n \ge 4d$ it is easy to see that $\Xo \in \calX$ since the rows $\xv_{0, 1}, \dots, \xv_{0, n} \in \R^d$ of $\Xo$ are such that
\begin{align}
    \xv_{0, i}\tr \I[p, q] \xv_{0, j} &= \sum_{\ell} \omega_\ell \lambda_\ell u_{i\ell} u_{j\ell} \le \lambda_1\frac{1}{n} + \sum_{\ell=2}^d \lambda_\ell \abs{u_{i\ell}} \abs{u_{j\ell}} \le \frac{1}{2} + \frac{n}{12(n-2d)} \le \frac{2}{3}\label{eq:2/3}\\
    \xv_{0, i}\tr \I[p, q] \xv_{0, j} &= \sum_{\ell} \omega_\ell\lambda_\ell u_{i\ell} u_{j\ell} \ge \lambda_1\frac{1}{n} - \sum_{\ell=2}^d \lambda_\ell \abs{u_{i\ell}} \abs{u_{j\ell}} \ge \frac{1}{2} - \frac{n}{12(n-2d)} \ge \frac{1}{3}\label{eq:1/3},
\end{align}
where $\omega_\ell \in \qty{-1, +1}$ are the diagonal entries of $\I[p,q]$. Let $\Po = \Xo\I[p,q]\Xo\tr$ be the probability matrix with sparsity $\rho_n \le 1$ and take $\calX_0 = \qty{\Xo}$. Consider the following set of $n$ latent positions:
\begin{align}
    \calX_1 = \qty{ \X_k \in \Rnd: \X_k = \Xo + \eta e_k\vv(k)\tr\qc{} \vv(k) = \xv_{0,k}/\norm{\xv_{0,k}} \in \bbS^{d-1} }.
\end{align}
In other words, $\X_k$ is obtained from $\Xo$ by scaling the $k$-th row of $\Xo$ by $\eta$. It is easy to see that $\X_k \in \calX$ for all $\eta<\sqrt{3/2}-1$ sufficiently small. By construction, we have $\dttinf(\X_k, \Xo) \ge \eta/2$. Let $\P_k = \X_k\I[p,q]\X_k\tr$ be the probability matrix generated by each $\X_k$ with sparsity $\rho_n \le 1$, and define $\G_k$ to be the difference between the probability matrices $\P_k$ and $\Po$ given by
\begin{align}
    \G^k = \P_k - \P_0 =  \rho_n\qty\Big(\eta\uv(k) e_k\tr + \eta e_k\uv(k)\tr + \eta^2 e_k e_k\tr),\qq{for $\uv(k) = \Xo\I[p,q]\vv(k)$.}\label{eq:gamma-def}
\end{align}

\quad\textbf{Step \Circled{2}.}\quad Consider the GRDPGs $\Av_0 \sim \grdpg(\X_0, \rho_n; p, q)$ and $\Av_k \sim \grdpg({\X_k, \rho_n}; p, q)$, i.e., $\Av_0 \sim \textup{Ber}(\P_0)$ and $\Av_k \sim \textup{Ber}(\P_k)$, respectively, and let $F_0$ and $F_k$ denote their joint distributions on $\bbB(n)$. For any \eeldpe{} mechanism $\A \in \bbA_\eps$ satisfying \cref{def:noninteractive-dp}, let $\Zv_0 = \A(\Av_0)$ and $\Zv_k = \A(\Av_k)$ be the private outputs with distributions $\A\push F_0$ and $A\push F_k$ on $\calZ$, respectively, i.e., for all $\ell \in \qty{0} \cup [n]$ and $A = (a_{ij}) \in \bbB(n)$,
\begin{align}
    F_\ell(A) = \prod_{i < j}F_{\ell, ij}(a_{ij}) = \prod_{i < j}\Pv_{\ell, ij}^{a_{ij}}(1 - \Pv_{\ell,ij})^{1-a_{ij}}
\end{align}  
and for $Z = (z_{ij})$,
\begin{align}
    \A\push F_\ell(Z) = \int_{A \in \bbB(n)} \calQ(Z \mid \Av = A) dF_k(A) = \prod_{i < j}\int_{a_{ij} \in \qty{0,1}} \calQ_{ij}(z_{ij} \mid \Av_{ij} = a_{ij}) dF_{\ell, ij}(a_{ij}) =: \psi_{\ell, ij}(z_{ij}).\label{eq:psi-def}
\end{align}
Let $\overline{\A\push F} = \frac{1}{n}\sum_k\A\push F_k \in \text{co}(\qty{\A\push F_1, \dots, \A\push F_n})$ be the mixture distribution in the convex hull of $\{\A\push F_1, \dots, \A\push F_n\}$. From Le Cam's lemma \cite[Lemma~1]{yu1997assouad}, it follows that
\begin{align}
    \rn(\calX, \eps) \defeq \inf_{\A_\eps \in \mathbb{A}_\epsilon}\inf_{\hX} \sup_{\Xv \in \calX } {\E}\qty[\dttinf\qty(\hX(\A_\eps(\Av)), \X)] \ge \inf_{\A_\eps \in \mathbb{A}_\epsilon} \frac{\eta}{4} \qty\Big(1 - \tv\qty(\overline{\A\push F}, \A\push F_0)),\label{eq:le-cam}
\end{align}
where $\tv(\cdot, \cdot)$ is the total variation metric. 

\quad\textbf{Step \Circled{3}.}\quad Let $\varphi_0$ and $\varphi_k$ denote the density functions associated with the measures $\A\push F_0$ and $A\push F_k$ on $\calZ$, i.e., ${d\A\push F_k}/{d\A\push F_0} = {\varphi_k}/{\varphi_0}$, and let $\bar\varphi = \frac{1}{n}\sum_k\varphi_k$. From the properties of the Chi-squared divergence $\chisq(\cdot, \cdot)$ in \cite[Lemma~2.27 and Eq.~(2.15)]{tsybakov2008nonparametric}, we get
\begin{align}
    \tv(\overline{\A\push F}, \A\push F_0)^2\label{eq:tv-chisq}
    &\le \chisq(\overline{\A\push F}, \A\push F_0)\\
    &= \int \qty(\frac{d\A\push F_k}{d\A\push F_0})^2 d\A\push F_0 - 1\\
    &= \int \qty(\frac{\bar\varphi}{\varphi_0})^2 d\A\push F_0 - 1 = \E\qty[\qty(\frac{\bar\varphi(\Zv_0)}{\varphi_0(\Zv_0)})^2] - 1.
\end{align}
where the expectation is taken over $\Zv_0 \sim \A\push F_0$. Moreover, by expanding the square and using \cref{eq:psi-def}, we have
\begin{align}
    \qty(\frac{\bar\varphi(Z)}{\varphi_0(Z)})^2 = \frac{1}{n^2} \sum_{k, \ell}\frac{\varphi_k(Z)\varphi_\ell(Z)}{\varphi_0(Z)^2} = \frac{1}{n^2} \sum_{k, \ell} \prod_{i < j}\frac{\psi_{k, ij}(z_{ij})\psi_{\ell, ij}(z_{ij})}{\psi_{0, ij}(z_{ij})\psi_{0, ij}(z_{ij})},
\end{align}
and it follows that
\begin{align}
    1 + \chisq(\overline{\A\push F}, \A\push F_0) = \frac{1}{n^2}\sum_{k, \ell}\prod_{i < j} \E_{\Zv_{0, ij}}\qty[\frac{\psi_{k, ij}(\Zv_{0,ij})\psi_{\ell, ij}(\Zv_{0,ij})}{\psi_{0, ij}(\Zv_{0,ij})\psi_{0, ij}(\Zv_{0,ij})}] = \frac{1}{n^2}\sum_{k, \ell}\prod_{i < j} \int \frac{\psi_{k, ij}(z_{ij})\psi_{\ell, ij}(z_{ij})}{\psi_{0, ij}(z_{ij})} dz_{ij}.\label{eq:chisq-bound}
\end{align}

\quad\textbf{Step \Circled{4}.}\quad The simplification in this step is similar to the steps in \cite[Appendix~B]{li2022network}. Consider the integral in \cref{eq:chisq-bound}. We begin by noting that
\begin{align}
    \psi_{k, ij}(z) 
    &= \int_{a_{ij} \in \qty{0,1}} \calQ_{ij}(z \mid \Av_{ij} = a_{ij}) dF_{k, ij}(a_{ij})\\ 
    &= \calQ_{ij}(z \mid \Av_{ij} = 1) \Pv_{k, ij} + \calQ_{ij}(z \mid \Av_{ij} = 0) (1 - \Pv_{k, ij})\\
    &= \calQ_{ij}(z \mid \Av_{ij} = 1) (\Pv_{0, ij} + \G_{k, ij}) + \calQ_{ij}(z \mid \Av_{ij} = 0) (1 - \Pv_{0, ij} - \G_{k, ij})\\
    &= (\P_{0, ij} + \G_{k, ij})\qty\Big(\calQ_{ij}(z \mid \Av_{ij} = 1) - \calQ_{ij}(z \mid \Av_{ij} = 0)) + \calQ_{ij}(z \mid \Av_{ij} = 0)\\
    &= (\P_{0, ij} + \G_{k, ij})\qty\Big(\xi_1(z) - \xi_0(z)) + \xi_0(z),
\end{align}
where we used the fact that $\P_{k, ij} = \P_{0, ij} + \G_{k, ij}$ and defined $\xi_1(z) \defeq \calQ_{ij}(z \mid \Av_{ij} = 1)$ and $\xi_0(z) \defeq \calQ_{ij}(z \mid \Av_{ij} = 0)$. Plugging in the above expression into the integral in \cref{eq:chisq-bound} and simplifying, we get
\begin{align}
    \frac{\psi_{k, ij}(z)\psi_{\ell, ij}(z)}{\psi_{0, ij}(z)} 
    &= \frac{\qty[(\P_{0, ij} + \G_{k, ij})\qty\Big(\xi_1(z) - \xi_0(z)) + \xi_0(z)] \cdot \qty[(\P_{0, ij} + \G_{\ell, ij})\qty\Big(\xi_1(z) - \xi_0(z)) + \xi_0(z)]}{\P_{0, ij}\qty\Big(\xi_1(z) - \xi_0(z)) + \xi_0(z)}\\
    &= \overbrace{\qty[\P_{0, ij}\qty\Big(\xi_1(z) - \xi_0(z)) + \xi_0(z)]}^{=: f_1(z)} + \overbrace{\qty\Big[(\G_{k, ij} + \G_{\ell, ij})(\xi_1(z) - \xi_0(z))]}^{=:f_2(z)}\\ 
    &\qq{}\qq{}+ \underbrace{\qty[\G_{k, ij}\G_{\ell, ij}\frac{(\xi_1(z) - \xi_0)^2}{\P_{0, ij}\qty\Big(\xi_1(z) - \xi_0(z)) + \xi_0(z)}]}_{=:f_3(z)}.
\end{align}
Note that
\begin{align}
    \int f_1(z) dz &= \P_{0, ij}\int\xi_1(z)dz + (1 - \P_{0, ij})\int\xi_0(z)dz = 1\\ 
    \int f_2(z)dz &= (\G_{k, ij} + \G_{\ell, ij}) \int \qty\big(\xi_1(z) - \xi_0(z))dz = 0,
\end{align}
and,
\begin{align}
    \int f_3(z)dz = \G_{k, ij}\G_{\ell, ij}\int \frac{(\xi_1(z) - \xi_0(z))^2}{\P_{0, ij}\qty\Big(\xi_1(z) - \xi_0(z)) + \xi_0(z)}dz.
\end{align}

From \cref{eq:2/3} we have $\P_{0,ij} \le {2\rho_n}/{3}$, and for $\eps \le \min\qty{1, 3/8\rho_n} \le \min\qty{1, 1/4\P_{0, ij}}$ using \cite[Lemma~B.1]{li2022network}, we have
\begin{align}
    \P_{0, ij}\qty\Big(\xi_1(z) - \xi_0(z)) + \xi_0(z) \ge \xi_0(z)/2.
\end{align}
Plugging this back into the integral of $f_3(z)$,
\begin{align}
    \int f_3(z)dz \le \G_{k, ij}\G_{\ell, ij} \int\frac{\xi_0(z)^2 (\xi_1(z)/ \xi_0(z) - 1)^2}{\xi_0(z)/2}dz 
    &= 2\G_{k, ij}\G_{\ell, ij} \int \qty( \frac{\xi_1(z)}{\xi_0(z)}-1 )^2 \xi_0(z)dz\\ 
    &\le 2\G_{k, ij}\G_{\ell, ij}(e^{\eps} - 1)^2,
\end{align}
where the final inequality follows from the definition of \eeldpe{} in \cref{def:noninteractive-dp} which requires that
\begin{align}
    \frac{\xi_1(z)}{\xi_0(z)} = \frac{\calQ_{ij}(z \mid \Av_{ij} = 1)}{\calQ_{ij}(z \mid \Av_{ij} = 0)} \le e^{\eps}.
\end{align}
Since $\eps \le 1$, it follows that $(e^{\eps}-1) = \se^2(e^{\eps}+1) \le 4\se^2$, and
\begin{align}
    \int \frac{\psi_{k, ij}(z)\psi_{\ell, ij}(z)}{\psi_{0, ij}(z)} dz \le 1 + 2(e^{\eps} - 1)^2 \le 1+ c_1\G_{k, ij}\G_{\ell, ij} \cdot  \se^4 \le \exp(c_1\G_{k, ij}\G_{\ell, ij} \cdot  \se^4),
\end{align}
where $c_1 = 16$. Plugging this bound back into \cref{eq:chisq-bound}, we get
\begin{align}
    1 + \chisq(\overline{\A\push F}, \A\push F_0) 
    &= \frac{1}{n^2}\sum_{k, \ell}\prod_{i < j} \int \frac{\psi_{k, ij}(z)\psi_{\ell, ij}(z)}{\psi_{0, ij}(z)} dz\\ 
    &\le \frac{1}{n^2}\sum_{k, \ell}\prod_{i < j} \exp\qty(c_1\se^4\G_{k, ij}\G_{\ell, ij} \cdot  \se^4)\\
    &= \frac{1}{n^2}\sum_{k, \ell}\exp\qty(c_1\se^4\sum_{i < j}\G_{k, ij}\G_{\ell, ij})\\
    &= \frac{1}{n^2}\sum_{k, \ell}\exp\qty\Big(c_1\se^4\inner{\G_k, \G_\ell}_F)\\
    &= \frac{1}{n^2}\sum_{k}\exp\qty\Big(c_1\se^4\norm{\G_k}^2_F) + \frac{1}{n^2}\sum_{k \neq \ell}\exp\qty\Big(c_1\se^4\inner{\G_k, \G_\ell}_F)\\
    &\le \frac{1}{n}\exp\qty\Big(c_1\se^4\max_k\norm{\G_k}^2_F) + \frac{n(n-1)}{n^2}\exp\qty\Big(c_1\se^4 \max_{k \neq \ell}\inner{\G_k, \G_\ell}_F)\\
    &\le \exp\qty\Big(c_1\se^4\max_k\norm{\G_k}^2_F - \log{n}) + \exp\qty\Big(c_1\se^4 \max_{k \neq \ell}\inner{\G_k, \G_\ell}_F).\label{eq:chisq-bound-intermediate}
\end{align}

\quad\textbf{Step \Circled{5}.}\quad For $\G_k$ in \cref{eq:gamma-def}, using the triangle inequality and by noting that $\uv(k) = \Xo\I[p,q]v(k)$, we have
\begin{align}
    \norm{\G_k}_F 
    &\le \rho_n\norm{\eta\uv(k) e_k\tr + \eta e_k\uv(k)\tr + \eta^2 e_k e_k\tr}_F\\
    &\le \rho_n\qty(\eta^2 + 2\eta \norm{u(k)})\\
    &\le \rho_n(\eta^2 + 2\eta \opnorm{\Xo}) \le 4\rho_n\eta\sqrt{n},
\end{align}
since $\eta^2 < \eta < 1$. It follows that
\begin{align}
    \max_k\norm{\G_k}_F^2 \le 16\rho_n^2\eta^2n.
\end{align}
Similarly, note that $\G_k$ has non-zero entries only in the $k$-th row and column, and for $k \neq \ell$ we have a total of two non-zero entries in $\G_k\tr\G_\ell$, and, therefore,
\begin{align}
    \inner{\G_k, \G_\ell}_F = \textup{tr}(\G_k\tr\G_\ell) = 2\rho_n^2\eta^2 \cdot u(k)_\ell \cdot u(\ell)_k \le 2\rho_n^2\eta^2,
\end{align}
where $u(k)_\ell = \xv_{0, \ell}\tr\I[p,q]\vv(k) \le 1$ and similarly for $u(\ell)_k$. Plugging these bounds back into $\chisq(\cdot, \cdot)$ in \cref{eq:chisq-bound-intermediate}, we get
\begin{align}
    1 + \chisq(\overline{\A\push F}, \A\push F_0) \le \exp\qty\Big(16 c_1 \se^4\rho_n^2\eta^2n - \log{n}) + \exp\qty\Big(8 c_1 \se^4\rho_n^2\eta^2).
\end{align}
For a suitably large absolute constant $C > 16c_1$, choosing
$$
\eta = \frac{1}{C}\sqrt{\frac{\log{n}}{n\rho_n^2\se^4}}
$$ 
results in $\chisq(\overline{\A\push F}, \A\push F_0) = o(1)$, and from \cref{eq:tv-chisq} and Le Cam's lemma in \cref{eq:le-cam}, we have
\begin{align}
    \rn(\calX, \eps) \ge \frac{\eta}{4} \qty(1 - \sqrt{\chisq(\overline{\A\push F}, \A\push F_0)}) = \Omega\qty(\sqrt{\frac{\log{n}}{n\rho_n^2\se^4}}).
\end{align}
This completes the proof of \cref{thm:minimax}.
\qed
\endgroup

\begingroup
\newcommand{\rhn}{\rho_n^{1/2}}
\newcommand{\chrn}{\ch{\rho}_n^{1/2}}
\newcommand{\srn}{\sigma^2\rho_n}
\newcommand{\srnh}{\sigma\rho^{1/2}_n}
\newcommand{\tn}{t_n}
\newcommand{\Qtn}{\Q_{\sqrt{\tn}\X}}
\newcommand{\Qsrnh}{\Q_{\srnh\X}}
\newcommand{\Lt}{\overline{\L}}
\renewcommand{\Avt}{\overline{\Av}}
\renewcommand{\Pt}{\overline{\P}}
\renewcommand{\Ut}{\overline{\U}}
\renewcommand{\Xt}{\overline{\X}}

\subsection{Proof of \cref{thm:convergence-rate}}
\label{proof:prop:grdpg-estimation}

First, let $\Avc$ be the \textit{privacy-adjusted adjacency matrix} from \cref{alg:pase} and let $\Avt$ be the matrix given by
\eq{
    \Avt = \M_\e(\Av) - \tau^2 \onev\onev\tr.
}
with spectral decomposition $\Avt = \Ut {\Lt} \Ut\tr$ corresponding to the leading $d$ eigenvalues by absolute value. Let ${\Xt = \Ut \qty\big|{ \Lt }|\hpow \in \R^{n \times d}}$ be the spectral embedding of $\Avt$. The proof outline is as follows:

\begin{enumerate}[label=\Circled{\arabic*}, itemsep=0.0em]
    \item We first show that $\Avt$ is close to the expected adjacency matrix $\Pt$ of a GRDPG, and establish the convergence rate of $\Xt$ to $\tn\X$ in the $\dttinf$ metric. 
    \item We use the result in \Circled{1} to quantify the convergence rate of $\rho_n\hpow\Xc$ to $\X$ in the $\dttinf$ metric.
    \item Finally, use $\ch{\rho}_n$ from \cref{alg:pase} as a plug-in estimator to get the final bound.
\end{enumerate}

\textbf{Step~\Circled{1}}.\quad In a similar fashion to the sparse GRDPGs considered in \citet[Theorem~5]{rubin2022statistical}, let $\Pt$ be the expected adjacency matrix for the random dot product graph $(\Bv, \s\X) \sim \grdpg\pa{\s\push\pr, \rho_n; p, q}$ given by
\eq{
    \Pt = \rho_n \cdot (\s\X) \I[p,q] (\s \X)\tr = (\tnh\X)\I[p,q](\tnh\X)\tr,\nn
}
where, for notational simplicity, $\tnh \defeq \srnh$ denotes the \textit{effective sparsity parameter} under privacy. Although $\Avt$ is not the adjacency matrix of a GRDPG, it is close to $\Bv$. Indeed, for all $1 \le i < j \le n$,
\eq{
    \E(\Avt_{ij}) = \E\qty( \M_\e(\Av)_{ij} - \tau^2 ) 
    &\en{i} Y_i\tr\I[p,q]Y_j - \tau^2\nn\\
    &\en{ii} (\s\rhn X_i)\tr\I[p,q](\s\rhn X_j) + \tau^2 - \tau^2\nn\\
    &= \s^2\rho_n X_i\tr\I[p,q]X_j\nn\\
    &= \Pt_{ij},\nn
}
where in (i) we used the fact that $(\M_\e(\Av), \Y) \sim \grdpg\pa{\varphi_\sharp\pr, 1; p+1, q}$ from Theorem~\ref{prop:closure} and (ii) follows from the definition of $\varphi$. Using \cite[Theorem~5.2]{lei2015consistency} (see \cref{lemma:lei1}), with probability $1 - O(n^{-1})$ it follows that
\eq{
    \norm{\Avt - \Pt} = \norm{\Av_\Y - \P_\Y} = O(\sqrt{n}).\label{eq:lei-1}
}
Furthermore, for $\Ut$ in the spectral decomposition $\Pt = \Ut \Lt \Ut\tr$, from\footnote{Note that the $\log^c{n}$ dependence in \cite[Lemma~12]{rubin2022statistical} is replaced with $\log{n}$ using \cite[Theorem~4]{agterberg2020two}.} \cite[Lemma~12]{rubin2022statistical}, it follows that
\eq{
    \ttinf{(\Avt - \Pt)\Ut} = O\qty(\frac{\sqrt{n} \log n}{\sqrt{n}}),\nn
}
with probability $1 - O(n^{-1})$. Next, we prepare to quantify $\dttinf(\Xt, \tn\X)$. In order to do so, we need to characterize the matrices which align $\Xt$ to $\tn\X$ as per \cref{def:dttinf}. 

To this end, let $\Qtn$ be the alignment matrix satisfying $\Ut|\Lt|\hpow = \tnh\X\Qtn\inv$ (see \cref{fact:Qx}); by invariance of $\qx$ to scale transformations (\cref{lemma:qx-scale}), it follows that $\Qtn = \qx$. Let $\Qn = \ox\tr\Qtn=\ox\tr\qx$ where $\ox \in \o(d)$ is described in \cref{fact:Ox}. From \citet[Eq.~15]{rubin2022statistical}, it follows that
\eq{
    \Xt - \tn\hpow\X\Qn\inv &= (\Avt - \Pt)\Ut |{\Lt}|\pow \I[p,q]\ox + \Rv.
}
From the definition of $\dttinf$ from \cref{eq:dttinf-2} in \cref{def:dttinf}, 
\begin{align}
    \dttinf(\Xt, \tn\X) \le \ttinf{\Xt - \tn\hpow\X\Qn\inv}.
\end{align}

From \cite[Eq.~(14)]{rubin2022statistical} and \cite[Lemma~3]{agterberg2020nonparametric}, the matrix $\Rv$ is such that with probability $1 - O(n^{-1})$,
\eq{
    \ttinf{\Rv} = \op\qty(\sqrt{\f{\log^4 n}{n^2 \tn}}),\nn
}
and, for the first term on the r.h.s., using \citet[Propositions~6.3~\&~6.5]{cape2019two}, we obtain
\eq{
    \ttinf{(\Avt - \Pt)\Ut |{\L({\Pt})}|\pow \I[p,q]} &\le \ttinf{(\Avt - \Pt)\Ut} \norm{|{\L({\Pt})}|\pow \I[p,q]}\nn\\
    &{=} \ttinf{(\Avt - \Pt)\Ut} \norm{|{\L({\Pt})}|\pow} \norm{\I[p,q]}\nn\\
    &{\le} \ttinf{(\Avt - \Pt)\Ut} \norm{|{\L({\Pt})}|\pow}\nn\\
    &\num{i}{=}O\qty(\f{\log n}{\sqrt{nt_n}}),\nn
}
with probability $1 - O(n^{-1})$, where (i) uses the fact that $\eigen[i](\Pt) = \Theta(nt_n)$. Plugging in the value for $\tn$, we obtain
\begin{align}
    \ttinf{\Xt - \srnh\X\Qn\inv} = O\qty(\f{\log{n}}{\sqrt{n \sigma^2\rho_n}} + \sqrt{\f{\log^4 n}{n^2 \srn}}) = O\qty(\f{\log n}{\sqrt{n\sigma^2\rho_n}}).\label{eq:srn-convergence}
\end{align}

\textbf{Step~\Circled{2}}.\quad Note that $\Ac = \frac{1}{\s^2}\Avt$, and the spectral decomposition of $\Ac$ corresponding to the leading $d$ eigenvalues by magnitude becomes
$$
\Ac = \ch\U \ch{\L} \ch\U\tr = \Ut \qty( \frac{1}{\s^2} \Lt ) \Ut\tr.
$$
Therefore, $\Xc = \ch\U\abs\big{\ch\L}\hpow = \frac{1}{\s}\Xt$, and from \cref{eq:srn-convergence}, it follows that 
\begin{align}
    \ttinf{ \Xc - \rho_n\hpow\X\Qn\inv} = O\qty( \frac{\log{n}}{\sqrt{n\s^4\rho_n}} )\qq{and} \ttinf{ \rho_n\pow\Xc - \X\Qn\inv } = O\qty( \frac{\log{n}}{\sqrt{n\s^4\rho_n^2}} ),
    \label{eq:convergence-rate-1}
\end{align}
with probability $1 - O(n^{-1})$.

\textbf{Step~\Circled{3}}.\quad Let $\Zv = \M_\e(\Av)$ and consider the estimator $\ch{\rho}_n$, given by
\begin{align}
    \ch{\rho}_n = {n \choose 2}\inv\sum_{i < j} \Ac_{ij} = \frac{1}{\s^2}\qty{ {n \choose 2}\inv\sum_{i < j} \Zv_{ij} - \tau^2 }= \frac{1}{\s^2}(\hat\theta - \tau^2).
\end{align}
Since the entries $\M_{ij}$ are \iid{}, $\hat\theta \defeq {n \choose 2}\inv \sum_{i < j}\Mv_{ij}$ is a second-order $U$-statistic. The expected value of $\hat\theta$ is given by
\begin{align}
    \E(\hat\theta \mid \X) = \E( \Zv_{ij} \mid \X) = Y_i\tr\I[p+1, q]Y_j = \rho_n\s^2 (X_i\tr \I[p, q]X_j) + \tau^2,
\end{align}
and, by assumption that $\E[ X_i\tr\I[p,q]X_j ] = 1$, it follows that
\begin{align}
    \E(\hat\theta) = \E_\X\qty[\E(\hat\theta \mid \X) ] = \rho_n\s^2 \E[ X_i\tr\I[p,q]X_j ] + \tau^2 = \rho_n\s^2 + \tau^2.
\end{align}
Therefore, 
$$
\E\qty( \ch{\rho}_n \mid \X ) = \rho_n \cdot {n \choose 2}\inv\sum_{i < j}\P_{ij} \qq{and} \E\qty( \ch{\rho}_n ) = \rho_n.
$$
By the standard protocol for bounding  $U$-statistics by using Hoeffding's inequality (see, e.g., \cite[Lemma~5]{agterberg2020nonparametric}), it follows that with probability $1 - O(n^{-2})$,
\begin{align}
    \frac{1}{\sqrt{\ch{\rho}_n}} = \frac{1}{\sqrt{{\rho}_n}} \qty( 1 + O\qty(\sqrt{\frac{\log{n}}{n\rho_n}}) ). \label{eq:ch-rho-n}
\end{align}
Combining the results from \cref{eq:convergence-rate-1} and \cref{eq:ch-rho-n}, we get that when $n\rho_n = \omega(\log{n})$,
\begin{align}
    \ttinf{ \ch{\rho}_n\pow\Xc - \X\Qn\inv } = \ttinf{ {\rho}_n\pow\Xc - \X\Qn\inv } \qty\Big(1+o(1)) = O\qty( \frac{\log{n}}{\sqrt{n\s^4\rho_n^2}} ).\label{eq:convergence-rate-2}
\end{align}
Since $\Qn\inv = \Qx\inv\ox$, it follows that with probability $1 - O(n^{-1})$,
\begin{align}
    \dttinf\qty( \ch{\rho}_n\pow\Xc, \X ) \le \ttinf{ \ch{\rho}_n\pow\Xc - \X\Qn\inv } = O\qty( \frac{\log{n}}{\sqrt{n\s^4\rho_n^2}} ). \label{eq:convergence-rate-3}
\end{align}
\QED
\endgroup

\begingroup
\newcommand{\Dn}{\sfD_n}
\newcommand{\cDn}{\ch{\sfD}_n}
\subsection{Proof of \cref{thm:bottleneck-convergence}}
\label{proof:thm:bottleneck-convergence}

Let $\Xc, \ch{\rho}_n$ be the privacy-adjusted spectral embedding and the estimated sparsity parameter from \cref{alg:pase}, respectively, and let $\Qx, \qz, \ox, \oz$ be as defined in \cref{fact:Qx}, \cref{fact:Qz} and \cref{fact:Oz}, respectively, and let $\Qn = \ox\tr\Qx$ as in the proof of \cref{thm:convergence-rate} in \cref{proof:prop:grdpg-estimation}. 

Given $\sfD_n = \dgm(\X)$ and $\ch{\sfD}_n = \dgm(\Xc/\sqrt{\ch{\rho}_n})$, by the triangle inequality we have
\begin{align}
    \Winf( \cDn, \Dn ) 
    &\le \Winf\qty( \cDn, \dgm(\X\Qn\inv) )\\ 
    &\qquad+ \Winf\qty( \dgm(\X\Qn\inv), \dgm(\X\Qx\inv) )\\ 
    &\qquad+ \Winf\qty( \dgm(\X\Qx\inv), \dgm(\X\Qz\inv\oz) )\\ 
    &\qquad+ \Winf\qty( \dgm(\X\Qz\inv\oz), \Dn ).
\end{align}
By the stability of persistence diagrams (\cref{thm:stability}), for $\X, \Y \in \Rnd$,
$$
\Winf(\dgm(\X), \dgm(\Y)) \le d_H(\X, \Y) = \ttinf{\X - \Y},
$$
where $d_H(\X, \Y)$ denotes the Hausdorff distance between $\X$ and $\Y$. Therefore, we have the following bounds:
\begin{align}
\Winf( \cDn, \Dn ) 
    &\le \ttinf\big{\ch{\rho}_n\Xc - X\Qn\inv} \tag{$=:T_1$}\\ 
    &\qquad+ \Winf\qty( \dgm(\X\Qn\inv), \dgm(\X\Qx\inv) )\tag{$=:T_2$}\\ 
    &\qquad+ \ttinf\big{ \X\Qx\inv - \X\Qz\inv\oz }\tag{$=:T_3$}\\ 
    &\qquad+ \Winf\qty( \dgm(\X\Qz\inv\oz), \Dn )\tag{$=:T_4$}
\end{align}

The proof proceeds by bounding each term $T_1, T_2, T_3, T_4$ in the following steps. From \cref{thm:convergence-rate}, and in particular, from \cref{eq:convergence-rate-2}, it follows that with probability $1 - O(n^{-1})$,
\begin{align}
    T_1 = \ttinf{\ch{\rho}_n\Xc - \X\Qn\inv} = O\qty( \frac{\log{n}}{\sqrt{n\s^4\rho_n^2}} ).
\end{align}

For the second term, $\Qn\inv = \Qx\inv\ox$  where $\ox \in \o(d)$. Since persistence diagrams are invariant to orthogonal transformations \cref{eq:pd-invariance}, it follows that
\begin{align}
    T_2 = 0
\end{align}

Similarly, for the fourth term, since $\qz \in \o(p, q) \cap \o(d)$ (see \cref{fact:Qz}) and $\oz \in \o(d)$, it follows that $\qz\inv\oz \in \o(d)$. By invariance to $\o(d)$ transformations, it also follows that 
$$
T_4 = 0.
$$

Finally, for the third term, using \cite[Propositions~6.3~\&~6.5]{cape2019two},
\begin{align}
    T_3 = \ttinf\big{ \X\Qx\inv - \X\Qz\inv\oz } = \ttinf{\X} \opnorm\big{\Qx\inv - \X\Qz\inv\oz} = \op\qty( \frac{\log{n}}{\sqrt{n}} ),
\end{align}
where the final bound follows from \cref{lemma:subspace-alignment} and the fact that $\ttinf{\X} \le C$ for some constant $C$ when $\X$ is $(p,q)$-admissible \cite[Theorem~3]{agterberg2020nonparametric}.

Combining the bounds for $T_1, T_2, T_3, T_4$, 
\begin{align}
    \Winf( \cDn, \Dn ) = O\qty( \frac{\log{n}}{\sqrt{\s^2n\rho_n}} ),
\end{align}
and we obtain the desired result. \QED
\endgroup



\section{Topological Data Analysis}
\label{sec:tda-appendix}

We present the necessary background for Topological Data Analysis (TDA) here. We refer the reader to \cite{wasserman2018topological,chazal2017introduction} for an accessible overview and to \cite{edelsbrunner2010computational} for a comprehensive treatment.

\textbf{{Persistence Diagrams.}} Given a collection of points $\X \in \Rnd$ where $\pb{\xv_1, \dots, \xv_n} \subset \R^d$, persistent homology sheds important insights on the geometric and topological features underlying $\X$. The underlying philosophy is as follows. The shape of $\X$ at resolution $t > 0$ is encoded in the union of balls $B(\X, t) \defeq \cup_{\xv \in \X}B(\xv, t)$ centered at the sample points $\X$. Typically, this information is extracted using a geometric object $\k(\X, t)$, called the \textit{simplicial complex}.

The collection $\pb{B(\X, t) : {t \ge 0}}$, called a \textit{filtration}, is a sequence of nested topological spaces, i.e. for $t_1\!<\!t_2\!<\!\dots\!<\!t_m$, $B(\X, t_1) \subseteq B(\X, t_2) \subseteq \dots \subseteq B(\X, t_m)$. As $t$ varies, the evolution of topology is encoded in this filtration. Roughly speaking, new cycles (i.e. connected components, loops, holes, etc.) appear, or existing cycles disappear (get filled out). Specifically, each $k$-dimensional feature in $B(\X, t)$ can be represented as an element in a vector space, $H_k(\X, t)$, called its \textit{homology}. When a new $k$-dimensional feature appears in $B(\X, b)$ for some $b>0$, then a non-trivial $k$-cycle appears in $H_k(\X, b)$, and the dimension of this vector space increases by one. In this case, the feature is said to be \textit{born} at $b$. The same $k$-dimensional feature \textit{dies} at resolution $d > 0$, if this $k$-cycle is absent in $H_k(\X, d+\delta)$, for all $\delta > 0$. \textit{Persistent homology}, $\textbf{PH}_*(\X)$, is an \textit{algebraic module} which tracks these persistent pairs $(b, d)$ of births and deaths across the entire filtration for every $k$. The persistent pairs in $\textbf{PH}_*(\X)$ are summarized in a \textit{persistence diagram}
$$
\dgm\pa{\X} = \pb{(b, d)\in\R^2: 0 \le b < d \le \infty \qq{and} (b, d) \in \textbf{PH}_*(\X)}.
$$
See Figure~7 for an illustration. The space of persistence diagrams $\Omega = \pb{(x, y) \in \R^2: x < y}$, is endowed with a collection of Wasserstein metrics $\pb{W_p(\cdot, \cdot)}_{p \ge 1}$, and the special case of $W_\infty(\cdot, \cdot)$ is referred to as the \emph{bottleneck distance}. 

\begin{definition}
    For two persistence diagrams $\mathsf{D}_1$ and $\mathsf{D}_2$, the bottleneck distance is~given~by
    $$
    \w\pa{ \mathsf{D}_1, \mathsf{D}_2 } \defeq \inf\limits_{\phantom{\int^{1}_{1}} \phi \in \Phi \phantom{\int^{1}_{1}}}\!\!\!\!\sup\limits_{\xv \in \mathsf{D}_1 \cup \partial\Omega} \norminf{\xv - \phi(\xv)},
    $$
    where $\Phi$ is the collection of bijections between $\mathsf{D}_1$ and $\mathsf{D}_2$ including the boundary $\partial\Omega = \pb{(x, y)\in \R^2: x=y}$ with infinite multiplicity. 
\end{definition}

We assume throughout this work that the probability distribution $\pr$ underlying the random dot product graph has compact support. Compactness ensures that the resulting persistence diagrams are \textit{pointwise finite dimensional}, and satisfy a stability property w.r.t. the Hausdorff distance $d_H(\cdot, \cdot)$.

\begin{theorem}[{Stability of Persistence Diagrams, \citealp{chazal2016structure}}]\label{thm:stability} 
    For compact sets $\bbX, \bbY \subset (\mathbb{M}, \rho)$ from a metric space, 
$$
    \w\pa{ \dgm(\bbX), \dgm(\bbY) } \le d_H(\bbX, \bbY) \defeq \inf_{\xv \in \bbX}\sup_{\yv \in \bbY}\rho(\xv, \yv).
$$
and between two finite point clouds $\X \in \Rnd$ and $\Y \in \R^{m \times d}$,
$$
\w\pa{ \dgm(\X), \dgm(\Y) } \le d_H(\X, \Y) \defeq \max\qty{\min_{i \in [n]}\max_{j \in [m]}\norm{\xv_i - \yv_j}, \min_{j \in [m]}\max_{i \in [n]}\norm{\xv_i - \yv_j}}.
$$
\end{theorem}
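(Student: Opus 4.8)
This is the classical stability theorem for persistence diagrams of offset (union-of-balls) filtrations, and the plan is to reduce it to the algebraic stability theorem for q-tame persistence modules. Write $\delta \defeq d_H(\bbX, \bbY)$ and introduce the distance-to-set functions $d_{\bbX}(z) = \inf_{\xv \in \bbX}\rho(z, \xv)$ and $d_{\bbY}(z) = \inf_{\yv \in \bbY}\rho(z, \yv)$ on the ambient space $\mathbb{M}$. The first step is the elementary estimate $\norminf{d_{\bbX} - d_{\bbY}} \le \delta$: given $z$ and $\epsilon > 0$, pick $\yv \in \bbY$ with $\rho(z, \yv) \le d_{\bbY}(z) + \epsilon$ and, since $\bbY$ lies in the closed $\delta$-neighborhood of $\bbX$, pick $\xv \in \bbX$ with $\rho(\xv, \yv) \le \delta + \epsilon$; then $d_{\bbX}(z) \le \rho(z, \xv) \le d_{\bbY}(z) + \delta + 2\epsilon$, and the reverse inequality follows symmetrically. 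Consequently the sublevel filtrations $\bbX^t \defeq d_{\bbX}^{-1}([0,t])$ and $\bbY^t \defeq d_{\bbY}^{-1}([0,t])$, which coincide with the union-of-balls filtrations $B(\bbX, t)$ and $B(\bbY, t)$ from \cref{sec:tda-appendix}, satisfy $\bbX^t \subseteq \bbY^{t+\delta}$ and $\bbY^t \subseteq \bbX^{t+\delta}$ for every $t \ge 0$.

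The second step is to pass to singular homology with field coefficients. The inclusions above, together with the internal inclusions $\bbX^s \hookrightarrow \bbX^t$ for $s \le t$ (and likewise for $\bbY$), turn $\mathbb{U} \defeq H_*(\bbX^{\bullet})$ and $\mathbb{V} \defeq H_*(\bbY^{\bullet})$ into persistence modules over $\R$, and I claim they are $\delta$-interleaved: the two composite maps $\mathbb{U} \to \mathbb{V}[\delta] \to \mathbb{U}[2\delta]$ and $\mathbb{V} \to \mathbb{U}[\delta] \to \mathbb{V}[2\delta]$ equal the respective $2\delta$-shift structure maps, because every arrow involved is induced by an inclusion of spaces and the underlying diagrams of inclusions commute strictly. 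By the convention recalled in \cref{sec:tda-appendix}, $\dgm(\bbX)$ is the persistence diagram of $\mathbb{U}$ and $\dgm(\bbY)$ that of $\mathbb{V}$; compactness of $\bbX$ and $\bbY$ is exactly what makes these modules q-tame, i.e.\ all transition maps $\mathbb{U}(s) \to \mathbb{U}(t)$ for $s < t$ have finite rank, so their persistence diagrams exist as locally finite multisets in $\Omega$.

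The third step invokes the algebraic stability (isometry) theorem for q-tame persistence modules \cite{chazal2016structure}: a $\delta$-interleaving between q-tame modules forces a matching of their persistence diagrams with bottleneck cost at most $\delta$. Feeding in the $\delta$-interleaving from step two yields $\w(\dgm(\bbX), \dgm(\bbY)) \le \delta = d_H(\bbX, \bbY)$, which is the first assertion. The second assertion, for finite point clouds $\X \in \Rnd$ and $\Y \in \R^{m \times d}$, is a direct specialization: finite subsets of $\R^d$ are compact, the quantity written in the statement dominates the genuine Hausdorff distance between $\{\xv_i\}$ and $\{\yv_j\}$, and in the matched case $m = n$ with a common row indexing it reduces to $\ttinf{\X - \Y}$ — which is precisely how the theorem is applied in the proof of \cref{thm:bottleneck-convergence}.

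\textbf{Main obstacle.} The only deep ingredient is the third step; the first two are bookkeeping with distance functions and commuting squares of inclusions. In a self-contained treatment one would prove the algebraic stability theorem by the interpolation argument of \cite{chazal2016structure} — constructing a continuous one-parameter family of persistence modules joining $\mathbb{U}$ and $\mathbb{V}$ and using a box/triangle lemma to bound how far a diagram point can drift — or by a direct matching construction; here, consistent with the attribution in the statement, I would cite it. A secondary subtlety worth flagging is the well-definedness of $\dgm$ (q-tameness, and hence the structure theorem that assigns a diagram) for an arbitrary compact metric space rather than a tame subset of $\R^d$; this is covered by the structure theory in \cite{chazal2016structure}, and in the applications relevant to this paper $\bbX$ and $\bbY$ are finite, so it holds automatically.
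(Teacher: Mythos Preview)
Your proof sketch is correct and follows the standard route to the stability theorem: bound $\norminf{d_{\bbX} - d_{\bbY}}$ by the Hausdorff distance, deduce a $\delta$-interleaving of the sublevel-set persistence modules, then invoke the algebraic stability theorem for q-tame modules from \cite{chazal2016structure}. There is nothing to compare against, however, because the paper does not supply its own proof of this statement. \cref{thm:stability} is recorded in \cref{sec:tda-appendix} purely as a cited background result --- the attribution to \cite{chazal2016structure} is in the theorem header itself --- and it is then used as a black box in the proof of \cref{thm:bottleneck-convergence} via the inequality $\w(\dgm(\X),\dgm(\Y)) \le d_H(\X,\Y) \le \ttinf{\X - \Y}$. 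So your write-up is a faithful reconstruction of the argument behind the citation, but the paper simply quotes the theorem rather than proving it.
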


Although they encode subtle features underlying the data, an important property of persistence diagrams is that they are invariant to $\o(d)$ transformations, i.e. for all $\X \in \Rnd$ and $\O \in \o(d)$, 
\begin{align}
    \dgm(\X\O) = \dgm(\X).\label{eq:pd-invariance}
\end{align}
This makes persistence diagrams particularly useful for analyzing the latent structure underlying the spectral embedding of a graph. This line of investigation was first initiated by \cite{solanki2019persistent}. In contrast, however, persistence diagrams are not invariant to some other group actions, e.g. scale transformations. 

\begin{figure}
    \centering
    \includegraphics[width=1.0\textwidth]{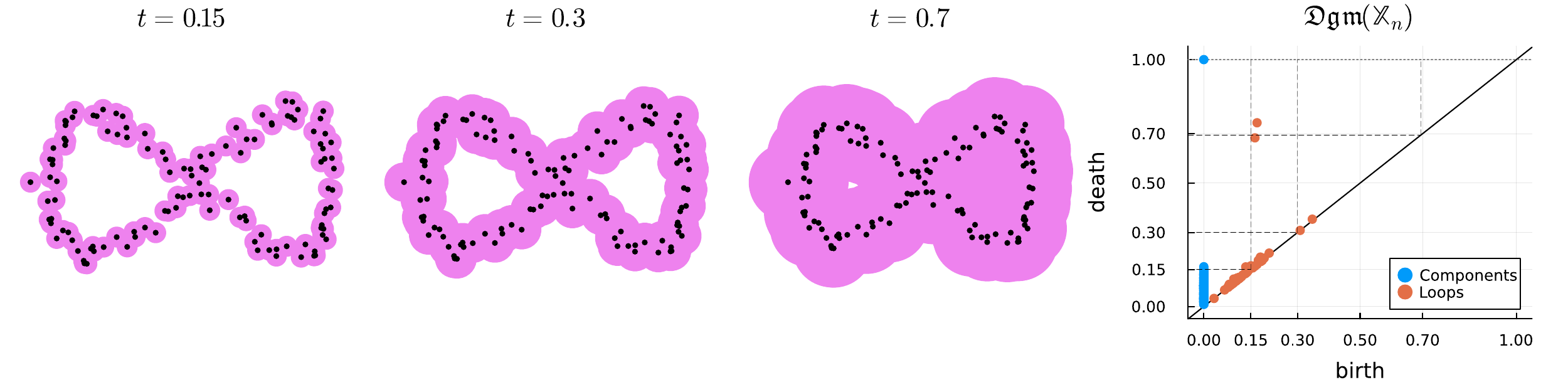}
    \caption{Illustration of a $\dgm(\X)$ computed on a point-cloud in $\X \in \Rnd$ using the \v{C}ech filtration.}
    \label{fig:cech}
\end{figure}

\subsection{Additional details on Topology-aware clustering}
\label{sec:clustering-appendix}

\citet{chazal2013persistence} proposed ToMATo clustering as an algorithm to overcome the drawbacks of $k-$means clustering by forming clusters based on topological persistence. However, their methodology \citep[Algorithm 1]{chazal2013persistence} requires the following tuning parameters: (1) A Kernel $k$ for density filtering, (2) a bandwidth $\sigma$, (3) the cutoff level $\tau$ of the density filtration, and (4) the resolution $\delta$ for the Rips complex. While their method is useful for robust clustering in the presence of outliers; we note that under differential privacy, no extraneous outliers are introduced. Therefore, we propose Algorithm~2 as a simple clustering algorithm inspired by ToMATo, which requires just one tuning parameter
$q$ --- a cutoff quantile.

Taking $q = 10$, we may filter the relevant points in the  persistence diagram $\dgm(\Xnhat)$. The filtered points in the
$0^{th}$ order persistence diagram are as shown in Figure~8\,(a). Similarly, the filtered points in the $1^{st}$ order persistence diagram are shown in Figure~8\,(b).  

Figure~9\,(a) shows the representative cycles in the $1^{st}$ order persistence diagram obtained using these filtered points from Figure~9\,(b), and coresponds to the two cycles present in the Two Circles latent positions in \cref{fig:clustering}. We can use filtered points in the persistence diagram to cluster the points using Algorithm~2, as shown in Figure~9\,(b). 

\begin{algorithm}
  \caption{Topology-aware Clustering}
  \label{alg:clustering}
  \begin{algorithmic}[1]
    \State \textbf{Input:} points $\Xn$ and a cutoff quantile $q$
    \State \textbf{Output:} Clusters $\C$
    \smallskip
    \State Compute the persistence diagram $\D = \dgm(\Xn)$
    \For {$i \in \abs{\D}$}
    \State Compute the total persistence $\delta_i = d_i - b_i$ for the birth/death pair $(b_i, d_i) \in \D$
    \State Compute the \textit{local outlier factors} $\mathcal{L} = \pb{L(\delta_i) : i \in \abs{\D}}$
    \smallskip
    \State Initialize clusters $\C \leftarrow \varnothing$
    \For {$i \in \abs{\D}$}
    {$L(\delta_i) > \text{median}(\mathcal L) + q \cdot \text{mad}(\mathcal L)$}
    \State Filter the simplex $\sigma$ with $(b_i, d_i) = \qty(\textup{birth}(\sigma), \textup{death}(\sigma))$
    \State Extract the vertices $V(\sigma) = \pb{ \xv \in \Xn : \xv \subset \sigma }$
    \State $\C \leftarrow \C \bigcup V(\sigma)$
    \smallskip
    \State \textbf{return} $\C$
  \end{algorithmic}
\end{algorithm}

\begin{figure}
  \centering
  \begin{subfigure}[b]{0.4\textwidth}
      \centering
      \includegraphics[width=\textwidth]{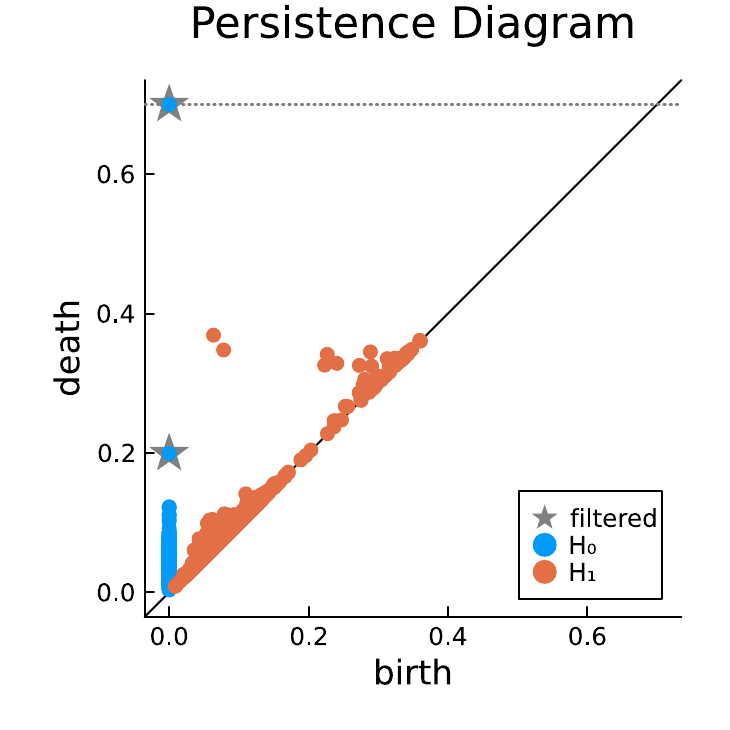}
      \caption{Filtering $H_0$ with $q=10$}
  \end{subfigure}
  \begin{subfigure}[b]{0.4\textwidth}
      \centering
      \includegraphics[width=\textwidth]{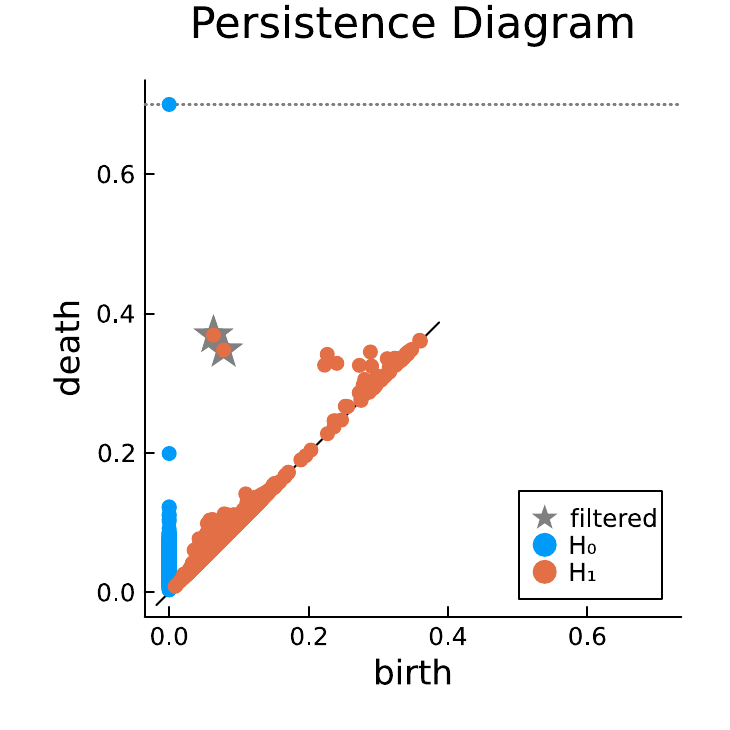}
      \caption{Filtering $H_1$ with $q=10$}
  \end{subfigure}
  \caption{Filtered points in the persistence diagrams for topology-aware clustering for the Two Circles data using Algorithm~2.}
  \label{fig:filtering}
\end{figure}

\begin{figure}
  \centering
  \begin{subfigure}[b]{0.40\textwidth}
    \centering
    \includegraphics[width=\textwidth]{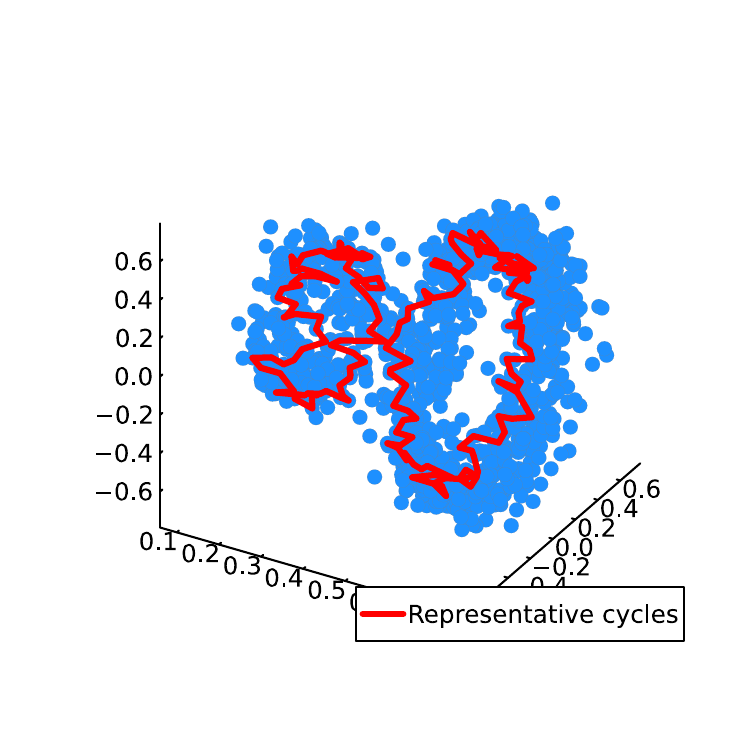}
    \caption{Representative $H_1$ cycles}
  \end{subfigure}
  \begin{subfigure}[b]{0.42\textwidth}
    \centering
    \includegraphics[width=\textwidth]{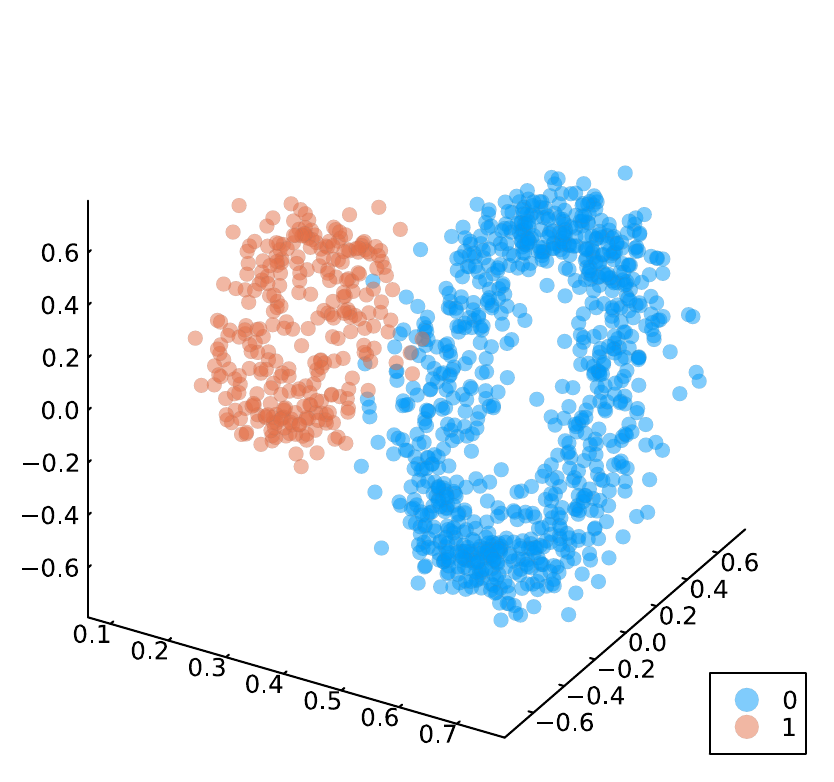}
    \caption{Topology-aware clustering using $H_0$ cycles}
  \end{subfigure}
  \caption{Representative cycles and topology-aware clustering for the Two Circles data.}
  \label{fig:representative}
\end{figure}



\end{document}